\documentclass[11pt]{article}

\usepackage[margin=1in]{geometry}
\usepackage{amsmath,amssymb,mathtools,amsthm}
\usepackage{lmodern}
\usepackage{bm}
\usepackage{booktabs}
\usepackage{microtype}
\usepackage{xcolor}
\usepackage{graphicx}
\usepackage{multirow}
\usepackage{enumitem}
\usepackage[round,authoryear]{natbib}
\usepackage[colorlinks=true,citecolor=blue,linkcolor=blue,urlcolor=blue]{hyperref}
\hypersetup{
  pdftitle={Beyond Negative-Ridge Endpoints: Mixed-Sign Spectral Regularization via Negative-Shifted Gradient Descent},
  pdfauthor={Peng Zhao},
  pdfsubject={Implicit regularization in overparameterized linear regression},
  pdfkeywords={negative-shifted gradient descent, mixed-sign spectral regularization, implicit regularization, early stopping, benign overfitting}
}


\theoremstyle{plain}
\newtheorem{theorem}{Theorem}
\newtheorem{proposition}[theorem]{Proposition}
\newtheorem{lemma}[theorem]{Lemma}
\newtheorem{corollary}[theorem]{Corollary}
\theoremstyle{definition}
\newtheorem{assumption}[theorem]{Assumption}
\theoremstyle{remark}
\newtheorem{remark}[theorem]{Remark}

\newenvironment{keywords}
{\par\smallskip\noindent\textbf{Keywords:}\ }
{\par\medskip}

\newcounter{algorithm}
\renewcommand{\thealgorithm}{\arabic{algorithm}}

\newcommand{\R}{\mathbb{R}}
\newcommand{\E}{\mathbb{E}}
\newcommand{\Risk}{\mathcal{R}}
\newcommand{\tr}{\operatorname{tr}}

\begin{document}

\title{Beyond Negative-Ridge Endpoints: Mixed-Sign Spectral Regularization
via Negative-Shifted Gradient Descent}

\author{Peng Zhao\\
\small Department of Applied Economics and Statistics\\
\small University of Delaware\\
\small \href{mailto:pzhao@udel.edu}{\texttt{pzhao@udel.edu}}}
\date{July 2026}

\maketitle

\begin{abstract}%
In overparameterized linear regression, many weak spectral directions act
like a ridge penalty on the signal-bearing spectrum; negative ridge is the
natural correction, pushing filters above one.  The stable negative-ridge
endpoint, however,
is structurally limited: its pole must stay below the smallest empirical
eigenvalue, and it anti-shrinks smaller eigenvalues more than larger ones.
Early-stopped negative-shifted gradient descent escapes this constraint.  Its
filter is smooth at the would-be pole and
mixed-sign-capable: above-ridgeless directions form a leading
prefix, with lower directions shrunk or exposure-controlled while stopping
sets the crossover.  In a Gaussian spike-plus-flat model we discover a
Marchenko--Pastur barrier: the shift that cancels the implicit penalty lies
a bulk width above the smallest empirical eigenvalue, and the stopped path
improves on every admissible endpoint by a polynomial factor in risk under
explicit conditions.  Our
main theorem permits a general high-effective-rank tail: its trace sets the
implicit floor, its squared spectrum controls exposure, and the
floor-critical path recovers all head scales at once---beyond positive
shrinkage and, once scales separate, every uniform rescaling of ridgeless.
Handling the noncontractive shifted dynamics is the central technical
challenge; localized Duhamel integrals control them.
A finite-grid hold-out inequality transfers the separations to the
validation-selected algorithm.
\end{abstract}

\begin{keywords}
negative-shifted gradient descent, mixed-sign spectral regularization,
implicit regularization, early stopping, benign overfitting
\end{keywords}

\section{Introduction}

Ridge regression and early-stopped gradient descent are two of the most common
regularization tools in high-dimensional linear regression.  Ridge gives an
explicit spectral shrinkage rule, while early stopping regularizes through the
time profile of the optimization path
\citep{hoerl1970ridge,Yao2007Early,Dobriban2018High-dimensional,Ali2019Continuous,Tsigler2023Benign}.
In overparameterized and anisotropic problems, however, the usual shrinkage
picture is incomplete.  A large collection of weak spectral directions can
stabilize interpolation, but it can also make the signal-bearing part of the
spectrum behave as if it were already ridge-penalized
\citep{Bartlett2020Benign,Hastie2022Surprises,Tsigler2023Benign}.

This implicit shrinkage can be strong enough that the best ridge correction is
negative.  In the head--tail view of benign-overfitting ridge theory, many weak
tail directions create an effective positive floor on the head
coordinates; canceling this floor can require a negative ridge parameter
\citep{Tsigler2023Benign}.  Negative ridge is therefore a natural response to
over-shrinkage: it pushes spectral filters above one on directions where the
ridgeless interpolator under-estimates the signal.

For a signed level \(\nu\ge0\), the negative-ridge objective is
\(\mathcal L_\nu(\beta)
:=\frac{1}{2n}\|y-X\beta\|_2^2-\frac{\nu}{2}\|\beta\|_2^2\).
When \(p>n\), \(\mathcal L_\nu\) is unbounded along the null space of \(X\).
Its stable endpoint is therefore understood on the empirical row space and
exists only for \(0\le\nu<\widehat\mu_{\min}^+\).  The finite-time path from
zero also remains in the row space, but it does not require convergence to
this endpoint and may use larger signed levels.

The correction we seek is not uniform amplification.  It should raise a
leading, signal-rich prefix of directions above the ridgeless level while
keeping the lower spectrum shrunk or exposure-controlled---a displacement from
ridgeless that changes sign once, rather than in an arbitrary mode-by-mode
pattern.  The stable endpoint supplies the sign but not this shape: its filter
\(A_\nu(\mu)=\mu/(\mu-\nu)\) has above-one displacement
\(A_\nu(\mu)-1=\nu/(\mu-\nu)\), which \emph{decreases} with \(\mu\), so within
its admissible range the rational shape couples any head correction to larger
anti-shrinkage---and hence variance---on the lower stable spectrum.

We therefore propose \emph{negative-shifted gradient descent} (NS-GD, the
discrete algorithm) and its continuous-time flow limit (NS-GF) as a
two-parameter spectral regularization path.
For a negative ridge parameter \(\lambda_{\rm reg}=-\nu\), with \(\nu\ge0\),
the negative-shifted gradient flow from zero initialization satisfies
\[
\dot\beta_t
=
n^{-1}X^\top(y-X\beta_t)+\nu\beta_t
=
b-(\widehat\Sigma-\nu I)\beta_t,
\qquad
\beta_0=0,
\]
where \(b=X^\top y/n\) and \(\widehat\Sigma=X^\top X/n\).  The resulting
spectral filter is \(f_{\nu,t}\), defined formally in
\eqref{eq:signed-gf-filter}, with removable
value \(f_{\nu,t}(\nu)=\nu t\); the same location that is a pole for the
endpoint becomes a removable singularity for the finite-time path.

This removable-pole filter is what lets NS-GD realize the shape the endpoint
cannot.  By Corollaries~\ref{cor:signed-gf-superlevel-prefix}
and~\ref{cor:signed-gd-superlevel-prefix}, its displacement
from ridgeless changes sign at a single spectral threshold, so a leading head
prefix is anti-shrunk while the lower spectrum stays shrunk or
exposure-controlled, with the crossover set by the stopping time.  This is the
sense in which NS-GD is \emph{mixed-sign-capable}, the property that separates it
from the comparators: positive ridge and early stopping never cross the ridgeless
level, and the stable endpoint crosses it only with the tail-heavy pole shape
above, whereas the finite-time path crosses on a controlled prefix and may use
shifts outside the stable range, provided it stops before the lower-spectrum
transient becomes substantial.
Full-batch discrete gradient descent has the analogous finite geometric-sum
filter.
Algorithm~\ref{alg:negative-shifted-gd} gives the validation-selected
finite-horizon implementation used in the experiments.  The step size is a
stability and discretization parameter; the signed level and finite horizon are
the regularization coordinates.

\begin{figure}[t]
\refstepcounter{algorithm}%
\label{alg:negative-shifted-gd}%
\centering
\fbox{%
\begin{minipage}{0.93\linewidth}
\textbf{Algorithm \thealgorithm: Validation-selected negative-shifted GD
(NS-GD)}

\smallskip
\textbf{Input:} training data \((X_{\rm tr},y_{\rm tr})\), validation data
\((X_{\rm val},y_{\rm val})\), a finite signed-level grid
\(\mathcal V\subset[0,\infty)\), maximum iteration \(M\), validation
times \(\mathcal T\subseteq\{0,\ldots,M\}\), and training-measurable step
sizes \(\{\eta_\nu\}\) obeying the small-step condition
\(\eta_\nu(\widehat\mu_1^{\rm tr}+\nu)\le c_\eta<1\), where
\(\widehat\mu_1^{\rm tr}
=\lambda_{\max}(X_{\rm tr}X_{\rm tr}^{\top}/n_{\rm tr})\).

\begin{enumerate}[leftmargin=1.6em,itemsep=2pt,topsep=3pt]
\item For every \(\nu\in\mathcal V\): initialize \(\beta_{\nu,0}=0\) and, for
\(m=0,\ldots,M-1\), run the full-batch update
\[
\beta_{\nu,m+1}
=
\beta_{\nu,m}
+\eta_\nu\left\{
\frac{X_{\rm tr}^{\top}
(y_{\rm tr}-X_{\rm tr}\beta_{\nu,m})}{n_{\rm tr}}
+\nu\beta_{\nu,m}
\right\}.
\]
\item Record
\(L(\nu,m)=\|y_{\rm val}-X_{\rm val}\beta_{\nu,m}\|_2^2/n_{\rm val}\)
at each \(m\in\mathcal T\), select
\((\widehat\nu,\widehat m)
\in\arg\min_{\mathcal V\times\mathcal T}L(\nu,m)\),
and return
\(\widehat\beta_{\rm NS\text{-}GD}=\beta_{\widehat\nu,\widehat m}\).
\end{enumerate}

\textit{Finite-exposure distinction.}
\(\mathcal V\) may include \(\nu\ge\widehat\mu_{\min}^+\): the algorithm
evaluates finite iterates and never forms the unstable inverse
\((\widehat\Sigma-\nu I)^{-1}\); growth below \(\nu\) is controlled by the
finite horizon (for a target continuous time \(t\), take \(\eta_\nu=t/m\)
with \(m\) large).
\end{minipage}}
\end{figure}

The finite-grid choice in Algorithm~\ref{alg:negative-shifted-gd} is not an
oracle choice of \((\nu,m)\): Theorem~\ref{thm:finite-grid-validation-oracle}
gives a finite-sample hold-out inequality under the same independent Gaussian
design model used by the random-design theory.  The guarantee is conditional
on a fixed training-measurable step-size rule and a predeclared finite grid.

This endpoint--path asymmetry differs from the familiar positive-side
advantage of early stopping.  With nonnegative regularization, ridge and
early-stopped GD are both contraction filters; their differences arise from
residual decay and qualification or saturation within the shrinkage regime
\citep{Engl1996Regularization,Yao2007Early}, and recent instance-wise theory
shows GD within a constant factor of comparably regularized ridge while tuned
ridge can be polynomially worse \citep{wu2025risk}.  The negative side has a
different obstruction: random matrix geometry pins the endpoint's pole against
the lower empirical edge.

The random-design comparison below quantifies this restriction.  In a
common-spike-plus-flat model with tail floor \(a=\gamma_T\lambda_T\) and
common head scale \(\lambda_h\), the lower empirical edge lies a bulk-width
distance below the floor, so an admissible endpoint must retreat from \(a\) by
order \(\sqrt{a\lambda_T}\).  The stopped path instead uses the
strictly supercritical level and finite time
$
\nu_\star=a+\lambda_h,
t_\star=\lambda_h^{-1},
$
placing its removable pole at the floor-shifted common head location: this
exactly cancels the implicit head shrinkage in the ideal floor model while
exposing the tail for only a constant rescaled time, and the discrete choice
\(\eta_\star=(m\lambda_h)^{-1}\) gives the same recovery after any \(m\ge1\)
steps (Theorem~\ref{thm:journal-spike-flat-separation}).

The common-spike calculation has a degenerate feature: because every head
coordinate has the same effective attenuation, a uniformly rescaled ridgeless
estimator can reproduce its head correction.
Section~\ref{sec:multispike-floor-critical} therefore introduces distinct
spike levels and a heterogeneous high-effective-rank tail.  The tail trace
\(T_k\) sets the floor \(a=T_k/n\), while the squared spectrum \(S_k\)
controls finite exposure and endpoint variance.  Centering the signed
dynamics at \(\nu_{\rm fc}=a\) makes the ideal head map
\(I-e^{-tG_H}\), which recovers several resolved scales simultaneously.
A weighted shape defect quantifies why no scalar rescaling can do the same.
Theorem~\ref{thm:journal-multispike-floor-critical} gives the resulting
TB-style Gaussian bound at the theoretically chosen floor-critical level
\(\nu_{\rm fc}=a=T_k/n\); flat and power-law tails are explicit corollaries.

Figure~\ref{fig:motivation-supercritical} illustrates this endpoint--path
asymmetry at a fixed negative shift: the endpoint's displacement grows toward
its pole down the spectrum, while the finite-time filter is smooth at the same
location and crosses ridgeless on a leading prefix.  The analysis below
controls the finite-time path through variance, leakage, and transfer
budgets.

\begin{figure}[t]
	\centering
	\includegraphics[width=0.8\linewidth]{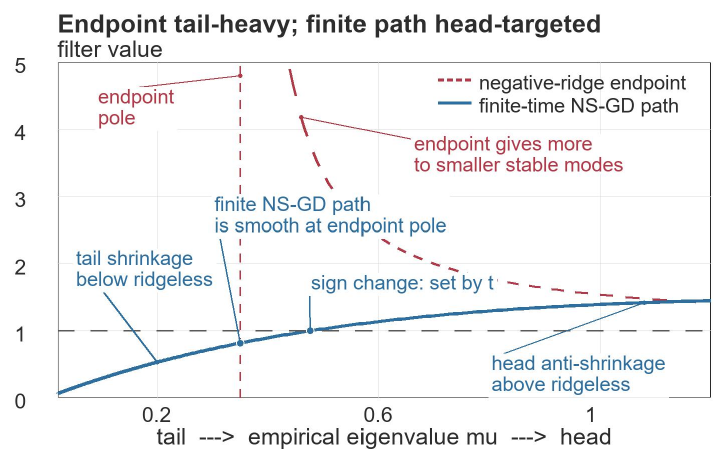}
	\caption{Endpoint--path asymmetry for mixed-sign spectral regularization.
		A stable negative-ridge endpoint is tail-heavy---limited by its pole and
		amplifying smaller stable eigenvalues most---whereas the finite-time
		NS-GD filter is smooth at the same spectral location and crosses the
		ridgeless level on a possibly full leading spectral prefix.  When the
		crossover lies in the observed spectrum, lower modes remain shrunk or
		exposure-controlled; the stopping time sets that crossover.}
	\label{fig:motivation-supercritical}
\end{figure}

\noindent\textbf{Positioning.}
Three lines of work are directly relevant.  First, early stopping and
iterative regularization have long been understood as spectral regularization
methods, closely related to Tikhonov and ridge-type procedures
\citep{Engl1996Regularization,LoGerfo2008Spectral,Yao2007Early,Bauer2007Regularization,caponnetto2007optimal,Raskutti2011Early,hu2022early}.
Recent work further characterizes discrete full-batch GD as a generalized
ridge procedure and compares gradient-flow, gradient-descent, and conjugate
gradient regularization paths with ridge
\citep{SonthaliaLokRebrova2024EarlyStopping,HuckerReissStark2026Paths};
related equivalence results connect explicit ridge tuning with subsampling and
ensemble regularization \citep{PatilDu2023Generalized}, while complementary
work studies data-driven risk estimation and tuning for early-stopped GD
\citep{PatilWuTibshirani2024GD}.
These works clarify how stopping time trades bias and variance within the
positive shrinkage regime.  Our setting differs because the path is
negative-shifted: finite time creates a mixed-sign filter class when the
negative endpoint itself has a pole.

Second, the paper connects to implicit regularization in least squares and
high-dimensional regression.  Gradient-flow and stochastic-gradient analyses
compare optimization paths with explicit \(\ell_2\) regularization
\citep{Ali2019Continuous,ali2020implicit,zou2021benefits}; the instance-wise
GD-versus-ridge comparison of \citet{wu2025risk} likewise stays within the
contraction family.  Our focus is the signed side, where the stable endpoint
meets the MP pole barrier and the finite-time path becomes a mixed-sign
implicit regularizer.
Random-matrix approaches have also produced deterministic high-dimensional
risk trajectories for stochastic-gradient dynamics in linear models
\citep{PaquetteLeePedregosaPaquette2021SGD,
PaquettePaquetteAdlamPennington2022Homogenization,
AtanasovBordelonZavatoneVethPaquettePehlevan2025TwoPoint}.  Those results
analyze stochastic optimization trajectories; our comparison concerns the
admissibility wall for a negative-ridge endpoint and the finite-time
full-batch path at the same signed shift.

Third, our results build on benign overfitting and overparameterized linear
regression.  This literature characterizes how interpolation, ridge
regularization, and spectral effective ranks determine prediction risk
\citep{Bartlett2020Benign,Hastie2022Surprises,Dobriban2018High-dimensional,Tsigler2023Benign,Nakkiran2021Optimal,Liang2020Just,Cheng2024Dimension}.
Source-condition analyses further clarify how the signal's alignment with the
covariance spectrum controls ridge and ridgeless behavior
\citep{RichardsMourtadaRosasco2021RidgelessSource}.
Negative ridge has appeared as a response to implicit over-regularization:
\citet{Dobriban2018High-dimensional,KobakLomondSanchez2020Optimal} explain
negative optimal ridge through high-dimensional asymptotics and low-variance
covariates, \citet{Tsigler2023Benign} develop ridge theory in regimes where
the optimal regularization can be negative, and
\citet{WuXu2020OptimalWeighted} characterize signs of optimal weighted
\(\ell_2\) regularization in proportional limits.  Those works characterize
\emph{when} the population-optimal ridge is negative and its sign.  We instead
isolate the endpoint--path asymmetry: the endpoint can take the useful sign,
but not the floor-canceling \emph{magnitude}, which lies beyond the sample
Marchenko--Pastur pole and is reachable only along the finite-time path.

\noindent\textbf{Contributions.}
Our contributions are as follows.
\begin{enumerate}
	\item \textbf{Endpoint--path asymmetry and the Marchenko--Pastur pole
	barrier.}
	Positive filters are contractions (\(f\le1\)), so their only freedom is
	saturation or residual profile.  The negative-ridge endpoint carries a pole
	of the \emph{useful} sign but is admissible only below the smallest
	empirical eigenvalue; in the flat-tail benchmark the tail floor sits a bulk
	width \(a-\widehat\mu_{\min}^+\asymp\sqrt{a\lambda_T}\) above that edge, so
	the endpoint can take the useful sign but not the floor-canceling
	\emph{magnitude}, which finite-time NS-GD reaches through its removable
	pole.

	\item \textbf{Constrained-prefix anti-shrinkage.}
	In the exact design-conditional risk geometry of
	Section~\ref{sec:exact-conditional-target}, anti-shrinkage in a direction
	helps exactly when its missing-signal coupling exceeds the variance cost
	(\(m_i>v_i\)), so the oracle asks for a selective correction.  NS-GD
	realizes a leading-prefix-constrained form of this rule: by
	Corollaries~\ref{cor:signed-gf-superlevel-prefix}
	and~\ref{cor:signed-gd-superlevel-prefix} the
	displacement from ridgeless changes sign at a single spectral
	threshold---in strong-signal realizations the anti-shrunk prefix can be the
	whole observed spectrum---a data-adaptive prefix geometry unavailable to
	positive shrinkage and to the pole-limited endpoint.

	\item \textbf{Risk separations for common and heterogeneous heads, via a
	localized noncontractive proof.}
	In the common-spike model the explicit continuous and discrete paths
	recover the spike with risk \(O(\lambda_T/\lambda_h)\), while admissible
	negative ridge pays \(\Omega(\sqrt{\lambda_T/\lambda_h})\) and nonnegative
	shrinkage retains a constant head-bias floor.  The main random-design
	theorem replaces the common spike by the trace floor \(a=T_k/n\) and
	recovers distinct head scales simultaneously: the square mass \(S_k\)
	controls tail exposure, leakage, and the negative-endpoint lower envelope,
	and a weighted shape defect lower-bounds \emph{every} uniform rescaling of
	ridgeless regression; flat and truncated power-law tails follow as
	corollaries.  The proofs turn on a localized step: the shifted generator
	can be indefinite, so we align the realized clusters with an orthogonal
	invariant graph and apply Duhamel only inside the positive head block,
	whereby head--tail coupling must leave the head and return and enters
	quadratically---letting the Tsigler--Bartlett trace/effective-rank
	architecture accommodate the noncontractive signed path.

	\item \textbf{Deployable validation oracle and experiments.}
	For the finite grid in Algorithm~\ref{alg:negative-shifted-gd}, an
	independent hold-out sample selects a candidate whose realized prediction
	risk is within an explicit multiplicative factor of the best grid
	candidate, so every separation above transfers to the deployed estimator
	under an explicit validation-size condition.  Paired simulations diagnose
	the predicted endpoint wall and the adaptive finite-time spectral shape.
\end{enumerate}

\noindent\textbf{Roadmap.}
Section~\ref{sec:exact-conditional-target} presents the exact
design-conditional oracle, Section~\ref{sec:gaussian-spike-models} the two
Gaussian head--tail regimes and their proofs, and
Theorem~\ref{thm:finite-grid-validation-oracle} the transfer to the
validation-selected estimator; experiments and discussion follow.

\section{Row-space filters, risk, and finite-time geometry}
\label{sec:model}

Let \(X\in\R^{n\times p}\) be fixed for the spectral calculations, with
\(\widehat\Sigma=X^\top X/n\) and nonzero eigenpairs
\((\widehat\mu_i,\widehat v_i)_{i=1}^r\). The response is
\(y=X\beta^\star+\varepsilon\), with
\(\E(\varepsilon\mid X)=0\) and
\(\operatorname{Var}(\varepsilon\mid X)=\sigma_\varepsilon^2 I_n\).  Write
\(b=X^\top y/n\) and \(\theta_i^\star=\widehat v_i^\top\beta^\star\). For any
spectral filter \(f=(f_i)_{i=1}^r\), define
\(\widehat v_i^\top\widehat\beta_f=(f_i/\widehat\mu_i)\widehat v_i^\top b\).
The conditional empirical-spectrum prediction risk is
\begin{equation}
\label{eq:risk-decomposition}
\Risk(f)
=
\sum_{i=1}^r \widehat\mu_i(1-f_i)^2(\theta_i^\star)^2
+
\frac{\sigma_\varepsilon^2}{n}\sum_{i=1}^r f_i^2 .
\end{equation}
The first term is squared prediction bias and the second term is conditional
variance. The filters below are row-space filters; null-space components remain
zero from zero initialization. Let \(\widehat V=(\widehat v_1,\ldots,\widehat v_r)\).
For generalization, the relevant quantity is population prediction risk.  If
\((x_{\rm new},y_{\rm new})\) is an independent test point with
\(\mathbb E x_{\rm new}x_{\rm new}^\top=\Sigma\), then
\(\mathbb E_\varepsilon\|\widehat\beta_f-\beta^\star\|_\Sigma^2\) is the
conditional excess test error over the irreducible noise.
\begin{proposition}[Conditional population risk]
\label{prop:conditional-pop-risk}
Conditional on \(X\), every empirical spectral filter \(f\in\R^r\) satisfies
\[
\Risk_{\rm pop}(f\mid X)
=
\left\|
\left(
\widehat V\operatorname{diag}(f)\widehat V^\top-I
\right)\beta^\star
\right\|_\Sigma^2
+
\frac{\sigma_\varepsilon^2}{n}
\sum_{i=1}^r
f_i^2
\frac{\widehat v_i^\top\Sigma\widehat v_i}{\widehat\mu_i},
\]
where \(\|z\|_\Sigma^2=z^\top\Sigma z\).
This is the finite-sample, design-conditional bias--variance identity
underlying the ridgeless prediction-risk calculations of
\citet{Hastie2022Surprises} and the weighted-\(\ell_2\) risk analysis of
\citet{WuXu2020OptimalWeighted}.
\end{proposition}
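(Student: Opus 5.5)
We write the estimator in closed form and then split the risk into a deterministic bias part and a mean-zero noise part. By definition, \(\widehat\beta_f=\sum_{i=1}^r (f_i/\widehat\mu_i)\widehat v_i\widehat v_i^\top b\), with \(b=X^\top y/n=\widehat\Sigma\beta^\star+X^\top\varepsilon/n\). Because \(\widehat v_i^\top\widehat\Sigma=\widehat\mu_i\widehat v_i^\top\), the signal part equals
\[
\sum_i f_i\widehat v_i\widehat v_i^\top\beta^\star=\widehat V\operatorname{diag}(f)\widehat V^\top\beta^\star .
\]
The noise part is \(\xi:=\sum_i (f_i/\widehat\mu_i)\widehat v_i\widehat v_i^\top X^\top\varepsilon/n\). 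Hence
\[
\widehat\beta_f-\beta^\star=\bigl(\widehat V\operatorname{diag}(f)\widehat V^\top-I\bigr)\beta^\star+\xi .
\]

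Conditional on \(X\), the first term is deterministic and \(\E(\xi\mid X)=0\). Expanding \(\E_\varepsilon\|\widehat\beta_f-\beta^\star\|_\Sigma^2\), the cross term therefore vanishes. This leaves the stated bias term plus \(\E\,\xi^\top\Sigma\xi=\tr(\Sigma\,\E\xi\xi^\top)\).

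The remaining step is the covariance of \(\xi\). Write \(\xi=Mv\) with \(M=\sum_i (f_i/\widehat\mu_i)\widehat v_i\widehat v_i^\top\) and \(v=X^\top\varepsilon/n\). Since \(\operatorname{Var}(\varepsilon\mid X)=\sigma_\varepsilon^2I_n\), we have
\[
\operatorname{Var}(v\mid X)=\sigma_\varepsilon^2X^\top X/n^2=(\sigma_\varepsilon^2/n)\widehat\Sigma .
\]
Therefore \(\E\xi\xi^\top=(\sigma_\varepsilon^2/n)M\widehat\Sigma M\). Using the eigen-relation \(\widehat\Sigma\widehat v_j=\widehat\mu_j\widehat v_j\) and orthonormality of the \(\widehat v_i\), this collapses to
\[
M\widehat\Sigma M=\sum_i (f_i^2/\widehat\mu_i)\widehat v_i\widehat v_i^\top .
\]
Taking the trace against \(\Sigma\) gives
\[
\frac{\sigma_\varepsilon^2}{n}\sum_i \frac{f_i^2}{\widehat\mu_i}\,\widehat v_i^\top\Sigma\widehat v_i ,
\]
which is the stated variance term.

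No step is a real obstacle; the computation is routine. The only point needing care is the diagonalization \(M\widehat\Sigma M\). There we must use that \(\widehat\Sigma\) acts on the row space through the \(\widehat v_i\) and annihilates the null space, so only the \(r\) nonzero eigenpairs contribute and no pseudo-inverse subtleties arise. As a consistency check, replacing \(\Sigma\) by \(\widehat\Sigma\) should recover \eqref{eq:risk-decomposition}.
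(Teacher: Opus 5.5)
Your proposal is correct and follows essentially the same route as the paper. Both arguments split \(\widehat\beta_f-\beta^\star\) into the deterministic bias \((\widehat V\operatorname{diag}(f)\widehat V^\top-I)\beta^\star\) plus a conditionally mean-zero noise term, drop the cross term, and diagonalize the noise covariance in the empirical eigenbasis. You write that covariance as \((\sigma_\varepsilon^2/n)M\widehat\Sigma M\), while the paper computes the scalar coefficient covariances \(\frac{\sigma_\varepsilon^2}{n}\mathbf 1\{i=j\}/\widehat\mu_i\); this is the same calculation in matrix form.
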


Negative \(\ell_2\) regularization is interpreted row-space spectrally. In the
overparameterized case, the primal objective with a negative penalty is
unbounded along the null space of \(X\). Starting from zero, however, gradient
flow and discrete gradient descent remain in the row space. All signed ridge
and signed-ridge path estimators in this paper are therefore defined on the
empirical nonzero spectrum, equivalently through the dual or row-space
representation.

\subsection{Mixed-sign filter geometry: endpoint poles versus finite-time smoothness}
\label{sec:mechanism}

The endpoint--path comparison uses three elementary filter facts.  Positive
paths are shrinkage filters; stable signed endpoints have rational poles;
finite-time negative-shifted gradient flow is smooth at the pole and crosses
above \(f=1\) only on a leading spectral prefix, giving head anti-shrinkage with
the lower modes shrunk or exposure-controlled.  Same-shift endpoint/path
comparisons are useful diagnostics and are deferred to
Appendix~\ref{app:fixed-design-proofs}.  The main comparison below is
class-level.
We use the classical spectral-filter language of inverse problems and
supervised learning \citep{Engl1996Regularization,LoGerfo2008Spectral}, but
allow a signed shift whose finite-time filter is no longer a contraction.

\begin{proposition}[Filter geometry of negative-shifted gradient flow]
\label{prop:filter-geometry-signed-finite-time}
A stable scalar signed-ridge endpoint and negative-shifted gradient flow
(NS-GF) have the filters
\begin{align}
A_\nu(\mu)
&=
\frac{\mu}{\mu-\nu}
\qquad(\mu>\nu),
\label{eq:signed-endpoint-filter}\\
f_{\nu,t}(\mu)
&=
\frac{\mu}{\mu-\nu}
\left(1-e^{-(\mu-\nu)t}\right)
=
\mu\int_0^t e^{-(\mu-\nu)s}\,ds.
\label{eq:signed-gf-filter}
\end{align}
The endpoint has a pole at \(\mu=\nu\); by the integral form, the path's
singularity there is removable, \(f_{\nu,t}(\nu)=\nu t\), with
\(|f_{\nu,t}(\mu)|\le\mu t e^{\nu t}\) for finite \(t\).
Detailed scalar comparisons are in Appendix~\ref{app:fixed-design-proofs}.
\end{proposition}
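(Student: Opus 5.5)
The plan is to work one eigendirection at a time. Since \(\widehat\Sigma\) is symmetric and both the signed-ridge endpoint and the NS-GF trajectory from \(\beta_0=0\) stay in the row space, project onto the eigenbasis \((\widehat\mu_i,\widehat v_i)\). Writing \(c_i=\widehat v_i^\top b\) and \(x_i(t)=\widehat v_i^\top\beta_t\), the flow \(\dot\beta_t=b-(\widehat\Sigma-\nu I)\beta_t\) decouples into scalar linear ODEs
\[
\dot x_i=c_i-(\widehat\mu_i-\nu)x_i,\qquad x_i(0)=0.
\]
Null-space coordinates have \(c_i=0\) and \(x_i(0)=0\), so they remain zero; this justifies the row-space interpretation.

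\textbf{Filters.} For the endpoint, set \(\nabla\mathcal L_\nu=0\) on the row space. This gives \((\widehat\mu_i-\nu)x_i=c_i\), so \(x_i=c_i/(\widehat\mu_i-\nu)\). Matching the convention \(x_i=(f_i/\widehat\mu_i)c_i\) yields \(A_\nu(\mu)=\mu/(\mu-\nu)\), which is well defined and stable exactly when \(\mu>\nu\). For the path, apply variation of constants:
\[
x_i(t)=c_i\int_0^t e^{-(\widehat\mu_i-\nu)(t-s)}\,ds=c_i\int_0^t e^{-(\widehat\mu_i-\nu)s}\,ds.
\]
Multiplying by \(\widehat\mu_i/c_i\) gives the integral form of \(f_{\nu,t}\) in \eqref{eq:signed-gf-filter}. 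When \(\mu\ne\nu\), evaluating the elementary integral gives the closed form \(\frac{\mu}{\mu-\nu}(1-e^{-(\mu-\nu)t})\).

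\textbf{Removability and the bound.} The integral \(\mu\int_0^t e^{-(\mu-\nu)s}ds\) has an integrand that is entire in \(\mu\) and is integrated over a compact interval, so it defines an entire (in particular continuous) function of \(\mu\). It therefore agrees with the closed form away from \(\mu=\nu\), and the singularity of the closed form at \(\mu=\nu\) is removable. At \(\mu=\nu\) the integrand is identically \(1\), so \(f_{\nu,t}(\nu)=\nu t\). For the bound, take \(\mu\ge0\) (empirical eigenvalues are nonnegative) and \(\nu\ge0\). Then \(e^{-(\mu-\nu)s}\le e^{\nu s}\le e^{\nu t}\) for \(s\in[0,t]\), hence
\[
|f_{\nu,t}(\mu)|\le \mu\int_0^t e^{\nu t}\,ds=\mu t e^{\nu t}.
\]

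\textbf{Expected difficulty.} None of these steps is a real obstacle. The only point needing care is the bound: it requires \(\mu\ge0\), which holds because \(\mu\) ranges over the spectrum of \(\widehat\Sigma\), so \(-\mu s\le0\) can be dropped. The removability argument should also be stated through the integral representation rather than via L'Hôpital's rule, since this gives continuity in \((\mu,\nu)\) directly.
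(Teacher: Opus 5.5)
Your proposal is correct and follows the paper's argument: the paper also reads removability and the value \(f_{\nu,t}(\nu)=\nu t\) off the integral form. It obtains \(|f_{\nu,t}(\mu)|\le\mu te^{\nu t}\) from \(e^{-(\mu-\nu)s}\le e^{\nu s}\le e^{\nu t}\) for \(\mu\ge0\), \(0\le s\le t\), in the proof of Proposition~\ref{prop:auxiliary-filter-envelopes}. Your eigenbasis decoupling and variation-of-constants step just spell out the filter derivation that the paper treats as standard.
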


\begin{corollary}[Above-one filters form a spectral prefix]
\label{cor:signed-gf-superlevel-prefix}
For \(\nu>0\), define
\[
g_\nu(\mu)
=
\begin{cases}
\log(\mu/\nu)/(\mu-\nu), & \mu\ne\nu,\\[2pt]
1/\nu, & \mu=\nu ,
\end{cases}
\qquad\text{so that}\qquad
f_{\nu,t}(\mu)>1
\;\Longleftrightarrow\;
t>g_\nu(\mu),
\]
with \(g_\nu\) strictly decreasing on \((0,\infty)\).
Consequently, after sorting the nonzero empirical eigenvalues decreasingly,
the set of empirical directions with \(f_{\nu,t}(\widehat\mu_i)>1\) is a
spectral prefix.  The proof is in Appendix~\ref{app:fixed-design-proofs}.
\end{corollary}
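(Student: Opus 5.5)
The plan is to reduce the inequality \(f_{\nu,t}(\mu)>1\) to a comparison of \(t\) with an explicit threshold, treating \(\mu>\nu\), \(\mu<\nu\) and \(\mu=\nu\) separately. Monotonicity of \(g_\nu\) is then a one-variable calculus fact, and the prefix claim follows at once.

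\textbf{Equivalence.} Fix \(\nu>0\) and \(\mu>0\), and write \(d=\mu-\nu\).

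\emph{Case \(d>0\).} Here \(f_{\nu,t}(\mu)>1\) iff \(\mu(1-e^{-dt})>d\), i.e.\ iff \(e^{-dt}<\nu/\mu\). Taking logarithms gives \(t>\log(\mu/\nu)/d=g_\nu(\mu)\).

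\emph{Case \(d<0\).} Dividing by the negative factor \(\mu/d\) flips the inequality, giving \(1-e^{-dt}<d/\mu\). This is the same as \(e^{|d|t}>1+|d|/\mu=\nu/\mu\), i.e.\ \(t>\log(\nu/\mu)/|d|=g_\nu(\mu)\).

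\emph{Case \(d=0\).} Proposition~\ref{prop:filter-geometry-signed-finite-time} gives the removable value \(f_{\nu,t}(\nu)=\nu t\), so \(f_{\nu,t}(\nu)>1\) iff \(t>1/\nu=g_\nu(\nu)\).

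A single formula also covers all three cases: \(f_{\nu,t}(\mu)=\mu\int_0^t e^{-ds}\,ds\) is strictly increasing in \(t\), vanishes at \(t=0\), and equals \(1\) exactly at \(t=g_\nu(\mu)\). This form gives uniqueness of the crossing time directly.

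\textbf{Monotonicity of \(g_\nu\).} Substitute \(\mu=\nu x\), so that \(g_\nu(\mu)=\nu^{-1}h(x)\) with \(h(x)=\log x/(x-1)\) and \(h(1)=1\).

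\emph{Continuity.} \(h\) is continuous at \(1\) because \(\log x\sim x-1\), so \(g_\nu\) is continuous on \((0,\infty)\).

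\emph{Derivative.} One computes
\[
h'(x)=\frac{(x-1)/x-\log x}{(x-1)^2}=\frac{1-1/x-\log x}{(x-1)^2}.
\]
Set \(k(x)=1-1/x-\log x\). Then \(k(1)=0\) and \(k'(x)=(1-x)/x^2\). So \(k\) increases on \((0,1)\) and decreases on \((1,\infty)\), which means \(k(x)<0\) for every \(x\ne1\).

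\emph{Conclusion.} Hence \(h'<0\) on \((0,1)\) and on \((1,\infty)\). Together with continuity at \(1\), this makes \(h\), and so \(g_\nu\), strictly decreasing on all of \((0,\infty)\).

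An alternative route is the integral representation \(h(x)=\int_0^1 \frac{ds}{1+s(x-1)}\). Each integrand is strictly decreasing in \(x\), which gives monotonicity without splitting at \(x=1\).

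\textbf{Prefix.} Sort the nonzero empirical eigenvalues so that \(\widehat\mu_1\ge\cdots\ge\widehat\mu_r\). Suppose \(f_{\nu,t}(\widehat\mu_j)>1\) and \(i<j\). Then \(\widehat\mu_i\ge\widehat\mu_j\), so \(g_\nu(\widehat\mu_i)\le g_\nu(\widehat\mu_j)<t\), and therefore \(f_{\nu,t}(\widehat\mu_i)>1\). The set of indices with \(f_{\nu,t}(\widehat\mu_i)>1\) is thus \(\{1,\dots,k\}\) for some \(k\), with ties handled consistently since equal eigenvalues give equal \(g_\nu\) values.

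\textbf{Main obstacle.} The only delicate point is the sign bookkeeping when \(\mu<\nu\), together with the removable point \(\mu=\nu\). Using the integral form, which is monotone in \(t\), handles both uniformly, so I would present that version first.
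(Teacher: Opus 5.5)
Your proof is correct and follows essentially the same route as the paper. Both split on the sign of \(\mu-\nu\) to reduce \(f_{\nu,t}(\mu)>1\) to \(t>g_\nu(\mu)\), use the removable value \(f_{\nu,t}(\nu)=\nu t\), get strict monotonicity of \(g_\nu\) from the derivative numerator \(1-\nu/\mu-\log(\mu/\nu)<0\) (i.e.\ \(\log x>1-1/x\) for \(x\neq1\)) plus continuity at \(\mu=\nu\), and then sort. Your extra integral devices are valid conveniences: monotonicity in \(t\) of \(\mu\int_0^t e^{-(\mu-\nu)s}\,ds\), and \(h(x)=\int_0^1 ds/\{1+s(x-1)\}\), which mirrors the representation the paper uses for the discrete corollary.
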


\noindent
The \(m\)-step Euler response and filter are
\[
h_{\nu,\eta,m}^{\rm disc}(\mu)
:=
\eta\sum_{\ell=0}^{m-1}\{1-\eta(\mu-\nu)\}^{\ell},
\qquad
f_{\nu,\eta,m}^{\rm disc}(\mu)
:=
\mu h_{\nu,\eta,m}^{\rm disc}(\mu).
\]

\begin{corollary}[Small-step NS-GD has the same prefix geometry]
\label{cor:signed-gd-superlevel-prefix}
Let \(\eta,\nu>0\), \(m\ge1\), and suppose
\(\eta(\widehat\mu_1+\nu)<1\).  Then the empirical directions with
\(f_{\nu,\eta,m}^{\rm disc}(\widehat\mu_i)>1\) form a possibly empty or full
leading spectral prefix.  In particular, Algorithm~\ref{alg:negative-shifted-gd}
obeys this geometry.
\end{corollary}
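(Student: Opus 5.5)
The plan is to reduce the sign of \(f^{\rm disc}_{\nu,\eta,m}-1\) to the sign of a chord slope of a concave function, in the same way one can argue Corollary~\ref{cor:signed-gf-superlevel-prefix} for the flow. Write \(r(\mu)=1-\eta(\mu-\nu)\). On \(\mu\in(0,\widehat\mu_1]\) the hypothesis \(\eta(\widehat\mu_1+\nu)<1\) gives
\[
r(\mu)\ge 1-\eta\widehat\mu_1+\eta\nu>2\eta\nu>0,
\]
so \(r\) is a positive affine function there.

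\textbf{Step 1: closed form away from the removable point.} For \(\mu\neq\nu\), summing the geometric series gives \(h^{\rm disc}_{\nu,\eta,m}(\mu)=(1-r^m)/(\mu-\nu)\). A one-line algebraic identity then yields
\[
f^{\rm disc}_{\nu,\eta,m}(\mu)-1=\frac{\mu(1-r^m)-(\mu-\nu)}{\mu-\nu}=\frac{\nu-\mu\,r(\mu)^m}{\mu-\nu}.
\]

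\textbf{Step 2: pass to logarithms.} Since \(\mu,\nu,r>0\), the numerator \(\nu-\mu r^m\) has the sign of \(-q(\mu)\), where
\[
q(\mu):=\log\mu+m\log r(\mu)-\log\nu .
\]
Hence, for \(\mu\ne\nu\),
\[
f^{\rm disc}(\mu)>1\iff \frac{q(\mu)-q(\nu)}{\mu-\nu}<0,
\]
because \(q(\nu)=0\) (as \(r(\nu)=1\)). At \(\mu=\nu\) we have \(f^{\rm disc}(\nu)=m\eta\nu\), and this exceeds \(1\) iff \(q'(\nu)=1/\nu-m\eta<0\). This is exactly the limiting chord slope, so the criterion extends continuously to \(\mu=\nu\).

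\textbf{Step 3: concavity gives monotone chord slopes.} The function \(q\) is concave on \(\{\mu>0:r(\mu)>0\}\), which contains \((0,\widehat\mu_1]\), because \(\log\mu\) is concave and \(\log\) of a positive affine function is concave. For a concave function, the chord slope \(s(\mu)=(q(\mu)-q(\nu))/(\mu-\nu)\), extended by \(q'(\nu)\) at \(\mu=\nu\), is nonincreasing in \(\mu\). Therefore \(\{\mu: s(\mu)<0\}\) is an up-set in \(\mu\), and so is \(\{\mu:f^{\rm disc}(\mu)>1\}\) on the domain.

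\textbf{Step 4: conclude.} After sorting \(\widehat\mu_1\ge\cdots\ge\widehat\mu_r\), the indices with \(f^{\rm disc}(\widehat\mu_i)>1\) are \(\{1,\dots,k\}\) for some \(0\le k\le r\), which is possibly empty or full. For Algorithm~\ref{alg:negative-shifted-gd}, the imposed condition \(\eta_\nu(\widehat\mu_1^{\rm tr}+\nu)\le c_\eta<1\) is the hypothesis applied to the training spectrum, so the same conclusion holds for each grid value \(\nu\).

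\textbf{Main obstacle.} The delicate points are, first, keeping \(r>0\) so that the logarithm and concavity apply (this is exactly what the small-step condition buys), and second, handling the removable point \(\mu=\nu\) consistently. I would address the latter through the derivative identity in Step 2, which makes the chord-slope criterion valid on the whole domain without a separate case split.
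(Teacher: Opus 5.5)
Your proof is correct, but it takes a different route from the paper's.

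\textbf{The paper's route.} The paper reduces the discrete case to the flow case. With \(z=\eta(\mu-\nu)<1\), it writes \(f^{\rm disc}_{\nu,\eta,m}(\mu)=f_{\nu,t_{\rm eff}(\mu)}(\mu)\) for the effective time \(t_{\rm eff}(\mu)=m\eta\{-\log(1-z)\}/z\). It then shows \(t_{\rm eff}\) is increasing in \(\mu\) via \(-\log(1-z)/z=\int_0^1 ds/(1-sz)\). Finally it invokes Corollary~\ref{cor:signed-gf-superlevel-prefix}: \(f^{\rm disc}>1\) iff \(t_{\rm eff}(\mu)>g_\nu(\mu)\). An increasing function exceeds a strictly decreasing one on an up-set, which gives the prefix.

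\textbf{Your route.} You work directly with \(f^{\rm disc}-1=(\nu-\mu r^m)/(\mu-\nu)\). You read its sign as minus the chord slope, through \((\nu,0)\), of the concave function \(q=\log\mu+m\log r-\log\nu\).

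\textbf{What each buys.} The paper's argument reuses the already-proved flow threshold \(g_\nu\). It also gives a structural interpretation: small-step NS-GD is the flow run for a spectrally increasing effective time, so discretization only reinforces the prefix.

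Your argument is self-contained and more unifying:
\begin{itemize}
\item Replacing \(m\log r(\mu)\) by \(-(\mu-\nu)t\) gives the prefix part of the flow corollary by the same few lines. The chord slope becomes \(g_\nu(\mu)-t\), and the paper's \(g_\nu'<0\) computation via \(\log x>1-1/x\) is exactly this chord-slope monotonicity for \(\log\).
\item It handles the removable point through \(q'(\nu)=1/\nu-m\eta\), with no separate case.
\item It shows what the geometry really requires: any residual factor \(\rho(\mu)>0\) with \(\rho(\nu)=1\) and \(\log\rho\) concave yields a leading prefix.
\end{itemize}

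\textbf{Two minor remarks.} First, both proofs use only \(r>0\) on \((0,\widehat\mu_1]\), i.e.\ \(\eta(\widehat\mu_1-\nu)<1\). This is weaker than the stated hypothesis, so the small-step condition buys more than your last paragraph suggests. Second, your Step~4 says ``for each grid value \(\nu\)'', but \(\mathcal V\) may contain \(\nu=0\), where \(\log\nu\) is undefined. That case is trivial: the filter \(1-(1-\eta\mu)^m\) lies in \((0,1)\), so the prefix is empty.
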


\noindent\textbf{Endpoint shape versus finite-time shape.}
Corollaries~\ref{cor:signed-gf-superlevel-prefix}
and~\ref{cor:signed-gd-superlevel-prefix} give the key shape
distinction.  The endpoint's displacement
\(A_\nu(\mu)-1=\nu/(\mu-\nu)\) is \emph{increasing} toward the pole, so any
head anti-shrinkage forces at least as much on every lower stable
direction---exactly where the conditional variance factor
\(f_i^2\widehat v_i^\top\Sigma\widehat v_i/\widehat\mu_i\) makes amplification
most costly.  The finite-time filter reverses this order: its above-one
directions form a leading prefix, so high directions can be anti-shrunk while
lower directions stay below one, provided the path stops before their
transient grows.  Stable endpoints are thus constrained by a rational pole
shape, while finite-time paths are governed by variance, leakage, and
transfer budgets.

\begin{remark}[Mixed-sign spectral regularization]
\label{rem:effective-scale-signed-path}
The NS-GF filter in \eqref{eq:signed-gf-filter} can be written as
\(f_{\nu,t}(\mu)=\mu/\{\mu+x^-_{\rm eff}(\mu,t)\}\) with eigenvalue-dependent
effective scale
\(x^-_{\rm eff}(\mu,t)=-\nu+(\mu-\nu)/\{e^{(\mu-\nu)t}-1\}\) and continuous
extension \(x^-_{\rm eff}(\nu,t)=1/t-\nu\)
(Lemma~\ref{lem:finite-time-effective-spectral-ridge}).  Unlike the constant endpoint scale
\(-\nu\), this effective scale varies with the realized sample eigenvalue
\(\mu\).  The displacement \(f_{\nu,t}(\mu)-1\) is not an arbitrary sign
pattern: by Corollary~\ref{cor:signed-gf-superlevel-prefix} it is positive
exactly on a leading spectral prefix, so ``mixed-sign'' here means head
anti-shrinkage with the lower spectrum shrunk or exposure-controlled below a
single crossover set by the stopping time; in the strong-signal case the
crossover falls below the observed spectrum and the whole spectrum is
anti-shrunk.  
\end{remark}

In summary, the comparator classes act through different spectral objects.
Every positive comparator---ridge, PCR, and positive gradient paths---stays at
or below the ridgeless amplitude \(f=1\); the negative-ridge endpoint can exceed
it, but only through an admissibility-limited pole below the empirical edge.
Only NS-GF and its small-step NS-GD discretization cross the ridgeless line on
a controlled leading prefix, replacing the endpoint pole by the removable value
\(f_{\nu,t}(\nu)=\nu t\).

\subsection{Exact full-risk oracle geometry}
\label{sec:exact-conditional-target}

Let \(r=\operatorname{rank}(X)\).  For the nonzero empirical eigenpairs of
\(X^\top X/n\), write \(P_i=\widehat v_i\widehat v_i^\top\),
\(P=\sum_{i=1}^rP_i\), and \(P_0=I-P\).  For each empirical row-space
direction define \(\theta_i^\star=\widehat v_i^\top\beta^\star\) and
\(G_{ij}=\widehat v_i^\top\Sigma\widehat v_j\).  Set
\[
H_{ij}:=\theta_i^\star\theta_j^\star G_{ij},
\qquad
m_i:=\beta^{\star\top}P_i\Sigma P_0\beta^\star,
\qquad
v_i:=\frac{\sigma_\varepsilon^2}{n}\frac{G_{ii}}{\widehat\mu_i}.
\]
Thus \(H\) is the full signal-curvature matrix in population prediction
geometry, while \(v_i\) is the variance price of increasing empirical direction
\(i\).  Put \(D=\operatorname{diag}(v_1,\ldots,v_r)\), \(Q=H+D\), and
\(g=m-v\).  Every empirical spectral filter \(f\) is represented by its
displacement from ridgeless, \(\delta_i(f)=f(\widehat\mu_i)-1\).
The methods remain diagonal filters in the empirical eigenbasis, but their
population risk is evaluated in the full \(Q\)-geometry because \(\Sigma\) can
fail to commute with \(X^\top X/n\).

The ridgeless reference is the minimum-norm interpolator whose proportional
random-design prediction risk is characterized by
\citet{Hastie2022Surprises}.  The following result is instead an exact
finite-sample identity conditional on \(X\).  It is elementary
completing-the-square algebra in the displacement coordinates, and we do not
claim it as new; we record it because it fixes the geometry---the
\(Q\)-metric and the target \(\delta^\star\)---in which every comparison
below is evaluated.

\begin{proposition}[Exact full-risk oracle geometry and local anti-shrinkage]
\label{prop:full-risk-oracle-geometry}
For any empirical spectral filter \(f\),
\begin{equation}
\label{eq:full-risk-quadratic}
\Risk_{\rm pop}(f\mid X)
=
\Risk_{\rm pop}(\mathbf 1\mid X)
+
\delta(f)^\top Q\delta(f)
-
2g^\top\delta(f).
\end{equation}
Here \(\mathbf 1\) denotes the ridgeless row-space filter.
The minimum-norm full conditional oracle is
\(\delta^\star:=Q^\dagger g\); when \(Q\) is singular, the full oracle set is
\(\delta^\star+\operatorname{null}(Q)\).  The risk
completes the square as
\begin{equation}
\label{eq:full-risk-square}
\Risk_{\rm pop}(f\mid X)
=
\Risk_{\rm pop}(\mathbf 1\mid X)-g^\top Q^\dagger g
+
\|\delta(f)-\delta^\star\|_Q^2,
\qquad
\|u\|_Q^2:=u^\top Qu,
\end{equation}
For a one-coordinate perturbation of ridgeless, \(f_i=1+\epsilon\) with all
other row-space filter values equal to one,
\[
\left.
\frac{d}{d\epsilon}\Risk_{\rm pop}(f\mid X)
\right|_{\epsilon=0}
=
-2(m_i-v_i).
\]
Thus anti-shrinkage in direction \(i\) is locally risk-decreasing if and only if
\(m_i>v_i\).
\end{proposition}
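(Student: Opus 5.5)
The plan is to start from Proposition~\ref{prop:conditional-pop-risk} and rewrite both terms in the displacement coordinates \(\delta=\delta(f)\), with \(f_i=1+\delta_i\).

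\emph{Bias term.} Write \(\widehat V\operatorname{diag}(f)\widehat V^\top=\sum_i f_iP_i\). Then
\[
\left(\sum_i f_iP_i-I\right)\beta^\star=\sum_i\delta_iP_i\beta^\star-P_0\beta^\star ,
\]
because \(\sum_iP_i-I=-P_0\). Expanding \(\|\cdot\|_\Sigma^2\) produces three pieces.
\begin{enumerate}
\item The pure ridgeless piece \(\|P_0\beta^\star\|_\Sigma^2\).
\item The quadratic piece \(\sum_{i,j}\delta_i\delta_j\,\beta^{\star\top}P_i\Sigma P_j\beta^\star\). Since \(P_i\beta^\star=\theta_i^\star\widehat v_i\), this equals \(\sum_{i,j}\delta_i\delta_j\theta_i^\star\theta_j^\star G_{ij}=\delta^\top H\delta\).
\item The cross piece \(-2\sum_i\delta_i\,\beta^{\star\top}P_i\Sigma P_0\beta^\star=-2m^\top\delta\).
\end{enumerate}

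\emph{Variance term.} Expand \(f_i^2=1+2\delta_i+\delta_i^2\). Summed against the weights \(v_i\), this gives
\[
\sum_i v_i+2v^\top\delta+\delta^\top D\delta .
\]

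\emph{Assembling \eqref{eq:full-risk-quadratic}.} Setting \(\delta=0\) identifies \(\Risk_{\rm pop}(\mathbf 1\mid X)=\|P_0\beta^\star\|_\Sigma^2+\sum_iv_i\). The remaining terms combine to \(\delta^\top(H+D)\delta-2(m-v)^\top\delta\), which is exactly \eqref{eq:full-risk-quadratic}.

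\emph{Completing the square.} This is the only step needing care: it requires \(g\in\operatorname{range}(Q)\). Both \(H\) and \(D\) are positive semidefinite. \(H\) is a Gram matrix, with \(\delta^\top H\delta=\|\sum_i\delta_i\theta_i^\star\widehat v_i\|_\Sigma^2\), and \(D\) is diagonal with nonnegative entries.

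Take \(u\in\operatorname{null}(Q)\). Then \(u^\top Hu=0\) and \(u^\top Du=0\). From \(u^\top Du=0\), we get \(u_i=0\) whenever \(v_i>0\); hence \(v^\top u=0\). From \(u^\top Hu=0\), the vector \(w:=\sum_iu_i\theta_i^\star\widehat v_i\) satisfies \(\Sigma^{1/2}w=0\), so \(\Sigma w=0\). Therefore \(m^\top u=w^\top\Sigma P_0\beta^\star=0\). So \(g\perp\operatorname{null}(Q)\), i.e.\ \(g\in\operatorname{range}(Q)\).

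It follows that \(Q\delta^\star=QQ^\dagger g=g\), and a direct expansion gives
\[
\delta^\top Q\delta-2g^\top\delta=\|\delta-\delta^\star\|_Q^2-g^\top Q^\dagger g ,
\]
which is \eqref{eq:full-risk-square}. Because \(Q\succeq0\), the minimizers of the risk are exactly the \(\delta\) with \(Q(\delta-\delta^\star)=0\), namely \(\delta^\star+\operatorname{null}(Q)\). Since \(\delta^\star=Q^\dagger g\) lies in \(\operatorname{range}(Q)\), it is the minimum-norm element of this set.

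\emph{Local derivative.} Substitute \(\delta=\epsilon e_i\) into \eqref{eq:full-risk-quadratic}. The risk becomes \(\Risk_{\rm pop}(\mathbf 1\mid X)+\epsilon^2Q_{ii}-2\epsilon g_i\). Differentiating at \(\epsilon=0\) gives \(-2g_i=-2(m_i-v_i)\). Hence anti-shrinkage (\(\epsilon>0\) small) decreases the risk if and only if \(m_i>v_i\).
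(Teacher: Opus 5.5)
Your proposal is correct and matches the paper's proof step for step. Like the paper, you expand the bias and variance terms of Proposition~\ref{prop:conditional-pop-risk} in the displacement coordinates and use the same null-space argument to show \(g\in\operatorname{range}(Q)\); your \(\Sigma^{1/2}w=0\Rightarrow\Sigma w=0\) step replaces the paper's Cauchy--Schwarz in the \(\Sigma\)-seminorm, which is an immaterial variation, and you finish with the same one-coordinate substitution \(\delta=\epsilon e_i\) for the local derivative.
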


The target \(\delta^\star\) serves as a population-risk oracle, explaining why
a validation-selected signed path can be closer to the desired spectral shape
than any stable scalar endpoint.  The oracle \(1+\delta^\star\) lives in the diagonal filter
coordinates of the method class, but distance to it is measured in the full
\(Q\)-metric, including off-diagonal population-covariance couplings.  The
Moore--Penrose inverse handles possible degeneracy of this metric; the range
condition needed for the completed square is verified in the proof.

\noindent\textbf{Benefit--cost interpretation.}
This local criterion admits a natural benefit--cost reading.  Although
\(P_iP_0=0\) in Euclidean geometry, generally \(P_i\Sigma P_0\ne0\).  The term
\(m_i=\langle P_i\beta^\star,P_0\beta^\star\rangle_\Sigma\) measures whether
increasing row-space direction \(i\) can compensate for signal left in the
empirical null space.  The term \(v_i\) is the variance price.  If risk were
measured in empirical geometry \(\widehat\Sigma=X^\top X/n\), then
\(\widehat\Sigma P_0=0\) and this coupling would vanish.  If \(\Sigma=I\), then
\(P_i\Sigma P_0=P_iP_0=0\) as well.  Anti-shrinkage is therefore a population
prediction phenomenon caused by anisotropic geometry and favorable signal
alignment.

\noindent\textbf{Head--tail heuristic and endpoint shape.}
Proposition~\ref{prop:full-risk-oracle-geometry} allows arbitrary signal
geometry, and \(m_i\) may be nonmonotone; nevertheless the local quantities
suggest a robust head--tail intuition.  The variance price
\(v_i=(\sigma_\varepsilon^2/n)G_{ii}/\widehat\mu_i\) has an explicit
\(1/\widehat\mu_i\) factor, so lower empirical directions are intrinsically
more costly to push above ridgeless, while the missing-signal gain \(m_i\) is
generated by favorable row-space/null-space coupling and is most plausibly
concentrated on leading signal-rich directions in head-local regimes.  Thus
\(g_i=m_i-v_i\) is expected to become less favorable down the empirical
spectrum, and the oracle asks for a \emph{selective} leading correction.
This is precisely the shape comparison of
Section~\ref{sec:mechanism}: the endpoint pays lower-spectrum variance to
obtain head anti-shrinkage, whereas the finite-time prefix decouples the two
over a finite horizon.

\noindent\textbf{A two-coordinate example.}
The coupling is already visible in the smallest overparameterized example.
Take \(p=2\), \(n=1\), \(X=(1,0)\), and
$
\Sigma=
\begin{pmatrix}
1 & \rho\\
\rho & 1
\end{pmatrix},
\beta^\star=\beta_1 e_1+\beta_2 e_2 .
$
The empirical row space is \(\operatorname{span}(e_1)\), the null space is
\(\operatorname{span}(e_2)\), and \(\widehat\mu_1=1\).  Although these spaces
are Euclidean-orthogonal, population prediction couples them when
\(\rho\ne0\): \(P_1\Sigma P_0=\rho e_1e_2^\top\), so
\(m_1=\rho\beta_1\beta_2\), while the variance price from
Proposition~\ref{prop:full-risk-oracle-geometry} is
\(v_1=\sigma_\varepsilon^2\).
Thus increasing the fitted row-space coefficient above ridgeless is locally
risk-decreasing exactly when the covariance-aligned missing-signal gain
\(\rho\beta_1\beta_2\) exceeds this variance price.

\section{Gaussian head--tail models: effective ranks and adaptive signed paths}
\label{sec:gaussian-spike-models}

Both cases build on the Tsigler--Bartlett head--tail strategy, but signed
dynamics introduce an additional difficulty absent from positive ridge and
positive gradient flow.  Write \(K=XX^\top/n\) for the sample-space Gram
operator.  When the signed level \(\nu\) lies inside its empirical spectrum,
as it does in our critical constructions, the shifted generator
\(K-\nu I\) has negative eigenvalues on modes with
\(\widehat\mu_i<\nu\).  Consequently, the semigroup
\(e^{-t(K-\nu I)}\) is noncontractive on those modes, and the usual
positive-path contraction argument no longer controls perturbations over the
relevant time horizon.  We instead use Duhamel's semigroup formula to compare
the realized operator with a tractable head--tail floor model while explicitly
tracking its finite-time exposure.  In Case I, the transparent
common-spike-plus-flat model permits a global Duhamel bound at constant
rescaled time.  Case II keeps a heterogeneous head and allows a nonflat tail:
the tail trace sets the floor, the squared spectrum measures exposure, and
invariant-graph localization followed by within-head Duhamel control makes
off-diagonal head--tail coupling enter through a quadratic leave-and-return
term.

\noindent\textbf{Shared Gaussian framework.}
Throughout this section,
\[
\begin{gathered}
\displaystyle
X=Z\Sigma^{1/2},\qquad
\Sigma=\operatorname{diag}(\Lambda_H,\Lambda_T),\\
\Lambda_H=\operatorname{diag}(\lambda_1,\ldots,\lambda_k),\qquad
\Lambda_T=\operatorname{diag}(\lambda_{k+1},\ldots,\lambda_p),\\
\lambda_-=\min_{j\le k}\lambda_j,\quad
\lambda_+=\max_{j\le k}\lambda_j,\qquad
T_k=\sum_{j>k}\lambda_j,\quad
S_k=\sum_{j>k}\lambda_j^2,\quad
a=\frac{T_k}{n}.
\end{gathered}
\]
Here \(Z\in\R^{n\times p}\) has independent standard Gaussian entries,
\(\lambda_1\ge\cdots\ge\lambda_p>0\), and the resolved-head dimension
\(1\le k=k_n<n\) may grow with \(n\).  We assume
\(\lambda_+/\lambda_-=O(1)\) and \(a\asymp\lambda_-\).  The signal is
head-supported:
\[
\beta_j^\star=0\quad(j>k),
\qquad\text{equivalently}\qquad
\sum_{j>k}\lambda_j(\beta_j^\star)^2=0.
\]
We write
\(\Theta_j=\lambda_j(\beta_j^\star)^2\) and
\(\Theta_H=\sum_{j\le k}\Theta_j\), and assume that
\(\Theta_H,\sigma_\varepsilon^2\), and their reciprocals are bounded.
Write
\[
\chi_{n,k}:=\sqrt{\frac{k+\log n}{n}}
\]
for the Gaussian head-frame tolerance.
For the general-tail result define the Tsigler--Bartlett effective ranks and
the corresponding floor tolerance by
\[
\mathfrak r_k:=\frac{T_k}{\lambda_{k+1}},\qquad
\mathfrak R_k:=\frac{T_k^2}{S_k},\qquad
r_k:=\frac{n}{\mathfrak r_k}
+\sqrt{\frac{n}{\mathfrak R_k}},\qquad
w_k:=ar_k
=\lambda_{k+1}+\sqrt{\frac{S_k}{n}}.
\]
Thus \(a\) is the first-moment tail floor, whereas \(S_k\) controls both its
random width and the population-risk readout.  The condition \(r_k\to0\)
is the high-effective-rank regime in which \(K_T\) concentrates around
\(aI_n\).

We use one risk notation for both cases:
\[
\begin{aligned}
R_{\rm sign}(\nu,t)
&:=\Risk_{\rm pop}(f_{\nu,t}\mid X),
& R_{\rm scale}^\star
:=\inf_{c\in\R}
\Risk_{\rm pop}\!\left(
\mu\mapsto c\,\mathbf 1\{\mu>0\}\,\middle|\,X
\right),\\
R_{\rm neg}^\star
&:=
\inf_{0\le\nu<\widehat\mu_{\min}^+}
\Risk_{\rm pop}(A_\nu\mid X),
& R_{\rm GD+}^\star
:=
\inf_{t\ge0}\Risk_{\rm pop}(f_{0,t}\mid X),\\
R_{\rm ridge+}^\star
&:=
\inf_{\alpha\ge0}
\Risk_{\rm pop}\!\left(
\mu\mapsto\frac{\mu}{\mu+\alpha}\,\middle|\,X
\right),
& R_+^\star:=R_{\rm GD+}^\star\wedge R_{\rm ridge+}^\star.
\end{aligned}
\tag{Shared path and comparator risks}
\label{eq:shared-spike-comparator-risks}
\]

\begin{proposition}[Marchenko--Pastur bulk-width endpoint barrier]
\label{prop:journal-mp-barrier}
Under the shared framework, take
\(\Lambda_T=\lambda_T I_{d_T}\), write
\(\gamma_T=d_T/n\), and assume
\(a=\gamma_T\lambda_T\asymp\lambda_-\) and
\(\lambda_T/\lambda_-\to0\) and \(k=o(n)\).  Put
\(w_T:=\sqrt{a\lambda_T}\).  The sublinear-rank head does not change the
bulk-width order of the lower tail edge: there are constants \(0<c<C<\infty\)
such that, with probability tending to one,
\[
c w_T
\le
a-\widehat\mu_{\min}^+
\le
C w_T.
\]
Thus every admissible negative-ridge endpoint has
\(\nu<a-cw_T\) on the same event.  This order statement is the only
lower-edge input used in the separation theorems.  The classical
Marchenko--Pastur law \citep{marchenko1967distribution} predicts the sharper
displacement
\((2\sqrt{\gamma_T}-1)\lambda_T\sim2w_T\); the proposition deliberately does
not require that sharper constant.
\end{proposition}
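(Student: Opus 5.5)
Our plan is to work in sample space, where the nonzero eigenvalues of \(\widehat\Sigma\) coincide with those of \(K=XX^\top/n\). Since \(k<n\le p\), \(\widehat\mu_{\min}^+=\lambda_{\min}(K)\). We split \(K=K_H+K_T\), where \(K_H=Z_H\Lambda_HZ_H^\top/n\) is positive semidefinite of rank \(k\) and \(K_T=\lambda_T Z_TZ_T^\top/n\). The tail block is an \(n\times n\) Wishart matrix with \(d_T=\gamma_Tn\) degrees of freedom. The assumption \(\lambda_T/\lambda_-\to0\) together with \(a\asymp\lambda_-\) forces \(\gamma_T=a/\lambda_T\to\infty\). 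In that regime the extreme singular values of \(Z_T^\top\) (a \(d_T\times n\) Gaussian matrix) satisfy the Davidson--Szarek bounds \(\sqrt{d_T}\pm(\sqrt n+s)\) with probability at least \(1-2e^{-s^2/2}\).

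\textbf{Step 1: upper bound, \(a-\widehat\mu_{\min}^+\le Cw_T\).} We use Cauchy interlacing, or Weyl with a rank-\(k\) restriction. Because \(K_H\succeq0\) has rank \(k\), there is an \((n-k)\)-dimensional subspace on which \(K=K_T\). Hence \(\lambda_{\min}(K)\ge\lambda_{\min}(K_T)\). Combining this with Davidson--Szarek at \(s=\sqrt{n}\) gives
\(\lambda_{\min}(K_T)\ge\lambda_T(\sqrt{\gamma_T}-2)^2=a-4\sqrt{a\lambda_T}+4\lambda_T\ge a-4w_T\).

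\textbf{Step 2: lower bound, \(a-\widehat\mu_{\min}^+\ge cw_T\).} Adding \(K_H\) can only raise eigenvalues, so the lower bound must hold despite the head. We use the min–max principle on the \(n-k\) dimensional subspace \(\ker(Z_H^\top)\), on which \(K=K_T\). We get \(\lambda_{\min}(K)\le\lambda_{\min}(K_T|_{\ker Z_H^\top})\).

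Because \(Z_H\) is independent of \(Z_T\), this kernel is a Haar-uniform \((n-k)\)-dimensional subspace independent of \(Z_T\). By rotation invariance, the restricted quadratic form has the law of \(\lambda_T W^\top W/n\), where \(W\) is a \(d_T\times(n-k)\) standard Gaussian matrix. The upper-sided estimate \(s_{\min}(W)\le\sqrt{d_T}-c'\sqrt{n-k}\) with high probability is then needed. This is the only genuinely nontrivial input, since Davidson--Szarek gives the opposite side.

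We plan to obtain it by one of two routes. The first is convergence of the smallest eigenvalue of \(W^\top W/d_T\) in the regime \(d_T/(n-k)\to\infty\), after normalization \((\lambda_{\min}-d_T)/\sqrt{d_T(n-k)}\to-2\) (Bai–Yin type results for \(p/n\to\infty\), Chen–Pan). The second is a direct test-vector argument: choose \(u\) in the span of the bottom half of the eigenvectors of the centered matrix \(W^\top W-d_TI\), which is approximately a scaled GOE-like matrix of width \(2\sqrt{d_T(n-k)}\). Then \(u^\top W^\top Wu\le d_T-c\sqrt{d_T(n-k)}\). Since \(k=o(n)\), we have \(n-k\sim n\). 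This yields \(\lambda_{\min}(K)\le a-c\sqrt{a\lambda_T}\) with probability tending to one.

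\textbf{Step 3 and the main obstacle.} The admissibility conclusion \(\nu<\widehat\mu_{\min}^+\le a-cw_T\) is then immediate. We expect Step 2's anti-concentration of the lower edge to be the main obstacle: it needs a lower deviation of order \(\sqrt{\gamma_T}\lambda_T\) for a Wishart matrix whose aspect ratio diverges. If citing the Chen–Pan limit is acceptable, the step is direct; otherwise we will carry out the test-vector argument via the semicircle approximation of \((W^\top W-d_TI)/\sqrt{d_T}\).
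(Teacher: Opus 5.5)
Your proof is correct. Step~1 is the paper's argument, with an explicit constant: the paper writes $K_T=aI_n+w_TB_T$ with $B_T=(Z_TZ_T^\top-d_TI_n)/\sqrt{nd_T}$, uses $\|B_T\|_{\rm op}=O_{\mathbb P}(1)$, and then uses $K\succeq K_T$.

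The routes differ in Step~2.
\begin{itemize}
\item \emph{The paper} never uses head--tail independence. It invokes the Bai--Yin semicircle law for $n/d_T\to0$ to get a positive fraction $q_0n$ of eigenvalues of $B_T$ below $-c_0$. Since $k=o(n)$, this gives $\lambda_{n-k}(K_T)\le a-c_0w_T$. It then applies rank-$k$ interlacing, $\lambda_n(K)\le\lambda_{n-k}(K_T)$, which holds for any positive semidefinite head of rank at most $k$.
\item \emph{You} compress to $\ker Z_H^\top$ and use the independence of $Z_H$ and $Z_T$ plus Gaussian rotation invariance. This turns the compression into an exact $(n-k)$-dimensional Wishart, so only its smallest eigenvalue is needed.
\end{itemize}
Your intermediate bound is never weaker, since interlacing for compressions gives $\lambda_{\min}(U^\top K_TU)\le\lambda_{n-k}(K_T)$. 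It would also survive $k$ up to $(1-\epsilon)n$, whereas the paper needs $k$ below the semicircle mass fraction. What the paper's route buys is robustness: it uses nothing about the head beyond $K_H\succeq0$ and $\operatorname{rank}K_H\le k$, so it needs neither block-diagonal $\Sigma$ nor independence.

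Two small repairs.
\begin{itemize}
\item In Step~1 the stated reason is backwards. An $(n-k)$-dimensional subspace on which $K=K_T$ yields an \emph{upper} bound on $\lambda_{\min}(K)$; that is your Step~2. The lower bound is simply $K=K_T+K_H\succeq K_T$.
\item In your second route for the Wishart input, a vector in the span of the bottom half of the eigenvectors only guarantees a Rayleigh quotient below the median eigenvalue. That median is near $0$ for the semicircle. You actually need positive spectral mass below some $-c$, which is exactly the Bai--Yin input the paper cites.
\end{itemize}
If you want to avoid both that citation and Chen--Pan, a two-moment argument suffices. Put $M=(W^\top W-d_TI)/\sqrt{d_T(n-k)}$. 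Direct variance computations give $\tr M/(n-k)\to0$ and $\tr M^2/(n-k)\to1$ in probability. Davidson--Szarek gives $\|M\|_{\rm op}\le5$ with high probability. With mean near $0$, second moment near $1$, and support in $[-5,5]$, at least a fraction of about $1/100$ of the eigenvalues must lie below $-1/20$. The same argument with $n-k$ replaced by $n$ also makes the paper's positive-fraction step self-contained.
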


\begin{figure}[t]
	\centering
	\includegraphics[width=\linewidth]{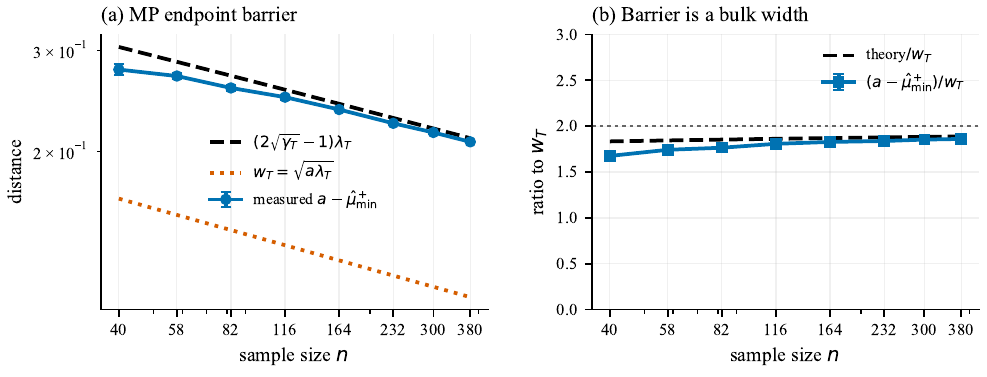}
	\caption{Numerical confirmation of the Marchenko--Pastur endpoint barrier on
	Gaussian common-spike-plus-flat designs (the validation experiment of
	\S\ref{sec:theory-aligned-simulations}, with \(a=1\) and
	\(\lambda_T\asymp0.1\,n^{-\zeta}\)).  Panel~(a): the measured gap
	\(a-\widehat\mu_{\min}^+\) between the tail floor and the smallest nonzero
	empirical eigenvalue follows the classical MP prediction
	\((2\sqrt{\gamma_T}-1)\lambda_T\) and runs parallel to
	\(w_T=\sqrt{a\lambda_T}\).  Panel~(b): the ratio
	\((a-\widehat\mu_{\min}^+)/w_T\) approaches the MP prediction \(2\).  The
	proposition and the separation theorem use only the resulting bulk-width
	order \(\asymp\sqrt{a\lambda_T}\), not this sharper constant.  Means and
	\(1.96\) standard-error bands over \(30\) designs.}
	\label{fig:mp-wall-sim}
\end{figure}

\subsection{Case I: Common-spike exact recovery}
\label{sec:mp-spike-flat-separation}

The common-spike specialization isolates the removable-pole mechanism.
Placing the signed level one head scale above the tail floor makes the head
generator neutral, so the stopped path recovers the head exactly while paying
only finite tail exposure.

\begin{assumption}[Common-spike specialization]
\label{ass:journal-spike-flat}
In the shared framework, set
\(k=O(1)\), \(\Lambda_H=\lambda_hI_k\), and
\(\Lambda_T=\lambda_TI_{d_T}\), with
\(\gamma_T=d_T/n\), \(a=\gamma_T\lambda_T\asymp\lambda_h\),
\(\lambda_T/\lambda_h\to0\), and
\(n\lambda_T/\lambda_h\to\infty\).  Then
\(\lambda_-=\lambda_+=\lambda_h\) and
\(\Theta_H=\lambda_h\|\beta_H^\star\|_2^2\); write
\(w_T=\sqrt{a\lambda_T}\).
\end{assumption}

For an integer \(m\ge1\), use the discrete response defined in
Section~\ref{sec:model} and set
\[
\begin{gathered}
R_{\rm sign}^{\rm disc}(\nu,\eta,m)
:=\Risk_{\rm pop}(f_{\nu,\eta,m}^{\rm disc}\mid X), \quad
(\nu_\star,t_\star,\eta_\star)
:=
(a+\lambda_h,\lambda_h^{-1},(m\lambda_h)^{-1}).
\end{gathered}
\]
In the ideal floor model, the continuous and discrete paths obey the four
identities
\[
\begin{aligned}
\lambda_hh_{\nu_\star,t_\star}(a+\lambda_h)
&=1,
&
h_{\nu_\star,t_\star}(a)
&=\frac{e-1}{\lambda_h},\\
\lambda_hh_{\nu_\star,\eta_\star,m}^{\rm disc}(a+\lambda_h)
&=1,
&
h_{\nu_\star,\eta_\star,m}^{\rm disc}(a)
&=\frac{(1+1/m)^m-1}{\lambda_h}
\le\frac{e-1}{\lambda_h}.
\end{aligned}
\]
Thus exact head recovery requires neither a long-time nor a floor-critical
limit, and its tail multiplier stays uniformly bounded.
For \(m=1\), the discrete filter is
\(h_{\nu,\eta,1}^{\rm disc}\equiv\eta\), so the signed level has no effect:
the construction reduces to a single large-step ordinary gradient update.  The
discrete upper bound below is therefore an implementation and endpoint
comparison, not a separation from unrestricted large-step ordinary GD.
Such a separation requires either a monotone-step comparator or an explicit
analysis of overshooting positive-GD polynomials.
Algorithm~\ref{alg:negative-shifted-gd} uses the stabilized subfamily: one
chooses \(m\) large enough that
\(\eta_\star(\widehat\mu_1^{\rm tr}+\nu_\star)\le c_\eta<1\).
The theorem retains all \(m\ge1\) because its discrete common-spike identity is
exact and is also useful for isolating the endpoint comparison.

\begin{theorem}[Common-spike total-risk separation]
\label{thm:journal-spike-flat-separation}
Under Assumption~\ref{ass:journal-spike-flat}, with probability tending to
one, the explicit continuous and discrete
supercritical paths satisfy, uniformly over integers \(m\ge1\),
\[
\begin{aligned}
R_{\rm sign}(\nu_\star,t_\star)
\vee
R_{\rm sign}^{\rm disc}(\nu_\star,\eta_\star,m)
&\lesssim
\frac{\sigma_\varepsilon^2k}{n}
+
(\Theta_H+\sigma_\varepsilon^2)
\frac{a\lambda_T}{\lambda_h^2},\\
R_{\rm neg}^\star
&\gtrsim
\frac{\sigma_\varepsilon^2k}{n}
+
\frac{\sigma_\varepsilon\sqrt{\Theta_Ha\lambda_T}}{\lambda_h},
\quad 
R_+^\star
\gtrsim
\Theta_H\left(\frac{a}{\lambda_h+a}\right)^2.
\end{aligned}
\]
In the interior regime
\(a,\lambda_h,\Theta_H,\sigma_\varepsilon^2\asymp1\),
\[
\frac{R_{\rm sign}(\nu_\star,t_\star)}{R_{\rm neg}^\star}
\lesssim
\sqrt{\lambda_T/\lambda_h}\longrightarrow0,
\qquad
\frac{R_{\rm sign}(\nu_\star,t_\star)}
{R_+^\star}
\lesssim
\lambda_T/\lambda_h\longrightarrow0.
\]
\end{theorem}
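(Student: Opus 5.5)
The plan is to work in the sample-space (dual) representation. Every row-space filter estimator can be written as \(\widehat\beta_h=X^\top h(K)y/n\) with \(K=XX^\top/n\). Split \(X=[X_H,X_T]\), so that \(K=K_H+K_T\) with \(K_H=Z_H\Lambda_HZ_H^\top/n\) of rank \(k=O(1)\) and \(K_T=Z_T\Lambda_TZ_T^\top/n\). By Gaussian concentration for a flat tail with \(d_T\gg n\),
\[
\|K_T-aI_n\|\lesssim w_T ,
\]
since \(\sqrt{d_T}\lambda_T/\sqrt n=\sqrt{a\lambda_T}\) and \(\lambda_T\le w_T\). Proposition~\ref{prop:journal-mp-barrier} gives the matching lower-edge statement. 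The population risk of Proposition~\ref{prop:conditional-pop-risk} splits into four pieces:
\begin{itemize}
\item head bias \(\|(\Lambda_H^{1/2}X_H^\top h(K)X_H/n-\Lambda_H^{1/2})\beta_H^\star\|^2\);
\item tail leakage of signal \(\|\Lambda_T^{1/2}X_T^\top h(K)X_H\beta_H^\star/n\|^2\);
\item head variance;
\item tail variance \(\sigma_\varepsilon^2\tr(h(K)K_T^{\Sigma}h(K))/n\), where \(K_T^\Sigma=Z_T\Lambda_T^2Z_T^\top/n\) has norm \(\lesssim \lambda_T(a+\lambda_T)\).
\end{itemize}

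\textbf{Upper bound.} First evaluate the ideal floor model \(K\approx aI+K_H\). In that model the head eigenvalues of \(K\) are \(a+\lambda_h(1+O(\chi_{n,k}))\), the identities displayed before the theorem give exact head recovery, and the multiplier on the complement is \(h(a)=(e-1)/\lambda_h\), resp.\ \(\le(e-1)/\lambda_h\). Then pass from the ideal model to the realized \(K\) using Duhamel's formula
\[
e^{-t(K-\nu)}-e^{-t(K_0-\nu)}=-\int_0^t e^{-(t-s)(K-\nu)}(K_T-aI)e^{-s(K_0-\nu)}\,ds ,
\]
with \(K_0=aI+K_H\). Because \(t_\star=\lambda_h^{-1}\) and the spectra of \(K-\nu_\star\) lie in \([-\lambda_h-O(w_T),\,O(\lambda_h)]\), both semigroups are bounded by \(e^{O(1)}\) on \([0,t_\star]\), so the noncontractive part costs only a constant. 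This yields
\[
\|h(K)-h(K_0)\|\lesssim t_\star^2\|K_T-aI\|=w_T/\lambda_h^2 .
\]
The discrete case follows the same route, using the telescoping identity \(A^m-B^m=\sum A^{j}(A-B)B^{m-1-j}\), with \(\|I-\eta(K-\nu)\|\le 1+\eta\lambda_h\) so that the product over \(m\) steps is bounded by \(e\) uniformly in \(m\).

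\textbf{Obstacle for the upper bound.} A naive substitution of \(\|h(K)-h(K_0)\|\) into the head bias gives \(\Theta_H w_T/\lambda_h\), which is only \(\sqrt{\lambda_T/\lambda_h}\). That is the endpoint rate, not \(a\lambda_T/\lambda_h^2\). To get the squared rate, I will show that the first-order Duhamel term vanishes on the head. The reason is that \(K_T-aI\) is approximately orthogonal, in the needed sense, to the head column space \(U_H=\mathrm{col}(Z_H)\): the quantity \(U_H^\top(K_T-aI)U_H\) is of order \(w_T\sqrt{k/n}\), much smaller than \(w_T\), by independence of \(Z_H\) and \(Z_T\). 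The off-diagonal blocks of \(K_T-aI\) then enter the head bias only through a leave-and-return product, which is quadratic. That gives a head-bias error
\[
\Theta_H\bigl(w_T^2/\lambda_h^2+w_T\sqrt{k/n}/\lambda_h\bigr),
\]
and the second term is \(\lesssim a\lambda_T/\lambda_h^2\) under \(n\lambda_T/\lambda_h\to\infty\). Leakage and tail variance are \(\lesssim\|h\|^2\|K_T^\Sigma\|\cdot(\text{signal or }\sigma^2)\lesssim a\lambda_T/\lambda_h^2\), since \(\|h\|\lesssim 1/\lambda_h\). Head variance is \(\sigma^2k/n\).

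\textbf{Lower bounds.} For \(R_{\rm neg}^\star\): on the event of Proposition~\ref{prop:journal-mp-barrier}, every admissible \(\nu<a-cw_T\). The endpoint head filter is \(A_\nu(\mu_H)=\mu_H/(\mu_H-\nu)\), which acts on the population head as \(\lambda_h/(\lambda_h+a-\nu+O(\lambda_h\chi))\). So the head bias is at least
\[
\Theta_H\Bigl(\frac{a-\nu}{\lambda_h}\Bigr)^2\wedge\Theta_H .
\]
The tail variance is at least \(\sigma^2\sum_{\text{tail}}A_\nu(\widehat\mu_i)^2G_{ii}/(n\widehat\mu_i)\gtrsim\sigma^2\lambda_T a/(a-\nu)^2\). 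This uses that a constant fraction of tail eigenvalues lie within \(O(w_T)\) of \(a\) and that \(G_{ii}\approx\lambda_T\) on tail directions. Setting \(x=a-\nu\ge cw_T\) and minimizing \(\Theta_H x^2/\lambda_h^2+\sigma^2 a\lambda_T/x^2\) gives \(\gtrsim\sigma\sqrt{\Theta_H a\lambda_T}/\lambda_h\). Head variance adds \(\sigma^2k/n\), since all filters are \(\ge1\).

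For \(R_+^\star\): every positive filter satisfies \(0\le f\le1\) on the head, and the ridgeless head filter already attenuates the population head by the factor \(\lambda_h/(\lambda_h+a)\). Shrinking further can only increase head bias, giving \(\Theta_H(a/(\lambda_h+a))^2\) up to constants.

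The ratios in the interior regime then follow directly, using \(k/n\lesssim\lambda_T\) from \(n\lambda_T\to\infty\). I expect the quadratic leave-and-return control of the head bias to be the main difficulty; everything else is standard concentration.
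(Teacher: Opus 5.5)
Your architecture is the paper's: the ideal floor model $K_0=K_H+aI_n$, a global Duhamel transfer at the constant horizon $t_\star=\lambda_h^{-1}$ (with matrix-polynomial telescoping for the discrete path), leakage and tail noise controlled through $\|h(K)\|_{\rm op}\lesssim\lambda_h^{-1}$ and $K_{2,T}=\lambda_TK_T$, the Marchenko--Pastur wall plus a bias--variance balance for the endpoint, and the cap $0\preceq h_+(K)\preceq K^{-1}$ for positive filters.

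The step you single out as the main difficulty, however, rests on an arithmetic slip. The head bias is quadratic in the head-map error; your own first bullet is a squared norm. With $M=n^{-1}X_H^\top h(K)X_H$, the head bias equals $\|\Lambda_H^{1/2}(M-I_k)\beta_H^\star\|_2^2\le\Theta_H\|M-I_k\|_{\rm op}^2$. The naive substitution gives $\|M-M_0\|_{\rm op}\le\|G_H\|_{\rm op}\|h(K)-h(K_0)\|_{\rm op}\lesssim w_T/\lambda_h=r_T$. The bias contribution is therefore $\Theta_Hr_T^2=\Theta_Ha\lambda_T/\lambda_h^2$, which is exactly the target, not $\Theta_Hw_T/\lambda_h$. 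This is precisely how the paper argues in Lemmas~\ref{lem:journal-spike-event} and~\ref{lem:journal-signed-upper}.

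No leave-and-return localization is needed in Case~I. The paper reserves it for Case~II, where $t_{\rm fc}=L_k/\lambda_-$ diverges and a global Duhamel bound would carry extra powers of $L_k$. Your refinement is not wrong, only superfluous. A second-order Duhamel expansion would suffice because $t_\star\|K_T-aI_n\|_{\rm op}\lesssim r_T\to0$. Note also that the paper's invariant-graph lemma does not apply verbatim at $\nu_\star=a+\lambda_h$: there the head block of $K-\nu_\star I_n$ sits near $0$ and the complement near $-\lambda_h$.

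The same slip shows in your display $\Theta_H(w_T^2/\lambda_h^2+w_T\sqrt{k/n}/\lambda_h)$, which is linear in a map error. Under that accounting there would be a real hole. You never bound the floor-model error $\|\psi_\star(G_H)-I_k\|_{\rm op}$, where $\psi_\star(x)=xh_{\nu_\star,t_\star}(a+x)$. Since $\psi_\star'(\lambda_h)=1/(2\lambda_h)\neq0$, this error is genuinely of order $n^{-1/2}$. Then $\Theta_Hn^{-1/2}\lesssim a\lambda_T/\lambda_h^2$ fails, e.g., for $\lambda_T=n^{-\zeta}$ with $\zeta>1/2$. With the correct square, this term contributes $O_{\mathbb P}(1/n)=o(\lambda_T/\lambda_h)$. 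That is exactly where $n\lambda_T/\lambda_h\to\infty$ enters (the paper's tolerance $\omega_{H,n}$).

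For the same reason, quote the head-Wishart fluctuation as $O_{\mathbb P}(n^{-1/2})$ for fixed $k$ rather than $O(\chi_{n,k})$. The quantity $\chi_{n,k}^2\asymp\log n/n$ need not be $o(\lambda_T/\lambda_h)$ under Assumption~\ref{ass:journal-spike-flat}.

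In the endpoint minimization, use the head-bias bound $\Theta_H(b/\lambda_h)^2$ only where the resolvent identity is valid. That means $b\ge Cw_T$ with $C$ large, so that $K_T-\nu I_n\succ0$. Below that, the tail variance alone, of order $\sigma_\varepsilon^2a\lambda_T/w_T^2$, is already far above the target. With these corrections your proposal is correct and coincides with the paper's argument.
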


\noindent\textbf{A polynomial separation example.}
Fix \(0<\zeta<1\), take \(k=O(1)\),
\[
\lambda_h\asymp a\asymp\Theta_H\asymp\sigma_\varepsilon^2\asymp1,
\qquad
\lambda_T=n^{-\zeta},
\qquad
d_T\asymp n^{1+\zeta}.
\]
Then \(a=(d_T/n)\lambda_T\asymp1\) and
\(n\lambda_T\to\infty\), so Assumption~\ref{ass:journal-spike-flat} holds.
Theorem~\ref{thm:journal-spike-flat-separation} yields, uniformly in
\(m\ge1\),
\[
\begin{aligned}
R_{\rm sign}(\nu_\star,t_\star)
\vee R_{\rm sign}^{\rm disc}(\nu_\star,\eta_\star,m)
=O_{\mathbb P}(n^{-\zeta}),\quad 
R_{\rm neg}^\star=\Omega_{\mathbb P}(n^{-\zeta/2}), \quad
R_+^\star=\Omega_{\mathbb P}(1).
\end{aligned}
\]
Thus the signed path improves on the best admissible negative-ridge endpoint
by the factor \(n^{-\zeta/2}\) and on positive shrinkage by
\(n^{-\zeta}\).

The theorem compares explicit stopped continuous and discrete paths with
oracle-tuned endpoints on the same random-design model.  Positive shrinkage cannot undo the
floor-induced head attenuation.  Admissible negative ridge can move above
ridgeless, but its pole must remain below the MP edge and its optimized
bias--variance tradeoff remains of square-root order.  The stopped path instead
places its removable pole at the common head location \(a+\lambda_h\), exactly
recovers the head at time \(1/\lambda_h\), and pays only order
\(a\lambda_T/\lambda_h^2\) for tail exposure and random-design transfer.
Proofs are given in Appendix~\ref{app:journal-spike-flat-proofs}.

This separation is deliberately transparent rather than maximally general.
The exact equality of the head spikes isolates the removable-pole mechanism,
but it also permits a uniformly rescaled ridgeless estimator to reproduce the
same single-scale correction.  Section~\ref{sec:multispike-floor-critical}
removes that degeneracy: heterogeneous spikes require a nonconstant head
correction, and a floor-critical path achieves it through localized
finite-exposure dynamics.

\subsection{Case II: Heterogeneous heads and high-effective-rank tails}
\label{sec:multispike-floor-critical}

The common-spike construction isolates the removable-pole mechanism, but its
flat tail and single head scale hide two distinct issues.  A nonflat tail
creates a random floor whose location is controlled by \(T_k\) and whose
prediction-risk exposure is controlled by \(S_k\).  A heterogeneous head
also rules out repairing the floor by uniformly rescaling ridgeless
regression.  This subsection treats both effects in one Tsigler--Bartlett
head--tail theorem; the flat-tail model is then a corollary.

\begin{assumption}[Heterogeneous-head, general-tail specialization]
\label{ass:journal-multispike}
Use the shared Gaussian framework with diagonal head
\(\Lambda_H=\operatorname{diag}(\lambda_1,\ldots,\lambda_k)\), where the
\(\lambda_j\) need not be equal.  Assume
\[
r_k\to0,
\qquad
\frac{k w_k^2}{S_k}\to0.
\]
The second condition is the projected tail-square-mass requirement for a
growing head; it implies both \(k=o(n)\) and
\(S_k/\lambda_{k+1}^2\to\infty\).
\end{assumption}

For a uniformly rescaled ridgeless estimator, the ideal floor-model recovery
at spike \(j\) is \(cq_j\).  Its best scalar approximation is
\[
\begin{aligned}
q_j&:=\frac{\lambda_j}{\lambda_j+a},
&
\Delta_{\rm shape}
&:=\inf_{c\in\R}\sum_{j=1}^k\Theta_j(1-cq_j)^2,\\
\Delta_{\rm shape}
&=\frac{\Theta_1\Theta_2(q_1-q_2)^2}
{\Theta_1q_1^2+\Theta_2q_2^2}
&&\text{when \(k=2\).}
\end{aligned}
\]
Thus \(\Delta_{\rm shape}\asymp\Theta_H\) when two separated head scales
carry nonvanishing prediction energy.

The floor-critical tuning balances head recovery against the squared-spectrum
tail exposure:
\[
\begin{gathered}
V_{T,k}:=\frac{\sigma_\varepsilon^2S_k}{n\lambda_-^2},
\qquad
L_k:=\frac12\log\!\left(1+\frac{\Theta_H}{V_{T,k}}\right),\\
\nu_{\rm fc}:=a,\qquad
t_{\rm fc}:=\frac{L_k}{\lambda_-},\qquad
\frac1nX_H^\top h_{a,t_{\rm fc}}(K_H+aI_n)X_H
=I_k-e^{-t_{\rm fc}G_H},
\end{gathered}
\]
where \(G_H=X_H^\top X_H/n\).  The trace \(T_k\) therefore determines the
theoretically chosen signed level, while \(S_k\) determines how long the path
can safely expose the tail.  All path and comparator risks use the shared
notation in \eqref{eq:shared-spike-comparator-risks}.

\begin{theorem}[Trace--square-mass separation with heterogeneous tails]
\label{thm:journal-multispike-floor-critical}
Under Assumption~\ref{ass:journal-multispike}, suppose
\(r_kL_k\to0\) and \(L_k\chi_{n,k}\to0\).
Then, with probability tending to one,
\[
\begin{aligned}
R_{\rm sign}(a,t_{\rm fc})
&\lesssim
\frac{\sigma_\varepsilon^2k}{n}
+
V_{T,k}(1+L_k^2)
+
\Theta_H\frac{w_k^2}{\lambda_-^2},\\
R_{\rm scale}^\star
&\ge\Delta_{\rm shape}-o(1)\Theta_H,
\quad 
R_+^\star
\gtrsim
\sum_{j=1}^k
\Theta_j\left(\frac{a}{a+\lambda_j}\right)^2 .
\end{aligned}
\tag{General-tail risk bounds}
\label{eq:multispike-risk-bounds}
\]
Moreover, for a numerical constant \(C_0\), every admissible negative-ridge
endpoint satisfies the trace--square-mass lower envelope
\[
R_{\rm neg}^\star
\gtrsim
\frac{\sigma_\varepsilon^2k}{n}
+
\inf_{b\ge0}
\left\{
\Theta_H\frac{b^2}{(b+\lambda_+)^2}
+
\frac{\sigma_\varepsilon^2S_k}
{n(b+C_0w_k)^2}
\right\}.
\tag{General-tail endpoint envelope}
\label{eq:multispike-endpoint-envelope}
\]
If
\(
b_k^\star:=
\left(
\frac{\sigma_\varepsilon^2S_k\lambda_+^2}
{n\Theta_H}
\right)^{1/4},
w_k=o(b_k^\star),
\)
then the envelope simplifies to
\[
R_{\rm neg}^\star
\gtrsim
\frac{\sigma_\varepsilon^2k}{n}
+
\frac{\sigma_\varepsilon\sqrt{\Theta_HS_k/n}}{\lambda_+}.
\tag{Square-mass endpoint lower bound}
\label{eq:multispike-endpoint-simplified}
\]
\end{theorem}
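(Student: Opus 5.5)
We work in the sample-space (kernel) representation throughout. Write \(K=K_H+K_T\), with \(K_H=X_HX_H^\top/n\) and \(K_T=X_TX_T^\top/n\). The NS-GF predictor is \(\widehat\beta=X^\top h_{\nu,t}(K)y/n\), where \(h_{\nu,t}(\mu)=\int_0^te^{-(\mu-\nu)s}ds\). Every risk in the statement is evaluated through Proposition~\ref{prop:conditional-pop-risk}, split into head bias, tail readout bias, and variance.

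\textbf{Step 1: concentration of the tail.} We first establish that, with probability tending to one,
\[
\|K_T-aI_n\|\lesssim w_k=ar_k .
\]
This is the standard Tsigler--Bartlett bound: the diagonal part is controlled by \(S_k\) through Hanson--Wright, and the off-diagonal part by \(\lambda_{k+1}\) and \(\sqrt{S_k/n}\). On the same event we use \(\|G_H-\Lambda_H\|\lesssim\lambda_+\chi_{n,k}\) and \(\tr(X_TX_T^\top\Lambda_T\cdot)\)-type bounds with \(S_k\).

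\textbf{Step 2: upper bound at \((a,t_{\rm fc})\), the main obstacle.} Write \(K-aI=K_H+E\) with \(E=K_T-aI\). The ideal operator \(h_{a,t}(K_H+aI)\) gives head recovery \(I_k-e^{-tG_H}\) by the displayed identity. The head bias is then
\[
\sum_j\Theta_je^{-2t\lambda_j}\lesssim\Theta_He^{-2L_k}\le V_{T,k}.
\]
The difficulty is that \(K_H+E\) may have eigenvalues of order \(-w_k\), so the semigroup is noncontractive. A global Duhamel bound gives a factor \(e^{tw_k}\), and this is harmless because \(t_{\rm fc}w_k\asymp L_kr_k\to0\). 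That alone is not enough to keep the cross term quadratic, so we localize. We take the invariant subspace of \(K\) near the head cluster, represented as a graph over \(\operatorname{range}(X_H)\) with slope \(\lesssim w_k/\lambda_-\) via Davis--Kahan/Riccati. Inside the positive head block we apply Duhamel with the contractive semigroup. Coupling that leaves the head and returns then contributes \(O(w_k^2/\lambda_-^2)\). This yields the head transfer term \(\Theta_Hw_k^2/\lambda_-^2\), with the condition \(L_k\chi_{n,k}\to0\) absorbing the head-frame error.

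The tail readout and variance are handled separately. On the complement, \(|h_{a,t}|\le t(1+o(1))\). Tail prediction exposure is \(\lesssim\sigma^2t^2\tr(X_T\Lambda_TX_T^\top)/n^2\approx\sigma^2t^2S_k/n\), which gives \(V_{T,k}L_k^2\); the signal-induced tail leakage is bounded by the same quantity. The head variance gives \(\sigma^2k/n\).

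\textbf{Step 3: lower bounds.} For \(R_{\rm scale}^\star\) and \(R_+^\star\), the head part of the risk is at least the projected head coordinates. By Step 1, any filter \(\phi\) satisfies \(X_H^\top\phi(K)X_H/n\approx\operatorname{diag}(\lambda_j\phi/(\lambda_j+a))\) up to \(o(1)\) in the Tsigler--Bartlett way. A scalar \(c\) therefore yields \(cq_j\), and minimizing gives \(\Delta_{\rm shape}-o(1)\Theta_H\). A contraction \(\phi\le\) the ridgeless map yields factors \(\le q_j\), hence bias \(\ge\sum\Theta_j(a/(a+\lambda_j))^2\).

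For the endpoint, set \(b=a-\nu\ge0\). Since \(\nu<\widehat\mu_{\min}^+\), we have \(b\ge a-\widehat\mu^+_{\min}\gtrsim-w_k\). The head multiplier is \(\lambda_j/(\lambda_j+b)\), so the bias is \(\ge\Theta_Hb^2/(b+\lambda_+)^2\). The tail variance is \(\sigma^2\tr(\Lambda_TX_T^\top(K-\nu)^{-2}X_T)/n^2\), and with \(\|(K_T-\nu)^{-1}\|\ge(b+Cw_k)^{-1}\)-type bounds on most directions it is \(\gtrsim\sigma^2S_k/(n(b+C_0w_k)^2)\). Taking the infimum over \(b\) gives the envelope. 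Finally, balancing \(b\asymp b_k^\star\) with \(w_k=o(b_k^\star)\) yields the square-root bound \(\sigma\sqrt{\Theta_HS_k/n}/\lambda_+\).
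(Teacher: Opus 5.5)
Your architecture matches the paper's: tail concentration at scale \(w_k\), invariant-graph localization with within-head Duhamel, the global exposure bound \(\|h_{a,t}(K)\|_{\rm op}\le te^{Cw_kt}\) for tail noise, a ridgeless cap for positive filters, and a projected square-mass readout for the endpoint. The signed upper bound, however, has a genuine gap at the tail-signal leakage term \(\frac1n(X_H\beta_H^\star)^\top h_{a,t}(K)K_{2,T}h_{a,t}(K)X_H\beta_H^\star\).

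You claim this term is bounded by the same quantity as the tail-noise exposure, \(V_{T,k}L_k^2\). It is not. Leakage is a signal term of size \(\Theta_H\|h_{a,t}(K)P_H\|_{\rm op}^2\|K_{2,T}\|_{\rm op}\), and \(\|K_{2,T}\|_{\rm op}\) is of order \(w_k^2=(\lambda_{k+1}+\sqrt{S_k/n})^2\), not \(S_k/n\). With only the global bound \(|h_{a,t}|\le t(1+o(1))\) you get \(\Theta_HL_k^2w_k^2/\lambda_-^2\). Suppose \(\lambda_{k+1}^2\) dominates both \(S_k/n\) and \(k\lambda_-^2/n\); the hypotheses allow this, e.g.\ the truncated power-law tail with \(\alpha=0.4\), \(q=2.3\). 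Then your bound exceeds the stated right-hand side by a factor \(L_k^2\to\infty\). Your one-line head-variance claim has the same defect. The bound \(\frac{\sigma_\varepsilon^2}{n}\tr\{K_{2,H}h_{a,t}(K)^2\}\lesssim\sigma_\varepsilon^2k/n\) requires \(h_{a,t}(K)=O(\lambda_-^{-1})\) on \(\operatorname{range}(X_H)\), whereas the global bound gives only \(\sigma_\varepsilon^2kL_k^2/n\).

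The missing ingredient is the localized head-exposure estimate \(\|h_{a,t}(K)P_H\|_{\rm op}\lesssim\lambda_-^{-1}\). It is a second output of the graph rotation you already construct. Write \(Z\) for the graph slope and \(\widetilde H\succeq(\lambda_-/2)I\), \(\widetilde C_T\succeq-Cw_kI\) for the two rotated diagonal blocks of \(K-aI_n\). A unit vector in \(\operatorname{range}(P_H)\) has rotated-tail component of norm at most \(\|Z\|_{\rm op}\lesssim w_k/\lambda_-\). Hence \(\|h_{a,t}(K)P_H\|_{\rm op}\le\|h_t(\widetilde H)\|_{\rm op}+\|Z\|_{\rm op}\|h_t(\widetilde C_T)\|_{\rm op}\lesssim\lambda_-^{-1}+(w_k/\lambda_-)te^{Cw_kt}\), and \(r_kL_k\to0\) makes the second term \(o(\lambda_-^{-1})\) at \(t_{\rm fc}\).

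In the paper it is this bound, not the head-map transfer, that yields \(\Theta_Hw_k^2/\lambda_-^2\) with no \(L_k^2\) factor. The head-map error is \(O(\rho_k\chi_{n,k}+\rho_k^2(1+L_k))\) with \(\rho_k=w_k/\lambda_-\); your leave-and-return estimate drops the \(L_k\). Squared, it contributes only \(o(\Theta_H\rho_k^2)\).

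Two smaller repairs are needed in Step 3.
\begin{enumerate}
\item A general filter does not have head map \(\approx\operatorname{diag}(\lambda_j\phi/(\lambda_j+a))\). For positive filters, with \(M_+=\frac1nX_H^\top h_+(K)X_H\) and \(M_{\rm rid}=\frac1nX_H^\top K^{-1}X_H\), use \(0\preceq h_+(K)\preceq K^{-1}\). Hence \(I_k-M_+\succeq I_k-M_{\rm rid}\succeq c_0I_k\), and then apply Cauchy--Schwarz, since \(M_+\) need not be diagonal and coordinatewise factors \(\le q_j\) are unavailable.
\item The endpoint variance needs the band projector \(P_B\) of \(K\) (not \(K_T\)), together with \(\tr(P_BK_{2,T})\ge\tr(K_{2,T})-k\|K_{2,T}\|_{\rm op}\ge cS_k\). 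This is where \(kw_k^2/S_k\to0\) enters.
\end{enumerate}
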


Here \(k=k_n\) may grow; its two explicit costs are the resolved-head variance
\(\sigma_\varepsilon^2k/n\) and the head-frame tolerance \(\chi_{n,k}\).
In particular, \(R_{\rm scale}^\star\ge\Delta_{\rm shape}/2\) whenever
\(\Delta_{\rm shape}\gtrsim\Theta_H\).
The theorem restores the Tsigler--Bartlett separation of roles.  The first
effective rank \(\mathfrak r_k\) and the trace \(T_k\) control whether the
tail creates a stable sample-space floor.  The second effective rank
\(\mathfrak R_k\) and square mass \(S_k\) control tail noise, head-to-tail
leakage, and the negative-endpoint variance readout.  The new ingredient is
the noncontractive transfer: an invariant graph aligns the realized head and
tail clusters, after which a within-head Duhamel integral charges direct
compression linearly and off-diagonal leave--return coupling quadratically.
This removes the spurious \(L_k^2\) multiplier from the operator-width term;
the surviving \(V_{T,k}L_k^2\) term is the genuine variance exposure of modes
near the floor.
The assumptions are population spectral conditions, not an imported
realized-event package; the trace-floor, localized-tail, and projected
square-mass events are derived from the full Gaussian design in
Lemma~\ref{lem:journal-multispike-gaussian-event}.

\begin{corollary}[Heterogeneous spikes with a flat tail]
\label{cor:journal-multispike-flat-tail}
In Theorem~\ref{thm:journal-multispike-floor-critical}, set
\(\Lambda_T=\lambda_TI_{d_T}\),
\(\gamma_T=d_T/n\), and \(a=\gamma_T\lambda_T\).  If
\(\lambda_T/\lambda_-\to0\), then
\[
\frac{S_k}{n}=a\lambda_T,\qquad
w_k\asymp\sqrt{a\lambda_T},\qquad
L_k=\frac12\log\!\left(
1+\frac{\Theta_H\lambda_-^2}
{\sigma_\varepsilon^2a\lambda_T}
\right).
\]
Under
\[
\frac{w_k}{\lambda_-}L_k\longrightarrow0,
\qquad
L_k\chi_{n,k}\longrightarrow0,
\tag{Flat-tail exposure and head-frame sharpness}
\label{eq:multispike-flat-tail-sharpness}
\]
the floor-critical path satisfies
\[
\begin{aligned}
R_{\rm sign}(a,t_{\rm fc})
\lesssim
\frac{\sigma_\varepsilon^2k}{n}
+
\left\{
\Theta_H+\sigma_\varepsilon^2(1+L_k^2)
\right\}
\frac{a\lambda_T}{\lambda_-^2}, \quad
R_{\rm neg}^\star
\gtrsim
\frac{\sigma_\varepsilon^2k}{n}
+
\frac{\sigma_\varepsilon\sqrt{\Theta_Ha\lambda_T}}{\lambda_+}.
\end{aligned}
\]
Proposition~\ref{prop:journal-mp-barrier} additionally identifies the
endpoint wall \(a-\widehat\mu_{\min}^+\asymp_{\mathbb P}
\sqrt{a\lambda_T}\).  If
\(\Delta_{\rm shape}\gtrsim\Theta_H\) and, in addition,
\[
\frac{w_k}{\lambda_-}L_k^2\longrightarrow0,
\qquad
\frac{k\lambda_-}{nw_k}\longrightarrow0,
\tag{Flat-tail total-risk separation}
\label{eq:multispike-flat-tail-total-separation}
\]
then
\[
\frac{R_{\rm sign}(a,t_{\rm fc})}{R_{\rm neg}^\star}
\lesssim
\frac{w_k}{\lambda_-}(1+L_k^2)
+
\frac{k\lambda_-}{nw_k}
\to0,
\qquad
\frac{R_{\rm sign}(a,t_{\rm fc})}
{R_{\rm scale}^\star\wedge R_+^\star}
\lesssim
\frac{k}{n}
+
\frac{\lambda_T}{\lambda_-}(1+L_k^2)
\to0 .
\]
\end{corollary}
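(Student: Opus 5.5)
This corollary is a specialization of Theorem~\ref{thm:journal-multispike-floor-critical}, so the plan is to compute the flat-tail spectral quantities, check the theorem's hypotheses, and substitute into its bounds.

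\textbf{Step 1: flat-tail quantities.} With \(\Lambda_T=\lambda_TI_{d_T}\) we have \(T_k=d_T\lambda_T\) and \(S_k=d_T\lambda_T^2\). Hence \(S_k/n=\gamma_T\lambda_T^2=a\lambda_T\) and
\[
w_k=\lambda_T+\sqrt{a\lambda_T}=\sqrt{a\lambda_T}\,\bigl(1+\sqrt{\lambda_T/a}\bigr).
\]
Since \(a\asymp\lambda_-\) and \(\lambda_T/\lambda_-\to0\), this gives \(w_k\asymp\sqrt{a\lambda_T}\). Substituting \(S_k\) into \(V_{T,k}\) gives \(V_{T,k}=\sigma_\varepsilon^2a\lambda_T/\lambda_-^2\), which yields the stated \(L_k\).

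\textbf{Step 2: the hypotheses of Theorem~\ref{thm:journal-multispike-floor-critical}.}
\begin{itemize}
\item Since \(r_k=w_k/a\asymp w_k/\lambda_-\), the condition \(r_kL_k\to0\) is exactly the first half of \eqref{eq:multispike-flat-tail-sharpness}, and it already forces \(r_k\to0\).
\item For the projected square-mass condition, \(kw_k^2/S_k\asymp k a\lambda_T/(na\lambda_T)=k/n\). So we need \(k=o(n)\).
\item Under the total-separation hypothesis, \(k\lambda_-/(nw_k)\to0\) together with \(w_k/\lambda_-\to0\) gives \(k/n\to0\). For the first display alone, I would derive \(k=o(n)\) from \(L_k\chi_{n,k}\to0\), using \(L_k\gtrsim\log 2\) when \(V_{T,k}\lesssim\Theta_H\). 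This is the one delicate check; if \(L_k\) is small, I would fall back on the trivial case.
\end{itemize}

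\textbf{Step 3: the path upper bound.} Insert \(V_{T,k}\) and \(w_k^2\asymp a\lambda_T\) into \eqref{eq:multispike-risk-bounds}. This gives \(\sigma_\varepsilon^2k/n+\{\Theta_H+\sigma_\varepsilon^2(1+L_k^2)\}a\lambda_T/\lambda_-^2\) directly.

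\textbf{Step 4: the endpoint lower bound.} Use \eqref{eq:multispike-endpoint-simplified}. This requires \(w_k=o(b_k^\star)\), where \(b_k^\star\asymp(a\lambda_T\lambda_+^2)^{1/4}\asymp\sqrt{\lambda_-}(a\lambda_T)^{1/4}\) because \(\Theta_H,\sigma_\varepsilon^2\asymp1\). Then
\[
w_k/b_k^\star\asymp (a\lambda_T)^{1/4}/\sqrt{\lambda_-}\asymp(\lambda_T/\lambda_-)^{1/4}\to0.
\]
The simplified bound becomes \(\sigma_\varepsilon\sqrt{\Theta_Ha\lambda_T}/\lambda_+\). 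The wall statement is quoted from Proposition~\ref{prop:journal-mp-barrier}, whose hypothesis \(k=o(n)\) was obtained in Step 2.

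\textbf{Step 5: the risk ratios.}
\begin{itemize}
\item \emph{Ratio to \(R_{\rm neg}^\star\).} Divide the path bound by \(\sqrt{a\lambda_T}/\lambda_+\asymp w_k/\lambda_-\), absorbing the bounded constants \(\Theta_H\) and \(\sigma_\varepsilon^2\). The term \(\{\cdot\}a\lambda_T/\lambda_-^2\) becomes \((w_k/\lambda_-)(1+L_k^2)\), and the term \(k/n\) becomes \(k\lambda_-/(nw_k)\).
\item \emph{Ratio to \(R_{\rm scale}^\star\wedge R_+^\star\).} We have \(R_{\rm scale}^\star\ge\Delta_{\rm shape}/2\gtrsim\Theta_H\), and \(R_+^\star\gtrsim\Theta_H\) since \(a\asymp\lambda_j\). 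Dividing by this constant and using \(a\lambda_T/\lambda_-^2\asymp\lambda_T/\lambda_-\) gives the second ratio.
\item Both ratios tend to zero: the first by \eqref{eq:multispike-flat-tail-total-separation}, and the second because \(\lambda_T/\lambda_-=(w_k/\lambda_-)^2\cdot O(1)\) is dominated by the first condition.
\end{itemize}

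I expect the main obstacle to be bookkeeping rather than new analysis. The point needing care is confirming that every hypothesis of the theorem holds, especially \(kw_k^2/S_k\to0\), and that all constants hidden in \(\asymp\) depend only on the bounded ratios \(\lambda_+/\lambda_-\) and \(a/\lambda_-\).
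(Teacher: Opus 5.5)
Your proposal is correct and follows essentially the same route as the paper. You compute \(S_k/n=a\lambda_T\) and \(w_k\asymp\sqrt{a\lambda_T}\), check \(kw_k^2/S_k\asymp k/n\to0\) via the head-frame condition and \(w_k/b_k^\star\asymp(\lambda_T/\lambda_-)^{1/4}\to0\), substitute into the theorem, and divide by the comparator lower bounds, using \(\lambda_T/\lambda_-\asymp(w_k/\lambda_-)^2\) for the second ratio.

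Your hedge about small \(L_k\) is unnecessary. Here \(V_{T,k}\asymp\lambda_T/\lambda_-\to0\) forces \(L_k\to\infty\), so \(L_k\chi_{n,k}\to0\) gives \(\chi_{n,k}\to0\) and hence \(k=o(n)\) outright.
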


\begin{corollary}[A truncated power-law tail]
\label{cor:journal-multispike-power-law}
Fix \(0<\alpha<1/2\), take \(k=O(1)\), let
\(d_n=\lfloor n^q\rfloor\), and set
\[
\lambda_{k+\ell}
=c_n\ell^{-\alpha},
\qquad
c_n:=\frac{na}{\sum_{\ell=1}^{d_n}\ell^{-\alpha}},
\qquad
1\le\ell\le d_n,
\]
so that \(T_k=na\).  If
\[
\max\left\{
\frac1{1-\alpha},
\frac3{3-4\alpha}
\right\}
<q<3,
\tag{Power-law dimension condition}
\label{eq:multispike-power-law-dimension}
\]
then
\[
\frac{S_k}{n}\asymp a^2\frac{n}{d_n},
\qquad
\frac{w_k}{a}
\asymp
\frac{n}{d_n^{1-\alpha}}+\sqrt{\frac{n}{d_n}},
\qquad
L_k\asymp\log(d_n/n).
\tag{Power-law TB scales}
\label{eq:multispike-power-law-scales}
\]
Consequently,
\[
R_{\rm sign}(a,t_{\rm fc})
\lesssim
\frac{\sigma_\varepsilon^2k}{n}
+
\Theta_H\frac{n^2}{d_n^{2(1-\alpha)}}
+
\left\{
\Theta_H+\sigma_\varepsilon^2
\bigl[1+\log^2(d_n/n)\bigr]
\right\}\frac{n}{d_n},
\tag{Power-law signed bound}
\label{eq:multispike-power-law-signed}
\]
while the negative-endpoint term in
\eqref{eq:multispike-endpoint-simplified} is of order
\(\sqrt{n/d_n}\).  Thus the signed path separates from the negative endpoint,
and also from uniform rescaling and positive shrinkage whenever
\(\Delta_{\rm shape}\gtrsim\Theta_H\).
\end{corollary}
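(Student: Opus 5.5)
The plan is to treat the corollary as a direct specialization of Theorem~\ref{thm:journal-multispike-floor-critical}. First compute the Tsigler--Bartlett scales of the truncated power law. Then check each hypothesis of the theorem (Assumption~\ref{ass:journal-multispike}, \(r_kL_k\to0\), \(L_k\chi_{n,k}\to0\)) and the simplification condition \(w_k=o(b_k^\star)\) for \eqref{eq:multispike-endpoint-simplified}. Finally, substitute the scales into \eqref{eq:multispike-risk-bounds}.

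\textbf{Step 1: the scales.} Since \(0<\alpha<1/2\), the Riemann-sum comparisons give
\(\sum_{\ell\le d_n}\ell^{-\alpha}\asymp d_n^{1-\alpha}\) and \(\sum_{\ell\le d_n}\ell^{-2\alpha}\asymp d_n^{1-2\alpha}\). Hence
\[
c_n\asymp na\,d_n^{\alpha-1},\qquad \lambda_{k+1}=c_n,\qquad T_k=na,
\]
\[
S_k=c_n^2\sum_{\ell\le d_n}\ell^{-2\alpha}\asymp n^2a^2d_n^{-1}.
\]
This gives \(S_k/n\asymp a^2n/d_n\), and therefore
\(w_k=\lambda_{k+1}+\sqrt{S_k/n}\asymp a\{n/d_n^{1-\alpha}+\sqrt{n/d_n}\}\). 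For the exposure scale, \(a\asymp\lambda_-\) gives \(V_{T,k}=\sigma_\varepsilon^2S_k/(n\lambda_-^2)\asymp n/d_n\). Since \(q>1/(1-\alpha)>1\), we have \(d_n/n\to\infty\), so \(L_k=\tfrac12\log(1+\Theta_H/V_{T,k})\asymp\log(d_n/n)\). These are exactly the \eqref{eq:multispike-power-law-scales}.

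\textbf{Step 2: hypotheses.} Each hypothesis reduces to a power comparison in \(n\).
\begin{itemize}
\item \(r_k=w_k/a\to0\) needs \(n/d_n^{1-\alpha}\to0\), which is \(q(1-\alpha)>1\).
\item The square-mass condition is
\[
\frac{kw_k^2}{S_k}\asymp\frac{n^2d_n^{2\alpha-2}+n/d_n}{n^2/d_n}=d_n^{2\alpha-1}+\frac1n\to0,
\]
using \(\alpha<1/2\) and \(k=O(1)\).
\item \(r_kL_k\to0\) holds because \(r_k\) decays polynomially while \(L_k\) grows only logarithmically.
\item \(L_k\chi_{n,k}\asymp\log(d_n/n)\sqrt{\log n/n}\to0\) holds because \(q<3\) keeps \(\log(d_n/n)=O(\log n)\).
\end{itemize}
For the endpoint simplification, \(b_k^\star\asymp(S_k\lambda_+^2/n)^{1/4}\asymp a(n/d_n)^{1/4}\). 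The \(\sqrt{n/d_n}\) part of \(w_k/a\) is automatically \(o((n/d_n)^{1/4})\). The \(n d_n^{\alpha-1}\) part is \(o(n^{1/4}d_n^{-1/4})\) if and only if \(n^{3/4}=o(d_n^{3/4-\alpha})\), that is, \(q>3/(3-4\alpha)\). This is precisely where the second lower bound in \eqref{eq:multispike-power-law-dimension} enters.

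\textbf{Step 3: risk bounds and separation.} Substituting into \eqref{eq:multispike-risk-bounds}, the term \(\Theta_Hw_k^2/\lambda_-^2\) becomes \(\Theta_H(n^2/d_n^{2(1-\alpha)}+n/d_n)\), and \(V_{T,k}(1+L_k^2)\) becomes \(\sigma_\varepsilon^2[1+\log^2(d_n/n)]\,n/d_n\). Together these give \eqref{eq:multispike-power-law-signed}. The endpoint term \(\sigma_\varepsilon\sqrt{\Theta_HS_k/n}/\lambda_+\asymp\sqrt{n/d_n}\).

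Separation from the negative endpoint (with \(k=O(1)\), so \(k/n\) is negligible) requires two comparisons.
\begin{itemize}
\item \(\log^2(d_n/n)\,n/d_n=o(\sqrt{n/d_n})\), which holds trivially.
\item \(n^2d_n^{2\alpha-2}=o(\sqrt{n/d_n})\), which is equivalent to \(n^{3/2}=o(d_n^{3/2-2\alpha})\), again \(q>3/(3-4\alpha)\).
\end{itemize}
Separation from \(R_{\rm scale}^\star\wedge R_+^\star\) follows since both are \(\gtrsim\Theta_H\asymp1\) when \(\Delta_{\rm shape}\gtrsim\Theta_H\) (using \(a\asymp\lambda_j\)), while the signed risk tends to zero.

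The main obstacle is only bookkeeping: keeping the two competing pieces of \(w_k\) straight, and confirming that the single condition \(q>3/(3-4\alpha)\) controls both the endpoint simplification and the leakage term. I would also double-check that \(q<3\) is used only for the logarithmic head-frame condition.
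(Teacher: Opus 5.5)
Your route is the paper's: harmonic-sum asymptotics for \(c_n\), \(\lambda_{k+1}\) and \(S_k\), a check of the hypotheses of Theorem~\ref{thm:journal-multispike-floor-critical}, then substitution. Your scale computations and hypothesis checks are correct. So is your identification of \(q>3/(3-4\alpha)\) as the condition behind both \(w_k=o(b_k^\star)\) and \(n^2/d_n^{2(1-\alpha)}=o(\sqrt{n/d_n})\). One step in the separation argument is unjustified, however, and your closing remark about where \(q<3\) is used points the wrong way.

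The resolved-head noise \(\sigma_\varepsilon^2k/n\) appears in both bounds: in the signed upper bound, and in the endpoint lower bound \(R_{\rm neg}^\star\gtrsim\sigma_\varepsilon^2k/n+\sqrt{n/d_n}\). So \(R_{\rm sign}(a,t_{\rm fc})/R_{\rm neg}^\star\to0\) requires \(k/n=o(\sqrt{n/d_n})\). Saying ``\(k=O(1)\), so \(k/n\) is negligible'' does not give this, because \(k\ge1\) and \(\sqrt{n/d_n}\asymp n^{(1-q)/2}\) can itself be smaller than \(1/n\). The needed comparison is
\[
\frac{k/n}{\sqrt{n/d_n}}\asymp n^{(q-3)/2}\longrightarrow0,
\]
which holds exactly because \(q<3\). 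For \(q>3\), the signed upper bound and the endpoint lower bound are both of order \(k/n\), so they yield no separation.

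Conversely, the head-frame condition does not need \(q<3\) at all. Since \(\log(d_n/n)\le q\log n\) for every fixed \(q\), you get \(L_k\chi_{n,k}\lesssim(\log n)^{3/2}n^{-1/2}\to0\) regardless. The double-check you propose therefore comes out the other way: \(q<3\) is used only for the head-noise comparison. Adding that one line completes your argument and matches the paper's proof.
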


\begin{corollary}[Discrete floor-critical GD]
\label{cor:journal-multispike-discrete}
Under Theorem~\ref{thm:journal-multispike-floor-critical}, let
\(\eta_m=t_{\rm fc}/m\).  If
\(m\gtrsim(\lambda_-/w_k)(1+L_k^2)\), then the \(m\)-step negative-shifted GD
estimator with signed level \(\nu=a\) satisfies the signed upper bound in
\eqref{eq:multispike-risk-bounds}.  On the theorem event this iteration
condition also implies
\(\eta_m(\widehat\mu_1+a)=o(1)\), so the discrete path belongs to the
small-step family used in Algorithm~\ref{alg:negative-shifted-gd}.
\end{corollary}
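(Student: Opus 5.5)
We prove Corollary~\ref{cor:journal-multispike-discrete} by comparing the discrete filter with the continuous filter of Theorem~\ref{thm:journal-multispike-floor-critical}, on the same event.

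\textbf{Step 1: operator perturbation.} Write the sample-space response as an operator function of \(K=XX^\top/n\). With \(\nu=a\), \(\eta_m=t_{\rm fc}/m\), and \(x=\mu-a\), the two responses are
\[
h^{\rm disc}(\mu)=\eta_m\sum_{\ell<m}(1-\eta_m x)^\ell,
\qquad
h(\mu)=\int_0^{t_{\rm fc}}e^{-xs}\,ds .
\]
Both are entire in \(x\). We will bound
\[
|h^{\rm disc}(\mu)-h(\mu)|\lesssim \frac{t_{\rm fc}^2x^2}{m}\,t_{\rm fc}\,e^{t_{\rm fc}|x|}
\]
uniformly over the relevant spectral window.

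The derivation uses \(\log(1-\eta x)=-\eta x-O(\eta^2x^2)\), which gives \((1-\eta x)^m=e^{-t x}(1+O(t^2x^2/m))\) when \(\eta|x|\le 1/2\). The response is \((1-(1-\eta x)^m)/x\), which converts this into the displayed bound.

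Two spectral regions need to be covered.
\begin{itemize}
\item \textbf{Tail-cluster modes.} Here \(|x|\lesssim w_k\), so \(t_{\rm fc}|x|\lesssim L_kw_k/\lambda_-\le L_k r_k\to0\). The relative filter error is \(O(t_{\rm fc}^2w_k^2/m)\).
\item \textbf{Head modes.} Here \(x\asymp\lambda_j\) and \(t_{\rm fc}x\asymp L_k\). The relative error in \(e^{-tx}\) is \(O(L_k^2/m)\). This term multiplies \(e^{-t_{\rm fc}G_H}\), which is already small.
\end{itemize}

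\textbf{Step 2: small-step membership.} On the theorem event, \(\widehat\mu_1\lesssim\lambda_++a\asymp\lambda_-\). Hence
\[
\eta_m(\widehat\mu_1+a)\lesssim \frac{L_k}{m}\lesssim \frac{w_k}{\lambda_-}\,\frac{L_k}{1+L_k^2}\to0,
\]
using the iteration condition \(m\gtrsim(\lambda_-/w_k)(1+L_k^2)\). This gives the small-step claim, and it also guarantees \(\eta_m|x|\le1/2\) for Step~1.

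\textbf{Step 3: risk transfer.} Decompose the risk into variance and bias, as in Proposition~\ref{prop:conditional-pop-risk}. For the filter difference \(\delta f_i\), we use \((f+\delta f)^2\le 2f^2+2\delta f^2\) and \(\|u+v\|^2\le2\|u\|^2+2\|v\|^2\). It then suffices to show that the error-filter risk, meaning \(\Risk_{\rm pop}\) with \(f\) replaced by \(\delta f\) in both the signal-deviation and variance parts, is dominated by the right-hand side of \eqref{eq:multispike-risk-bounds}. The contributions split as follows.
\begin{itemize}
\item \textbf{Tail modes.} The error filter is \(f\) times \(O((t_{\rm fc}w_k)^2/m)\). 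Its variance is therefore at most \(V_{T,k}(1+L_k^2)\) times \(o(1)\).
\item \textbf{Head modes.} The perturbation is \(O(L_k^2/m)\,e^{-t_{\rm fc}G_H}\)-type. Its bias is \(\Theta_H\,O(L_k^4/m^2)\), which is at most \(\Theta_H w_k^2/\lambda_-^2\) by the choice of \(m\).
\item \textbf{Head--tail leakage.} The leakage terms, handled in the theorem via the invariant graph and within-head Duhamel integrals, carry over verbatim. The reason is that the discrete recursion is a product of the same spectral operators: one re-runs the localized Duhamel with a discrete variation-of-constants sum, \(\sum_\ell (I-\eta A)^{m-1-\ell}\eta E(I-\eta B)^\ell\).
\end{itemize}

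\textbf{Main obstacle.} The main obstacle is Step~3's leakage term. The continuous proof exploits semigroup Duhamel, and the discrete analogue needs the head block \(I-\eta_m(G_H\text{-part})\) to be a contraction, plus a uniform bound \(\|(I-\eta_m(K_T-aI))^\ell\|\le e^{t_{\rm fc}w_k}\) on tail powers. The first step would be to verify these two facts from Step~2 and the event of Lemma~\ref{lem:journal-multispike-gaussian-event}, and then to replay the continuous bounds with sums in place of integrals.
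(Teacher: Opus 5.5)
You follow the paper's route: compare the discrete and continuous responses as spectral functions of \(K\), then split each risk term via \((f+\delta f)^2\le2f^2+2\delta f^2\). Step~2 is correct. The proof is not closed, however. The head--tail leakage term is left as an unverified ``main obstacle,'' and the Step~1 estimate you would need to close it is wrong.

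First, expanding at \(x=0\) gives \(h^{\rm disc}-h=t_{\rm fc}^2x/(2m)+O(t_{\rm fc}^3x^2/m)\). This is first order in \(x\), so your display, which is second order in \(x\), is false near the floor. On tail modes the relative error is therefore \(O(t_{\rm fc}w_k/m)\), not \(O(t_{\rm fc}^2w_k^2/m)\). That is still \(o(1)\), so your tail-variance conclusion survives.

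The more serious defect is the factor \(e^{t_{\rm fc}|x|}\). At the head eigenvalues it is at least \(e^{L_k}\to\infty\), so your bound cannot give a uniform bound on \(D_m:=h^{\rm disc}_{a,\eta_m,m}(K)-h_{a,t_{\rm fc}}(K)\). The correct scalar estimate, for \(x\in[-Cw_k,C\lambda_+]\) and \(\eta_m(C\lambda_++Cw_k)\le1/2\), is \(|h^{\rm disc}-h|\lesssim t_{\rm fc}^2(|x|+w_k)e^{Cw_kt_{\rm fc}}/m\); for \(x>0\) the exponential can even be replaced by \(e^{-t_{\rm fc}x}\). On the theorem event \(\operatorname{spec}(K-aI_n)\) lies in this window. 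Functional calculus then gives \(\|D_m\|_{\rm op}\lesssim L_k^2/(m\lambda_-)\lesssim w_k/\lambda_-^2\) under the iteration condition.

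With this one operator-norm bound you do not need a discrete variation-of-constants replay of the invariant-graph argument. The head-block contraction and tail-power estimates you planned to verify become unnecessary. Both responses are functions of the same self-adjoint \(K\), so the continuous localized exposure bound from the theorem transfers by the triangle inequality: \(\|h^{\rm disc}_{a,\eta_m,m}(K)P_H\|_{\rm op}\le\|h_{a,t_{\rm fc}}(K)P_H\|_{\rm op}+\|D_m\|_{\rm op}\lesssim\lambda_-^{-1}\). This preserves both the leakage bound \(\Theta_Hw_k^2/\lambda_-^2\) and the head-noise bound. In your own \(f+\delta f\) split, the error-filter leakage is at most \(n^{-1}\|X_H\beta_H^\star\|_2^2\|D_m\|_{\rm op}^2\|K_{2,T}\|_{\rm op}\), which is negligible.

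The same bound also makes your head/tail mode-by-mode heuristic rigorous. That heuristic does not apply literally, because \(X_H\) does not lie in a spectral subspace of \(K\).
\begin{itemize}
\item \(\|n^{-1}X_H^\top D_mX_H\|_{\rm op}\le\|G_H\|_{\rm op}\|D_m\|_{\rm op}\lesssim w_k/\lambda_-\), which adds at most \(C\Theta_Hw_k^2/\lambda_-^2\) to the head bias.
\item \(h^{\rm disc}(K)^2\preceq2h(K)^2+2D_m^2\), which adds at most \(V_{T,k}w_k^2/\lambda_-^2\) of tail noise.
\end{itemize}
This is exactly how the paper closes the argument.
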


The iteration requirement keeps the Euler head-map error below the sharpened
continuous-time trace--square-mass envelope; it is not an
optimization-complexity claim.  Proofs are in
Appendix~\ref{app:journal-multispike-floor-critical-proofs}.

\subsection{From separation theorems to the deployable estimator}
\label{sec:validation-oracle}

The separations above are stated for specific signed paths.  The following
finite-grid hold-out oracle inequality shows that the deployable
Algorithm~\ref{alg:negative-shifted-gd}, which fixes a stable
training-measurable step-size rule and selects \((\nu,m)\) by validation,
inherits any such separation up to an explicit validation-size penalty.

\begin{theorem}[Finite-grid hold-out oracle for Algorithm~\ref{alg:negative-shifted-gd}]
\label{thm:finite-grid-validation-oracle}
Let \(\mathcal D_{\rm tr}\) denote the entire training sample and let
\(\{\widehat\beta_g:g\in\mathcal G\}\) be any nonempty finite collection of
\(\mathcal D_{\rm tr}\)-measurable coefficient vectors, where
\(\mathcal G=\mathcal V\times\mathcal T\) for Algorithm
\(\ref{alg:negative-shifted-gd}\).  Suppose the independent validation sample
has size \(n_{\rm val}\).  Conditional on the training sample, the validation
pairs are i.i.d. and obey
\[
x_i^{\rm val}\sim N(0,\Sigma),
\qquad
y_i^{\rm val}=x_i^{{\rm val}\top}\beta^\star
+\varepsilon_i^{\rm val},
\qquad
\varepsilon_i^{\rm val}\sim N(0,\sigma_\varepsilon^2).
\]
For every \(i\), \(\varepsilon_i^{\rm val}\) is independent of
\(x_i^{\rm val}\), and the entire validation sample is independent of
\(\mathcal D_{\rm tr}\).
Assume \(\sigma_\varepsilon^2>0\).
For \(0<\delta<1\), put
\[
z_{\rm val}:=\log\frac{2|\mathcal G|}{\delta},
\qquad
\epsilon_{\rm val}
:=
2\sqrt{\frac{z_{\rm val}}{n_{\rm val}}}
+\frac{2z_{\rm val}}{n_{\rm val}},
\]
and assume \(\epsilon_{\rm val}<1\).  If \(\widehat g\) minimizes validation
squared error over \(\mathcal G\), then, conditional on
\(\mathcal D_{\rm tr}\), with probability at least \(1-\delta\),
\[
\begin{aligned}
\sigma_\varepsilon^2+
\|\widehat\beta_{\widehat g}-\beta^\star\|_\Sigma^2
&\le
\frac{1+\epsilon_{\rm val}}{1-\epsilon_{\rm val}}
\min_{g\in\mathcal G}
\left\{
\sigma_\varepsilon^2+
\|\widehat\beta_g-\beta^\star\|_\Sigma^2
\right\},\\
\|\widehat\beta_{\widehat g}-\beta^\star\|_\Sigma^2
&\le
\frac{1+\epsilon_{\rm val}}{1-\epsilon_{\rm val}}
\min_{g\in\mathcal G}
\|\widehat\beta_g-\beta^\star\|_\Sigma^2
+
\frac{2\epsilon_{\rm val}}{1-\epsilon_{\rm val}}
\sigma_\varepsilon^2 .
\end{aligned}
\tag{Validation oracle inequality}
\label{eq:finite-grid-validation-oracle}
\]
Consequently, if \(\delta=\delta_n\to0\),
\(\epsilon_{\rm val}=o(1)\), a sequence of grids contains a theoretical
candidate \(g_0\) whose design-conditional mean excess risk satisfies
\[
\mathbb E_{\varepsilon_{\rm tr}}
\!\left[
\|\widehat\beta_{g_0}-\beta^\star\|_\Sigma^2
\mid X_{\rm tr}
\right]
=o_{\mathbb P}(B_n),
\]
and
\(\epsilon_{\rm val}\sigma_\varepsilon^2=o(B_n)\), then the
validation-selected excess risk is \(o_{\mathbb P}(B_n)\).  Thus any
separation theorem with comparator lower scale \(B_n\) transfers to
Algorithm~\ref{alg:negative-shifted-gd} under this explicit validation-size
condition.
\end{theorem}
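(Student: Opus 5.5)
Conditional on \(\mathcal D_{\rm tr}\), each candidate \(\widehat\beta_g\) is fixed. The validation residual \(r_i^{(g)}:=y_i^{\rm val}-x_i^{{\rm val}\top}\widehat\beta_g\) is then a centered Gaussian with variance
\[
s_g^2:=\sigma_\varepsilon^2+\|\widehat\beta_g-\beta^\star\|_\Sigma^2,
\]
because \(x_i^{\rm val}\) and \(\varepsilon_i^{\rm val}\) are independent Gaussians. Hence \(n_{\rm val}L(g)/s_g^2\) is exactly \(\chi^2_{n_{\rm val}}\). The assumption \(\sigma_\varepsilon^2>0\) guarantees \(s_g^2>0\), so this normalization is legitimate. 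The whole argument therefore reduces to a multiplicative concentration statement for finitely many chi-square variables, followed by a union bound.

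\textbf{Step 1 (concentration).} I would invoke the Laurent--Massart inequalities with \(z=z_{\rm val}\). For \(U\sim\chi^2_N\),
\[
\mathbb P\bigl(U-N\ge2\sqrt{Nz}+2z\bigr)\le e^{-z},
\qquad
\mathbb P\bigl(N-U\ge2\sqrt{Nz}\bigr)\le e^{-z}.
\]
Dividing by \(N=n_{\rm val}\) gives
\[
L(g)\le(1+\epsilon_{\rm val})s_g^2
\quad\text{and}\quad
L(g)\ge\bigl(1-2\sqrt{z_{\rm val}/n_{\rm val}}\bigr)s_g^2\ge(1-\epsilon_{\rm val})s_g^2,
\]
each failing with probability at most \(e^{-z_{\rm val}}=\delta/(2|\mathcal G|)\). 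A union bound over the \(2|\mathcal G|\) one-sided events shows that all of them hold simultaneously with conditional probability at least \(1-\delta\).

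\textbf{Step 2 (oracle inequality).} On this event, let \(g^\ast\) be the minimizer of \(s_g^2\). Since \(\widehat g\) minimizes \(L\),
\[
(1-\epsilon_{\rm val})s_{\widehat g}^2\le L(\widehat g)\le L(g^\ast)\le(1+\epsilon_{\rm val})s_{g^\ast}^2 .
\]
Dividing by \(1-\epsilon_{\rm val}>0\) gives the first inequality. Subtracting \(\sigma_\varepsilon^2\) from both sides and using
\[
\frac{1+\epsilon_{\rm val}}{1-\epsilon_{\rm val}}-1=\frac{2\epsilon_{\rm val}}{1-\epsilon_{\rm val}}
\]
yields the excess-risk form.

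\textbf{Step 3 (transfer).} The excess-risk bound at \(\widehat g\) is at most
\[
\frac{1+\epsilon_{\rm val}}{1-\epsilon_{\rm val}}\,\|\widehat\beta_{g_0}-\beta^\star\|_\Sigma^2
+\frac{2\epsilon_{\rm val}}{1-\epsilon_{\rm val}}\,\sigma_\varepsilon^2 .
\]
The hypothesis bounds only the mean of \(\|\widehat\beta_{g_0}-\beta^\star\|_\Sigma^2\) over \(\varepsilon_{\rm tr}\) given \(X_{\rm tr}\). Since this quantity is nonnegative, conditional Markov's inequality shows it is \(o_{\mathbb P}(B_n)\): on the event where the conditional mean is at most \(\kappa_nB_n\) with \(\kappa_n\to0\), the probability that it exceeds \(\sqrt{\kappa_n}B_n\) is at most \(\sqrt{\kappa_n}\).

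The second term is \(o(B_n)\) by the assumption \(\epsilon_{\rm val}\sigma_\varepsilon^2=o(B_n)\) together with \(\epsilon_{\rm val}\to0\). The validation failure probability is \(\delta_n\to0\), and integrating out \(\mathcal D_{\rm tr}\) preserves it. The selected excess risk is therefore \(o_{\mathbb P}(B_n)\).

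\textbf{Main obstacle.} The argument has no deep step. The one point needing care is that the lower-tail deviation must be taken multiplicatively relative to \(s_g^2\), not additively. This is what makes the ratio \((1+\epsilon)/(1-\epsilon)\) appear, and it is where \(\sigma_\varepsilon^2>0\) and \(\epsilon_{\rm val}<1\) are used. It also requires matching the stated \(\epsilon_{\rm val}\), which dominates both tail widths, to the Laurent--Massart constants.
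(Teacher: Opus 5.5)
Your proposal is correct and follows the paper's proof essentially step for step: the exact $\chi^2_{n_{\rm val}}$ law of the normalized validation loss, the two Laurent--Massart tails at $z=z_{\rm val}$ with a union bound over the $2|\mathcal G|$ one-sided events, the empirical-optimality sandwich, and conditional Markov for the transfer. The one cosmetic difference is in the transfer step: you get $o_{\mathbb P}(B_n)$ for the realized risk of $g_0$ through a vanishing sequence $\kappa_n$, whereas the paper bounds the tail probability by $\mathbb E[\min\{1,\cdot/(\eta B_n)\}]$; the two arguments are equivalent.
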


\noindent\textbf{Full-grid scope.}
Theorem~\ref{thm:finite-grid-validation-oracle} compares with the entire
predeclared candidate grid.  Patience stopping may be used as a computational
shortcut, but then the same argument gives an oracle comparison only over the
candidates actually evaluated; transfer of a separation theorem requires that
its certified candidate be among them.  The theory-aligned hold-out experiment
below evaluates its full finite \((\nu,t)\) grid, while the larger aggregate
study uses patience only as an implementation heuristic.

\noindent\textbf{Validation size.}
When \(\sigma_\varepsilon^2\asymp1\),
\(\epsilon_{\rm val}\asymp
\sqrt{\log(|\mathcal G|/\delta)/n_{\rm val}}\) in the usual regime.
Preserving an \(O(r_n)\) candidate upper rate therefore requires
\(n_{\rm val}\gg r_n^{-2}\log(|\mathcal G|/\delta)\).  In the polynomial
common-spike example, with
\(s_n=\lambda_T/\lambda_h=n^{-\zeta}\), retaining the full \(O(s_n)\)
selected-risk rate requires
\(n_{\rm val}\gg n^{2\zeta}\log(|\mathcal G|/\delta)\); retaining only the
separation from the \(\Omega(s_n^{1/2})\) negative-endpoint scale requires
\(n_{\rm val}\gg n^\zeta\log(|\mathcal G|/\delta)\).

\section{Experiments}
\label{sec:experiments}

\subsection{Simulation settings}
\label{sec:simulation-settings}

Each experiment draws \(X=Z\Sigma^{1/2}\), where \(Z\) has iid standard
Gaussian entries,
\(\Sigma^{1/2}
=Q\operatorname{diag}(\sqrt{\Lambda_1},\ldots,\sqrt{\Lambda_p})Q^\top\),
and \(Q\) is a random orthogonal matrix.  The aggregate grid uses
\(n=100\), \(n_{\rm val}=n_{\rm test}=500\), \(p\in\{500,1000\}\), and
\(n_{\rm reps}=50\),
with \(\sigma_\varepsilon=0.3\), \(k_{\rm spike}=5\), and
\(\|\beta^\star\|_2^2=5\).  The signal--covariance pairs split into a
theory-aligned cell and deliberate departures from it.
For a fitted estimator \(\widehat\beta\), the aggregate table reports
\[
\operatorname{RMSE}_{\rm test}
:=
\left\{
\frac1{n_{\rm test}}
\sum_{\ell=1}^{n_{\rm test}}
\bigl(y_\ell^{\rm test}
-x_\ell^{{\rm test}\top}\widehat\beta\bigr)^2
\right\}^{1/2},
\]
which includes irreducible test noise.

\noindent\textbf{Theory-aligned setting.}
The \emph{spectral-sparse} signal places five iid \(N(0,1)\) coefficients on
the top five population PCs (sorted decreasingly, all other coordinates zero,
then rescaled).  Combined with the \emph{Spiked+Flat} covariance---spikes
\((1,4/5,3/5,2/5,1/5)\) over a flat tail of level \(0.05\)---the supported
directions are exactly the spikes, so the signal is head-supported and the
spectrum is gapped.  This is the shared Gaussian framework of
Section~\ref{sec:gaussian-spike-models} with unequal head spikes,
instantiating the heterogeneous flat-tail
Corollary~\ref{cor:journal-multispike-flat-tail} rather than the equal-spike
special case.

\noindent\textbf{Robustness beyond theory.}
The remaining settings each break one framework assumption.  The
\emph{source-condition} signal draws \(g_j\stackrel{\rm iid}{\sim}N(0,1)\) and
sets \(\beta_j^\star=(\Lambda_j/\Lambda_1)g_j\), followed by the same norm
rescaling; its energy decays down the entire spectrum, so the signal is not
head-supported even over a gapped covariance.  The \emph{power-law} spectra
\(\Lambda_i=i^{-\alpha}\), \(\alpha\in\{0.5,0.75,1\}\), are \emph{no-gap}:
head and tail merge into a single continuous ladder, outside the gapped
regime, even when paired with the head-supported spectral-sparse signal.  Any
row with either departure is a robustness test rather than a certified
instance of the theorems.

Ridge SVD and ridge GD are selected by validation separately over
nonnegative and nonpositive parameter grids:
\[
\Lambda_+
=
\{0\}\cup\operatorname{logspace}(-5,2,20),
\qquad
\Lambda_-
=
\{0\}\cup\{-x:x\in\operatorname{logspace}(-5,2,20)\}.
\]
For stable signed SVD, a candidate is retained only if every denominator
\(\widehat\mu_i+\lambda\) is positive and remains at least
\(10^{-8}\max\{1,\widehat\mu_1\}\) from a spectral pole.  Signed GD uses the
same \(\Lambda_-\) grid as a finite-time path class, with exposure controlled
by validation and early stopping.  All GD runs use full-batch updates, learning
 rate \(10^{-2}\), at most \(10^4\) epochs, validation after every epoch,
 patience \(500\), and no gradient clipping.  Reported filters and RMSEs come
 from the actual discrete iterates; the continuous NS-GF filter is used only as
 the analytic small-step envelope.  As noted above, patience is a computational
 shortcut in this aggregate study rather than part of the full-grid transfer
 theorem.

The code records the seed and method-level RMSE for every replication.
Paired rows are used to report signed-SVD minus NS-GD and
nonnegative-GD minus NS-GD differences with paired standard errors.
The full grid and diagnostic figures were generated on a standard CPU
workstation; a complete rerun takes a few hours, depending on processor count
and BLAS threading.

\subsection{Theory-aligned separation curves}
\label{sec:theory-aligned-simulations}

To isolate the rate mechanism in Theorem~\ref{thm:journal-spike-flat-separation}
and Corollary~\ref{cor:journal-multispike-flat-tail}, we run a second paired
experiment in the theorem's interior regime.  We take eight sample sizes
\[
n\in\{40,58,82,116,164,232,300,380\},\qquad
\zeta=0.35,\qquad
d_T=\left\lceil 10n^{1+\zeta}\right\rceil,\qquad
\lambda_T=\frac{n}{d_T},
\]
so \(a=d_T\lambda_T/n=1\) and
\(\lambda_T\asymp0.1n^{-\zeta}\).  We use
\(\sigma_\varepsilon=0.5\) and \(30\) paired replications.  The common-spike
model has three unit spikes with
\(\Theta_j=1/3\); the heterogeneous model has spikes
\((1.5,0.9,0.45)\) with \(\Theta_j=1\).
Because the validation selection below must predict on held-out points, the
flat-tail block is materialized as explicit Gaussian columns from the exact
design law \(X_T=\sqrt{\lambda_T}\,Z_T\); the moderate decay power \(\zeta\)
keeps the tail dimension small while preserving both overparameterization and
the unit floor.

For every realized design, we evaluate conditional population prediction risk
exactly, analytically averaging over the training noise.  All comparator
classes---negative-ridge endpoint, positive ridge/ordinary gradient flow, and
uniform rescaling---are tuned on dense finite grids using the exact conditional
risk; we call these \emph{dense-grid oracles}.  We
report the negative-shifted gradient \emph{flow} (NS-GF)---the small-step limit
of Algorithm~\ref{alg:negative-shifted-gd}, whose continuous filter is the
analytic envelope used throughout---two ways over the same finite \((\nu,t)\)
grid: a \emph{dense-grid oracle} NS-GF that minimizes the exact conditional
risk, giving a like-for-like comparison against the other dense-grid
comparators, and a
\emph{validation-selected} NS-GF that instead minimizes hold-out mean-squared
error on an independent sample of size \(n_{\rm val}=400\) and never uses
\(S_k\), \(\Theta_H\), or \(\sigma_\varepsilon\).  For the validation path we
report the \emph{realized} excess risk of the selected estimator; it tracks its
own oracle closely, so the practically-tuned path essentially attains the oracle
separation.

\begin{figure}[t]
	\centering
	\includegraphics[width=0.8\linewidth]{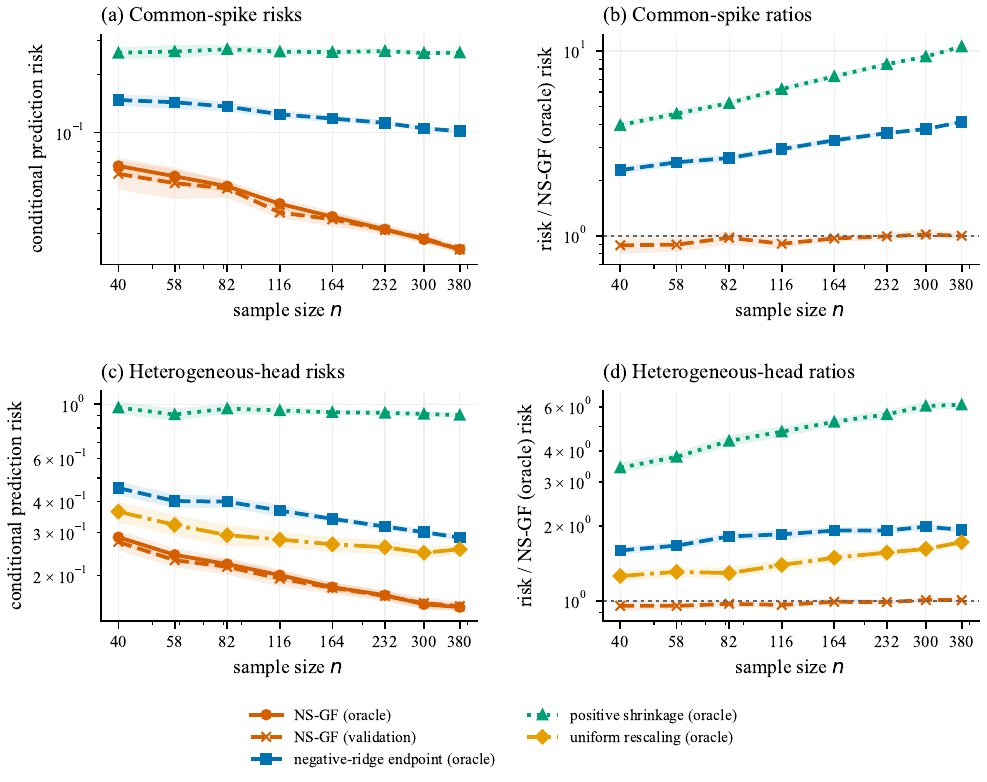}
	\caption{
		Conditional-risk separation in exact Gaussian spike-plus-flat designs:
		common spikes in (a)--(b), heterogeneous spikes in (c)--(d); left
		column risk, right column paired ratios to the NS-GF dense-grid oracle
		(horizontal line: equality).  Means and \(1.96\) standard-error bands
		over \(30\) paired replications; tuning protocol as described in the
		text.  Comparator ratios grow with \(n\), while the validation-selected
		path nearly coincides with its dense-grid oracle.
	}
	\label{fig:theory-separation-simulations}
\end{figure}

The separation is visible throughout the displayed range and strengthens with
\(n\).  At \(n=380\), the mean ratios to the NS-GF dense-grid oracle are \(4.1\)
for the best
admissible negative-ridge endpoint and \(10.6\) for positive shrinkage in the
common-spike model, and \(1.9\), \(6.1\), and \(1.7\) for the endpoint, positive
shrinkage, and best uniform rescaling in the heterogeneous model.  The
validation-selected curve reports the \emph{realized} excess risk of the
selected continuous-flow estimator, so selection and evaluation share the same
training noise; it tracks the dense-grid NS-GF oracle closely---the two curves
nearly coincide in panels~(a) and~(c)---and can sit marginally below the
fixed-filter oracle
because validation adapts its choice to each realization.  The
validation-tuned path therefore essentially attains the dense-grid separation.
The fitted log-risk
slopes track the predicted orders: in the common-spike panel the oracle endpoint
decays at rate \(n^{-0.18}\) and positive shrinkage is essentially flat,
matching the \(n^{-\zeta/2}\) and \(n^{0}\) predictions, while NS-GF decays at
least as fast as its \(n^{-\zeta}\) upper bound.  These finite-sample curves illustrate the
theorem's separation ordering; they are not used to assert sharp empirical
exponents.

The same designs also reproduce the Marchenko--Pastur wall of
Figure~\ref{fig:mp-wall-sim} (Proposition~\ref{prop:journal-mp-barrier}): the
admissible endpoint's pole sits a bulk width \(\asymp\sqrt{a\lambda_T}\) below
the floor, the geometry that forces the \(n^{-\zeta/2}\) endpoint rate in
panel~(b) of Figure~\ref{fig:theory-separation-simulations}.

\begin{figure}[t]
	\centering
	\includegraphics[width=0.8\linewidth]{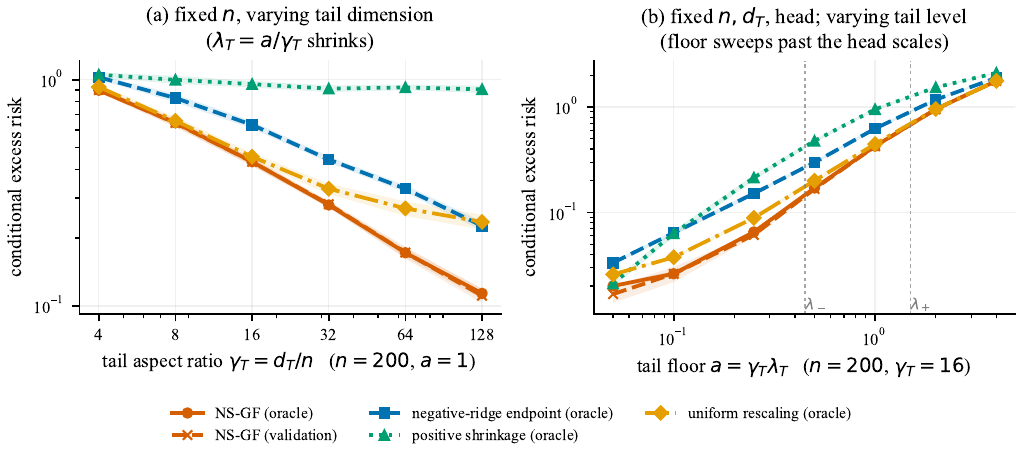}
	\caption{Separation across tail geometry at fixed sample size
	(\(n=200\), heterogeneous head, \(20\) paired replications, \(1.96\)
	standard-error bands; tuning as in
	Figure~\ref{fig:theory-separation-simulations}).
	(a)~Fixed floor \(a=1\), growing tail dimension \(d_T=\gamma_Tn\): the
	separation widens with overparameterization.
	(b)~Fixed \(n\), \(\gamma_T=16\), and head, with the tail level sweeping
	the floor past the head scales (\(\lambda_-\), \(\lambda_+\) marked): the
	separation peaks moderately below \(\lambda_-\) and closes at both
	extremes.}
	\label{fig:separation-dims}
\end{figure}

Figure~\ref{fig:separation-dims} completes this diagnostic by varying the tail
geometry at a fixed sample size \(n=200\), with the same heterogeneous head and
oracle/validation protocol over \(20\) paired replications.  Panel~(a) fixes
the floor \(a=1\) and grows the tail aspect ratio \(\gamma_T=d_T/n\) from
\(4\) to \(128\), so the tail level \(\lambda_T=a/\gamma_T\) shrinks.  The
separation widens monotonically with overparameterization: the oracle
endpoint-to-NS-GF risk ratio grows from \(1.1\) to \(2.0\), the
positive-shrinkage ratio from \(1.2\) to \(8.0\), and uniform rescaling loses
its near-parity (\(1.0\) to \(2.1\)).  This is the theorem geometry at fixed
floor: positive shrinkage stays pinned near its head-bias floor while the
NS-GF risk falls with \(\lambda_T\), and the admissible endpoint remains
limited by the wall retreat \(\asymp\sqrt{a\lambda_T}\), which shrinks more
slowly than the path's tail exposure.

Panel~(b) instead fixes \(n\), the tail dimension (\(\gamma_T=16\)), and the
head, and sweeps the tail level so the floor \(a=\gamma_T\lambda_T\) moves from
far below to above the head scales.  The separation is non-monotone with a
matched-floor peak: the ratios are largest when the floor sits moderately
below the smallest head eigenvalue (endpoint ratio \(2.5\) near \(a=0.1\),
positive-shrinkage ratio \(3.3\) near \(a=0.25\), with \(\lambda_-=0.45\)),
and they close at both extremes.  When \(a\) is very small there is almost no
implicit shrinkage to correct, so positive shrinkage is nearly optimal (ratio
\(1.05\)) while the endpoint still pays a residual pole-exposure cost
(\(1.66\)); once \(a\) rises past \(\lambda_+\), the floor submerges the head
and no admissible filter shape helps (all ratios near one).  The signed
path's gains therefore concentrate precisely where the shared framework
places them: an implicit floor matched to, or moderately below, the
signal-bearing head scales.

\subsection{Aggregate results and spectral diagnostics}

Table~\ref{tab:rmse-deterministic} summarizes the simulation results for the signed endpoint-versus-path comparison, with paired within-replication RMSE differences in the last two columns.  Each row averages \(50\) paired
replications from Gaussian designs with \(n=100\) and
\(p\in\{500,1000\}\).  We compare validation-selected nonnegative ridge SVD,
signed ridge SVD, nonnegative ridge GD, and NS-GD, each tuned within its
own admissible class.  Following the split of
Section~\ref{sec:simulation-settings}, the table's theory-aligned cell is the
spectral-sparse signal with the Spiked+Flat covariance (the rows marked
\(^\dagger\)): a head-supported signal over a gapped spectrum with unequal
head spikes, instantiating the heterogeneous
Corollary~\ref{cor:journal-multispike-flat-tail} rather than the equal-spike
special case.  Every other row breaks one framework assumption: the
source-condition signal is not head-supported, and the power-law rows use
\emph{no-gap} spectra, in which head and tail merge into a single continuous
ladder outside the gapped regime of
Section~\ref{sec:gaussian-spike-models}.  (The truncated power-law
Corollary~\ref{cor:journal-multispike-power-law} allows a heterogeneous
power-law tail, but its dimension condition implies
\(\lambda_{k+1}/a\to0\), so a finite head remains separated from that tail.)
Those rows, together with the power-law diagnostic
Figure~\ref{fig:finalfilter-powerlaw-one}, are \emph{robustness tests}
rather than certified instances of the theorems.  The exact
full-risk result is conditional on the observed design, while the two
Gaussian theorems in Section~\ref{sec:gaussian-spike-models} provide
regime-specific random-design guarantees. Across the \(16\) settings in Table~\ref{tab:rmse-deterministic},
early-stopped NS-GD is best in every row, in the theory-aligned cell and in
every robustness row alike.  The paired columns show that this
endpoint--path gap is positive and larger than twice its paired standard error
in every setting.  The second paired column also shows that NS-GD improves
over nonnegative GD in every row, with an average gain of about \(0.082\) RMSE.
The exact conditional result and the two Gaussian case proofs also suggest why
the signed-path advantage can change with overparameterization.
Changing \(p/n\) changes both the empirical null space and the lower-edge scale.
The theory therefore predicts spectrum-dependent behavior instead of a simple
monotone effect of \(p\): the endpoint wall is governed by the
Marchenko--Pastur lower edge in Proposition~\ref{prop:journal-mp-barrier},
while the full-oracle correction depends
on the realized missing-signal coupling in
Proposition~\ref{prop:full-risk-oracle-geometry}.

\begin{table}[t]
	\centering
	\caption{Test RMSE: mean\,(SE) over $50$ paired replications. Bold RMSE entries mark the best method(s) per row. Paired $\Delta$ columns show method\,1 $-$ method\,2 RMSE. Rows marked $^\dagger$ (spectral-sparse signal with Spiked+Flat covariance) form the theory-aligned setting of Section~\ref{sec:simulation-settings}; all other rows are robustness tests with a source-condition signal and/or a no-gap power-law spectrum.}
	\label{tab:rmse-deterministic}
	\small
	\setlength{\tabcolsep}{3.5pt}
	\resizebox{\textwidth}{!}{%
	\begin{tabular}{lll cc cc cccc @{\hspace{4pt}} cc}
		\toprule
		&   &   & \multicolumn{2}{c}{Ridge SVD} & \multicolumn{2}{c}{Ridge GD} & \multicolumn{2}{c}{Paired $\Delta$} \\
		\cmidrule(lr){4-5} \cmidrule(lr){6-7} \cmidrule(lr){8-9}
		Signal & Spectrum & $p$ & $\lambda\!\geq\!0$ & $\lambda\!\leq\!0$ & $\lambda\!\geq\!0$ & $\lambda\!\leq\!0$ & SVD$^{-}\!-\!$GD$^{-}$ & GD$^{+}\!-\!$GD$^{-}$ \\
		\midrule
		\multirow{8}{*}{\rotatebox[origin=c]{90}{Spectral Sparse}} & \multirow{2}{*}{Spiked+Flat$^\dagger$}
		& 500 & .611\,(.008) & .588\,(.008) & .611\,(.008) & \textbf{.512\,(.009)} & \textbf{+.077\,(.004)} & \textbf{+.100\,(.005)} \\
		&
		& 1000 & .808\,(.008) & .701\,(.008) & .796\,(.011) & \textbf{.608\,(.011)} & \textbf{+.093\,(.005)} & \textbf{+.188\,(.010)} \\[2pt]
		& \multirow{2}{*}{Power-law ($\alpha_{\rm PL}\!=\!0.5$)}
		& 500 & .740\,(.010) & .683\,(.009) & .739\,(.010) & \textbf{.596\,(.008)} & \textbf{+.088\,(.005)} & \textbf{+.143\,(.006)} \\
		&
		& 1000 & .916\,(.009) & .759\,(.008) & .908\,(.010) & \textbf{.657\,(.007)} & \textbf{+.102\,(.004)} & \textbf{+.251\,(.008)} \\[2pt]
		& \multirow{2}{*}{Power-law ($\alpha_{\rm PL}\!=\!0.75$)}
		& 500 & .460\,(.005) & .456\,(.005) & .455\,(.006) & \textbf{.405\,(.004)} & \textbf{+.051\,(.002)} & \textbf{+.050\,(.003)} \\
		&
		& 1000 & .499\,(.006) & .469\,(.005) & .499\,(.006) & \textbf{.405\,(.004)} & \textbf{+.064\,(.003)} & \textbf{+.094\,(.004)} \\[2pt]
		& \multirow{2}{*}{Power-law ($\alpha_{\rm PL}\!=\!1$)}
		& 500 & .378\,(.003) & .391\,(.002) & .361\,(.002) & \textbf{.343\,(.002)} & \textbf{+.048\,(.002)} & \textbf{+.018\,(.002)} \\
		&
		& 1000 & .379\,(.003) & .381\,(.003) & .367\,(.003) & \textbf{.341\,(.002)} & \textbf{+.040\,(.002)} & \textbf{+.025\,(.002)} \\
		\midrule
		\multirow{8}{*}{\rotatebox[origin=c]{90}{Source Cond.}} & \multirow{2}{*}{Spiked+Flat}
		& 500 & .584\,(.006) & .571\,(.005) & .583\,(.006) & \textbf{.499\,(.005)} & \textbf{+.072\,(.005)} & \textbf{+.084\,(.006)} \\
		&
		& 1000 & .695\,(.009) & .651\,(.006) & .685\,(.009) & \textbf{.568\,(.005)} & \textbf{+.084\,(.005)} & \textbf{+.117\,(.009)} \\[2pt]
		& \multirow{2}{*}{Power-law ($\alpha_{\rm PL}\!=\!0.5$)}
		& 500 & .732\,(.005) & .719\,(.005) & .732\,(.005) & \textbf{.682\,(.006)} & \textbf{+.037\,(.003)} & \textbf{+.050\,(.004)} \\
		&
		& 1000 & .796\,(.007) & .754\,(.006) & .794\,(.007) & \textbf{.708\,(.006)} & \textbf{+.046\,(.004)} & \textbf{+.086\,(.005)} \\[2pt]
		& \multirow{2}{*}{Power-law ($\alpha_{\rm PL}\!=\!0.75$)}
		& 500 & .473\,(.006) & .472\,(.005) & .469\,(.006) & \textbf{.436\,(.005)} & \textbf{+.036\,(.002)} & \textbf{+.033\,(.002)} \\
		&
		& 1000 & .497\,(.005) & .477\,(.005) & .497\,(.005) & \textbf{.444\,(.005)} & \textbf{+.033\,(.003)} & \textbf{+.053\,(.004)} \\[2pt]
		& \multirow{2}{*}{Power-law ($\alpha_{\rm PL}\!=\!1$)}
		& 500 & .375\,(.003) & .391\,(.002) & .361\,(.003) & \textbf{.348\,(.002)} & \textbf{+.043\,(.002)} & \textbf{+.013\,(.001)} \\
		&
		& 1000 & .374\,(.003) & .377\,(.003) & .364\,(.003) & \textbf{.350\,(.002)} & \textbf{+.028\,(.002)} & \textbf{+.014\,(.002)} \\
		\bottomrule
	\end{tabular}
	}
\end{table}

\begin{figure}[t]
	\centering
	\includegraphics[width=0.8\linewidth]{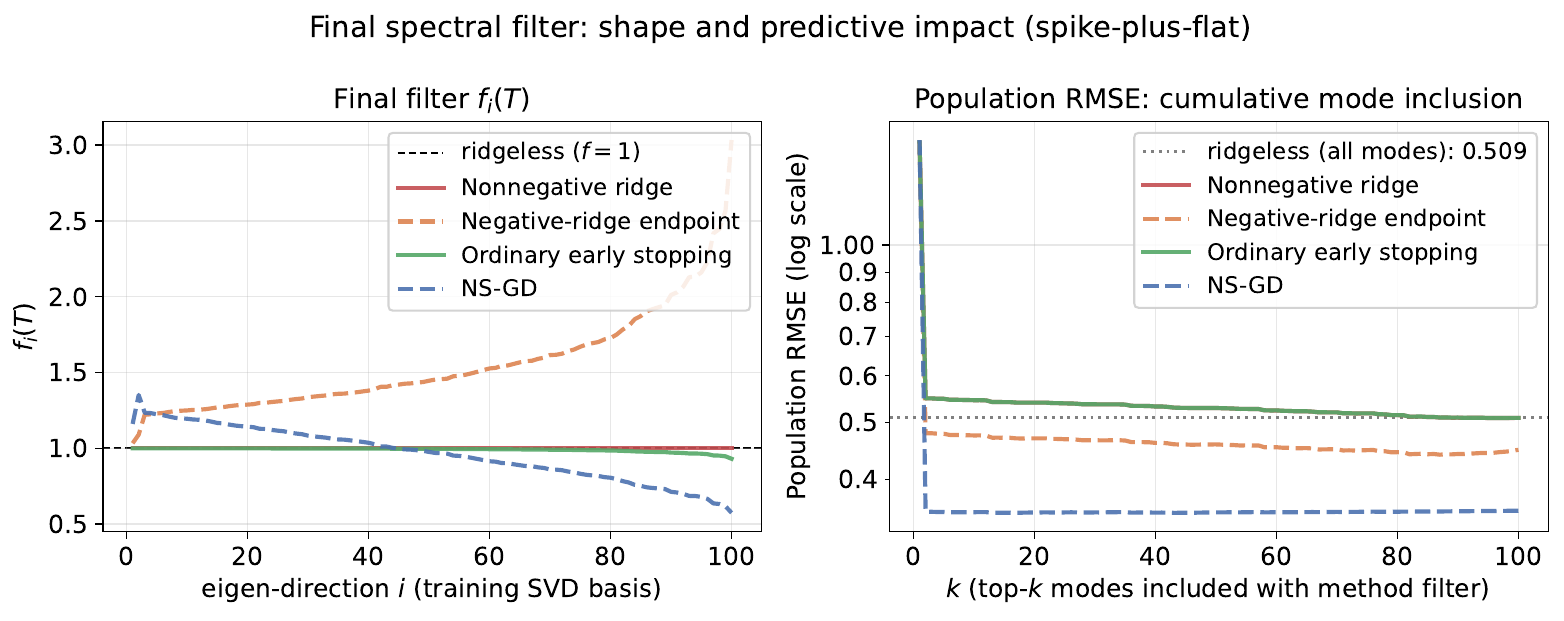}
	\caption{
		Spiked+Flat representative run: final spectral filters and cumulative
		population RMSE in the training SVD basis, all four methods
		validation-selected.
		Left: learned filters \(f_i(T)\).
		Right: cumulative top-\(k\) mode inclusion, reporting exact conditional
		population RMSE.  NS-GD attains the lowest risk; the negative-ridge
		endpoint improves over the nonnegative baselines but remains higher.
	}
	\label{fig:finalfilter}
\end{figure}

\noindent\textbf{Filter interpretation.}
Figure~\ref{fig:finalfilter} shows the final spectral filters and cumulative
population RMSE for a representative Spiked+Flat run with all four methods
validation-selected.
The nonnegative methods remain essentially ridgeless or shrinkage rules, with
filters near or below \(1\), whereas the two negative methods enter the
anti-shrinkage regime; however, their shapes differ markedly.
The static negative-ridge endpoint is tail-heavy: its anti-shrinkage grows on
the lower empirical directions, toward its pole below \(\widehat\mu_{\min}^+\).
The early-stopped NS-GD path instead concentrates anti-shrinkage on the leading
head directions while remaining controlled on the lower spectrum, using a
supercritical shift that a stable endpoint cannot reach without meeting the
unstable pole.
The right panel shows that including the top \(k\) modes cumulatively with the
NS-GD filter yields the largest population-RMSE reduction among the tuned
methods, well below the negative-ridge endpoint and the nonnegative baselines.
Figure~\ref{fig:filtercomparisonspikedflat} provides the corresponding
filter-trajectory diagnostics.

\begin{figure}[t]
	\centering
	\includegraphics[width=0.8\linewidth]{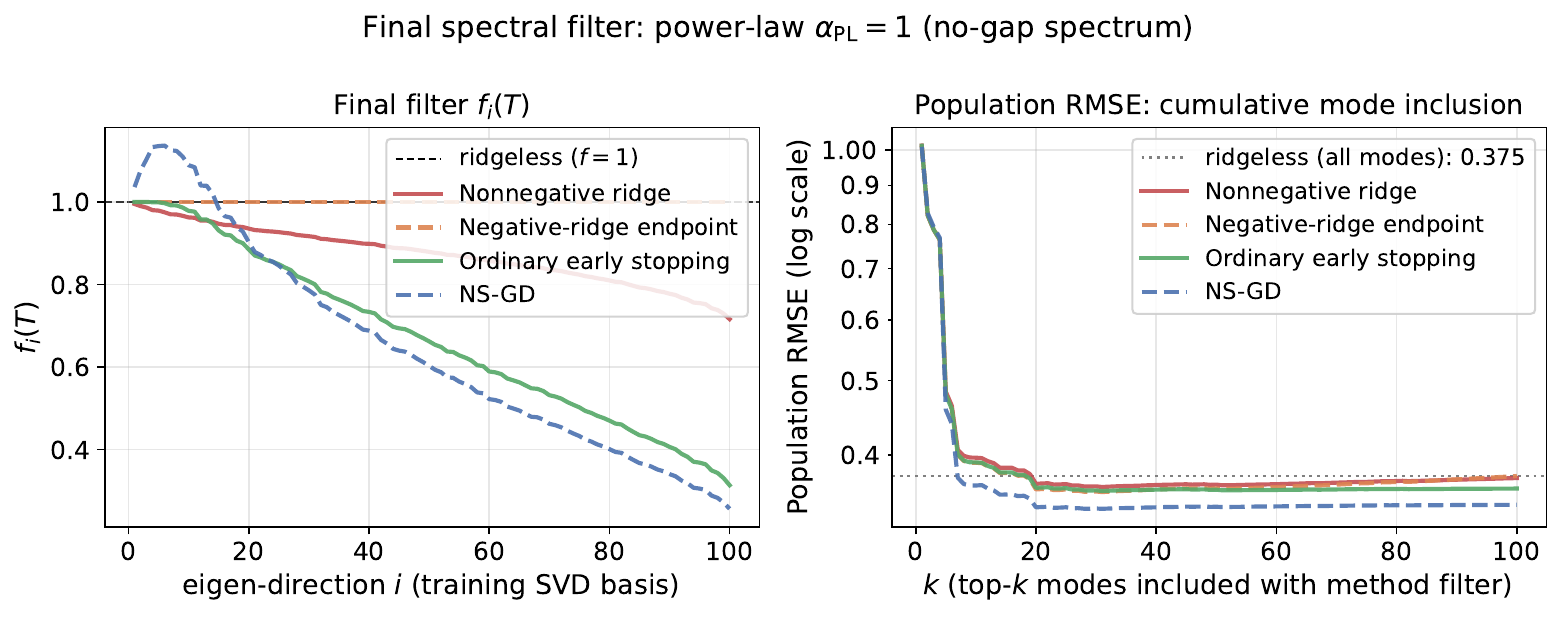}
	\caption{
		Power-law \(\alpha_{\rm PL}=1\) diagnostic (no-gap spectrum), all four
		methods validation-selected.
		Left: learned filters \(f_i(T)\).  Right: cumulative top-\(k\) mode
		inclusion, reporting exact conditional population RMSE.
		The negative-ridge endpoint collapses to ridgeless (its sign gives no
		gain), while NS-GD adds only modest head anti-shrinkage and shrinks the
		tail; all methods cluster, so the advantage here is a shape effect rather
		than a sign effect.
	}
	\label{fig:finalfilter-powerlaw-one}
\end{figure}

\noindent\textbf{Power-law \(\alpha_{\rm PL}=1\) as an endpoint failure mode.}
Figure~\ref{fig:finalfilter-powerlaw-one} highlights a scale-diagnostic regime
where the sign of the ridge parameter plays a secondary role.  For a power-law spectrum
\(\lambda_i\asymp i^{-1}\), the Tsigler--Bartlett tail floor satisfies
\(a_k=T_k/n\approx \log(p/k)/n\) and
\(S_k=\sum_{i>k}\lambda_i^2\approx 1/k\).  The endpoint proxy has residual
scale \(b=a_k-\alpha\), risk
\(\Theta_Hb^2+\sigma_\varepsilon^2S_k/(nb^2)\), and optimizer
\(b_\star=(\sigma_\varepsilon^2S_k/(n\Theta_H))^{1/4}\).
In this proxy, negative ridge is useful when \(b_\star<a_k\).  In the
power-law \(\alpha_{\rm PL}=1\) rows this condition can fail: the implicit tail
floor can be too small to warrant cancellation, and a stable endpoint may pay more
variance than it saves in head bias.  The finite-time signed path can still put
modest anti-shrinkage on leading directions while keeping lower-spectrum
directions below one, so its gain is a shape effect beyond sign.

\noindent\textbf{Trajectory diagnostics.}
Figure~\ref{fig:filtercomparisonspikedflat} shows the finite-time mechanism
directly: the NS-GD head-mode filters cross into anti-shrinkage and the head
error reaches its minimum at the validation-selected stop, before lower-spectrum
modes accumulate endpoint-style amplification.  Faded horizontal lines are the
static SVD baselines.

\begin{figure}[t]
	\centering
	\includegraphics[width=\linewidth]{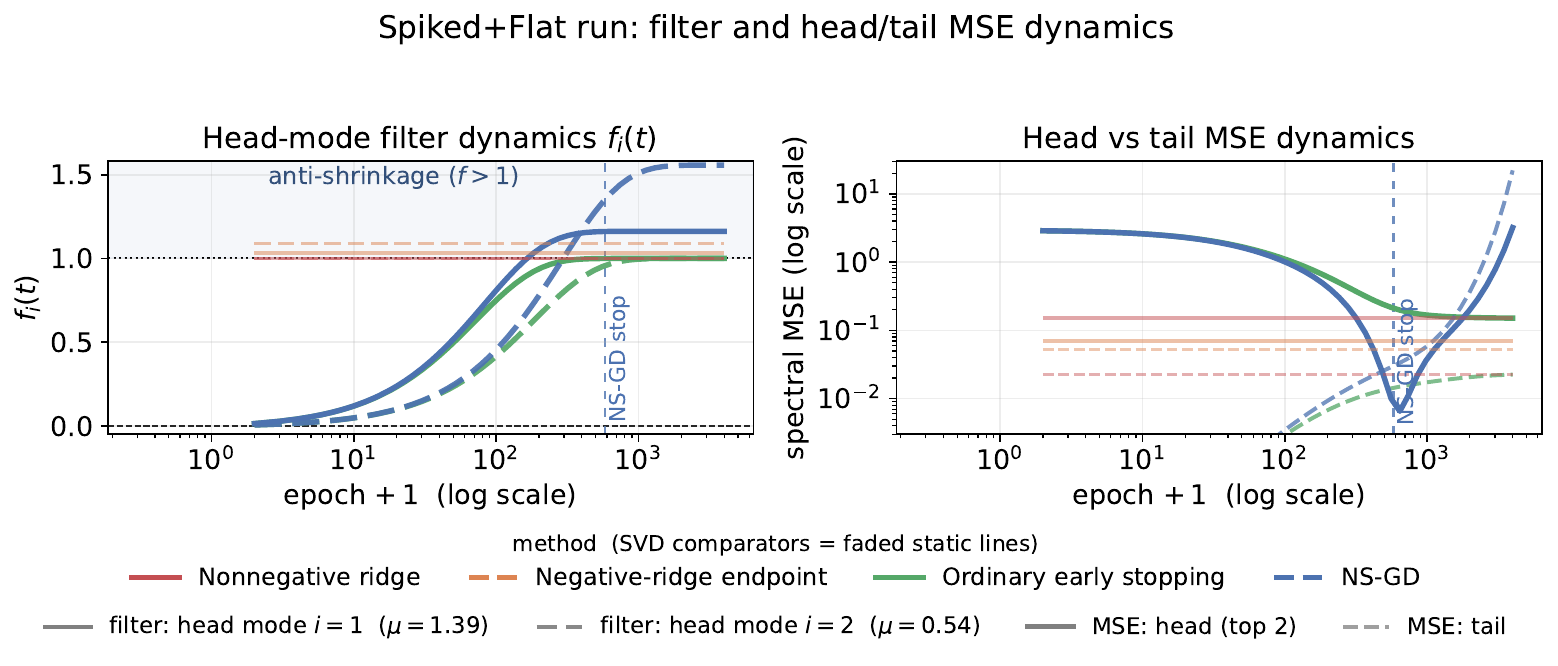}
	\caption{
		Spiked+Flat representative run: NS-GD versus baselines over training.
		Left: head-mode filter dynamics \(f_i(t)\); NS-GD crosses into the
		shaded anti-shrinkage region (\(f>1\)) on the leading directions, while
		the positive methods plateau at or below one.  Right: head (solid) and
		tail (dashed) spectral MSE; the NS-GD head error reaches its minimum at
		the validation-selected stop (dashed vertical line), before lower-spectrum
		exposure grows.  Faded horizontal lines are the static SVD baselines.
	}
	\label{fig:filtercomparisonspikedflat}
\end{figure}

\section{Discussion}
\label{sec:discussion}

We proved that finite-time negative-shifted GD realizes spectral shapes beyond
stable scalar negative-ridge endpoints, and quantified the resulting
prediction-risk separation on gapped Gaussian head--tail models.  We close by
discussing the scope of these guarantees and the directions they open.

\noindent\textbf{No-gap spectra: the primary future extension.}
Our theorems assume a \emph{gapped} spectrum: a resolved head of dimension
\(k=o(n)\) separated from a high-effective-rank tail, either flat (Case~I) or
heterogeneous (Case~II).  The
central open direction is the \emph{no-gap} regime, in which the eigenvalues
form a single continuous ladder with no head/tail separation, as in the
power-law robustness rows of Table~\ref{tab:rmse-deterministic} and the
diagnostic of Figure~\ref{fig:finalfilter-powerlaw-one}.  The truncated
power-law Corollary~\ref{cor:journal-multispike-power-law} does not cover this
regime: it permits power-law decay within the tail but retains
\(\lambda_{k+1}/\lambda_-\to0\), hence a genuine head/tail gap.  A full
no-gap theory would replace the discrete head/tail split by a matched
signal-bearing shell read at the floor-shifted empirical location, and would
require a gap-free continuum-edge alignment in place of eigengap perturbation;
the experiments already indicate that
finite-time signed paths remain useful there, where the sign of the optimal
ridge plays a secondary role and the gain is a shape effect
(Figure~\ref{fig:finalfilter-powerlaw-one}).

\noindent\textbf{Kernel ridge extension.}
The filter calculus itself is dimension-free: by the representer theorem,
NS-GD on kernel ridge regression acts through the same removable-pole filter
on the empirical kernel spectrum, and the exact design-conditional oracle of
Section~\ref{sec:exact-conditional-target} applies verbatim with the kernel
Gram matrix in place of \(X^\top X/n\).  What changes is the population
geometry.  Fixed-domain, trace-class Mercer spectra keep the tail trace
bounded, so the implicit floor \(a=T_k/n\) vanishes at rate \(1/n\) and the
correction it invites is asymptotically negligible---a boundary case rather
than a counterexample.  High-dimensional kernel regimes whose effective
spectra carry a divergent tail trace admit genuine floors at signal-bearing
scales, and there the endpoint--path asymmetry should persist.  Because
Mercer spectra are typically gap-free, this extension shares the no-gap
prerequisite above; moreover, kernel-side model-transfer errors enter
prediction risk additively, so a proved separation transfers only when that
error is a small fraction of the separation gap.  We leave kernel ridge
NS-GD as a companion direction.

\noindent\textbf{Discrete overshoot comparator.}
The discrete common-spike identity holds for every iteration count \(m\), but
at \(m=1\) the filter is independent of the signed level and reduces to a single
large-step ordinary gradient update.  More generally, large-step positive GD can
overshoot when parity and step size align with the target scale.  The present
discrete upper bounds, including
Corollary~\ref{cor:journal-multispike-discrete}, are therefore not a
lower-envelope separation from all over-relaxed positive-GD polynomials.

\noindent\textbf{Code and data availability.}
All experiments are synthetic and reproducible from fixed random seeds; no
external data are used.  The supplementary material provides self-contained
NumPy/Matplotlib code that regenerates every figure from scratch
(Figures~\ref{fig:motivation-supercritical}--\ref{fig:filtercomparisonspikedflat}),
together with a reference implementation of the validation-selected comparison
protocol used in Table~\ref{tab:rmse-deterministic} and a \texttt{README}
mapping each script to the item it produces.  Running the four figure scripts
reproduces all plotted curves and their emitted per-replication CSV summaries
in a few minutes on a standard laptop CPU.

\appendix

\section{Filter recursions and risk decomposition}
\label{app:filters-risk}

\begin{proof}[Proof of \eqref{eq:risk-decomposition}]
Since \(v_i^\top b=\mu_i\theta_i^\star+v_i^\top X^\top\varepsilon/n\),
\[
v_i^\top\widehat\beta_f
=
f_i\theta_i^\star
+
\frac{f_i}{\mu_i}v_i^\top X^\top\varepsilon/n .
\]
The prediction error in direction \(v_i\) is
\[
v_i^\top(\widehat\beta_f-\beta^\star)
=
-(1-f_i)\theta_i^\star
+
\frac{f_i}{\mu_i}v_i^\top X^\top\varepsilon/n .
\]
The noise term has conditional variance
\[
\frac{f_i^2}{\mu_i^2}
\operatorname{Var}\!\left(v_i^\top X^\top\varepsilon/n\mid X\right)
=
\frac{f_i^2}{\mu_i^2}\frac{\sigma_\varepsilon^2}{n}\mu_i
=
\frac{\sigma_\varepsilon^2}{n}\frac{f_i^2}{\mu_i}.
\]
Multiplying by the prediction weight \(\mu_i\) and summing over nonzero
empirical directions gives \eqref{eq:risk-decomposition}.
\end{proof}

\noindent\textbf{Discrete NS-GD filter.}
The experiments use full-batch discrete GD.  The exact conditional
population-risk formulas apply to the resulting discrete filter vector; the
gradient-flow notation is used only for analytic envelopes. With step size
\(\eta>0\) and negative shift \(\nu\), the row-space coordinate recursion gives the finite
\(q\)-step filter
\[
f_{\nu,\eta,q}^{\rm disc}(\mu)
=
\frac{\mu}{\mu-\nu}
\left\{1-\left(1-\eta(\mu-\nu)\right)^q\right\},
\]
with the continuous extension at \(\mu=\nu\). For \(0<\eta(\mu-\nu)<1\) it gives
a monotone discretization of the NS-GF filter. For larger nondivergent
steps or for \(\mu<\nu\), the finite-time filter can be oscillatory or
transient; the endpoint interpretation applies to stable directions
\(\mu>\nu\), while finite-time horizon budgets are the relevant objects for
unstable directions.  Corollary~\ref{cor:journal-multispike-discrete} gives
the regime-specific discrete transfer guarantee used in the paper.

\section{Proof of conditional population-risk identity}
\label{app:conditional-pop-risk}

\begin{proof}[Proof of Proposition~\ref{prop:conditional-pop-risk}]
Write \(b=\widehat\Sigma\beta^\star+X^\top\varepsilon/n\).
Since \(\widehat\Sigma\widehat v_i=\widehat\mu_i\widehat v_i\),
\[
\frac{\widehat v_i^\top b}{\widehat\mu_i}
=
\widehat v_i^\top\beta^\star
+
\frac{\widehat v_i^\top X^\top\varepsilon}{n\widehat\mu_i}.
\]
Therefore
\[
\widehat\beta_f
=
\widehat V\operatorname{diag}(f)\widehat V^\top\beta^\star
+
\sum_{i=1}^r
f_i
\frac{\widehat v_i^\top X^\top\varepsilon}{n\widehat\mu_i}
\widehat v_i .
\]
The conditional bias is
\((\widehat V\operatorname{diag}(f)\widehat V^\top-I)\beta^\star\).
For the noise coefficients,
\[
\operatorname{Cov}\!\left(
\frac{\widehat v_i^\top X^\top\varepsilon}{n\widehat\mu_i},
\frac{\widehat v_j^\top X^\top\varepsilon}{n\widehat\mu_j}
\middle|X
\right)
=
\frac{\sigma_\varepsilon^2}{n}
\frac{\mathbf 1\{i=j\}}{\widehat\mu_i},
\]
because the empirical singular directions diagonalize \(X^\top X/n\). Taking
the population prediction norm of the noise part gives
\[
\frac{\sigma_\varepsilon^2}{n}
\sum_{i=1}^r
f_i^2
\frac{\widehat v_i^\top\Sigma\widehat v_i}{\widehat\mu_i}.
\]
The cross term vanishes after conditioning on \(X\), proving the identity.
\end{proof}

\section{Proofs of fixed-design signed-filter facts}
\label{app:fixed-design-proofs}

\subsection{Diagnostic filter statements}

\begin{lemma}[Anti-shrinkage requires a negative shift]
\label{lem:antishrinkage-requires-negative-shift}
For every nonnegative ridge gradient-flow path \(f^+_{\lambda,t}\) from
Proposition~\ref{prop:filter-geometry-signed-finite-time}, with
\(\lambda\ge0\), \(0\le f^+_{\lambda,t}(\mu)\le1\) for all \(\mu\ge0\) and
\(t\ge0\).  For ridgeless GD with monotone stable step size
\(0\le\alpha\mu\le1\), the discrete filter
\(f_{\mu,q}=1-(1-\alpha\mu)^q\) also satisfies \(0\le f_{\mu,q}\le1\).
\end{lemma}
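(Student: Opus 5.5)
The plan is to first fix the form of the positive-ridge gradient-flow filter, since the statement refers to \(f^+_{\lambda,t}\) from Proposition~\ref{prop:filter-geometry-signed-finite-time}. Substituting \(\nu=-\lambda\) into \eqref{eq:signed-gf-filter} gives
\[
f^+_{\lambda,t}(\mu)=\frac{\mu}{\mu+\lambda}\left(1-e^{-(\mu+\lambda)t}\right)
=\mu\int_0^t e^{-(\mu+\lambda)s}\,ds .
\]
This is the flow \(\dot\beta_t=b-(\widehat\Sigma+\lambda I)\beta_t\) from zero. When \(\mu+\lambda=0\), which forces \(\mu=\lambda=0\), the integral form gives the removable value \(f^+_{\lambda,t}(0)=0\).

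The continuous bounds then follow from this product form. Nonnegativity comes from the integral representation, whose integrand is nonnegative because \(\mu\ge0\). For the upper bound with \(\mu+\lambda>0\), the first factor satisfies \(\mu/(\mu+\lambda)\in[0,1]\) since \(\lambda\ge0\). The second factor satisfies \(1-e^{-(\mu+\lambda)t}\in[0,1]\) since \((\mu+\lambda)t\ge0\). Their product therefore lies in \([0,1]\). The degenerate case \(\mu=0\) gives \(f^+_{\lambda,t}(0)=0\) for every \(\lambda\) and \(t\), so no separate argument is needed there.

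The discrete case is handled the same way. Write \(x=\alpha\mu\in[0,1]\), so that \(1-x\in[0,1]\). Then \((1-x)^q\in[0,1]\) for every integer \(q\ge0\), and hence \(f_{\mu,q}=1-(1-x)^q\in[0,1]\). The same conclusion can be read off the geometric-sum form \(f_{\mu,q}=x\sum_{\ell<q}(1-x)^\ell\), which is nonnegative and, as a telescoping sum, equals \(1-(1-x)^q\le1\). The Euler version with \(\lambda>0\) goes identically under the condition \(\alpha(\mu+\lambda)\le1\), using \(\mu/(\mu+\lambda)\le1\), if one wants to include it.

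There is no genuine obstacle here. The only points needing care are reading off \(f^+_{\lambda,t}\) correctly as the \(\nu=-\lambda\) case of Proposition~\ref{prop:filter-geometry-signed-finite-time}, and handling the boundary value \(\mu+\lambda=0\) through the integral form rather than the rational form. The stability hypothesis \(0\le\alpha\mu\le1\) is exactly what rules out overshoot, where \(1-x<0\) and odd powers of \(1-x\) can push the filter above one. I will note that this remark connects the lemma to the overshoot caveat raised in the Discussion.
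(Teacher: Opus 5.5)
Your proposal is correct and follows the paper's proof: both bound the factors \(\mu/(\mu+\lambda)\) and \(1-e^{-(\mu+\lambda)t}\) in \([0,1]\), treat \(\mu=\lambda=0\) as the zero filter, and use \(0\le 1-\alpha\mu\le 1\) to get \(0\le(1-\alpha\mu)^q\le 1\) in the discrete case. Your additional remarks (the integral form for the removable value, the geometric-sum reading, and the \(\lambda>0\) Euler variant under \(\alpha(\mu+\lambda)\le1\)) are correct but not needed for the lemma.
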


\begin{lemma}[Finite-time paths as effective spectral ridge]
\label{lem:finite-time-effective-spectral-ridge}
For \(t>0\), \(\lambda\ge0\), and \(\mu+\lambda>0\), positive ridge gradient
flow has a positive eigenvalue-dependent effective scale:
\[
f^+_{\lambda,t}(\mu)
=
\frac{\mu}{\mu+x^+_{\rm eff}(\mu,t)},
\qquad
x^+_{\rm eff}(\mu,t)
=
\lambda+
\frac{\mu+\lambda}{e^{(\mu+\lambda)t}-1}.
\]
A scalar signed-ridge endpoint has constant scale \(x_{\rm end}=-\nu\).  For
signed ridge flow with shift \(\nu>0\) and \(\mu\ne\nu\),
\[
f^-_{\nu,t}(\mu)
=
\frac{\mu}{\mu+x^-_{\rm eff}(\mu,t)},
\qquad
x^-_{\rm eff}(\mu,t)
=
-\nu+
\frac{\mu-\nu}{e^{(\mu-\nu)t}-1},
\]
with continuous extension \(x^-_{\rm eff}(\nu,t)=1/t-\nu\).
\end{lemma}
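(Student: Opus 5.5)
The proof is a direct algebraic inversion of the closed-form filters, one for each sign of the shift.

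\textbf{Positive path.} Take the positive ridge gradient-flow filter
\(f^+_{\lambda,t}(\mu)=\frac{\mu}{\mu+\lambda}\bigl(1-e^{-(\mu+\lambda)t}\bigr)\).
This is \eqref{eq:signed-gf-filter} with \(\nu\) replaced by \(-\lambda\). Set \(s=\mu+\lambda>0\) and \(E=e^{st}>1\). Whenever \(f\neq0\), the representation \(f=\mu/(\mu+x)\) is equivalent to \(x=\mu/f-\mu\). The case \(f=0\) occurs only when \(\mu=0\); there the identity holds trivially, since both sides vanish for any finite \(x\), which needs \(s>0\).

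Now compute
\[
\frac{\mu}{f}=\frac{s}{1-e^{-st}}=\frac{sE}{E-1}=s+\frac{s}{E-1}.
\]
Subtracting \(\mu\) gives
\[
x^+_{\rm eff}=s-\mu+\frac{s}{E-1}=\lambda+\frac{\mu+\lambda}{e^{(\mu+\lambda)t}-1},
\]
which is the claimed formula. Its positivity is immediate because \(s>0\) and \(E>1\).

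\textbf{Endpoint.} For the signed endpoint, \(A_\nu(\mu)=\mu/(\mu-\nu)=\mu/(\mu+(-\nu))\). So the effective scale is the constant \(-\nu\), read off directly.

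\textbf{Signed flow.} Put \(s=\mu-\nu\neq0\). The same algebra applies unchanged: \(1-e^{-st}\neq0\) because \(s\neq0\) and \(t>0\), and \(E-1\) has the same sign as \(s\), so \(s/(E-1)>0\). This yields
\[
\frac{\mu}{f^-_{\nu,t}}=s+\frac{s}{E-1},\qquad
x^-_{\rm eff}=-\nu+\frac{\mu-\nu}{e^{(\mu-\nu)t}-1},
\]
valid whenever \(f^-_{\nu,t}(\mu)\neq0\), that is, for \(\mu>0\). The inversion needs care only where \(\mu\) or \(f\) vanishes, and in those cases the identity is a trivial \(0=0\).

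\textbf{Continuous extension.} Use the limit \(s/(e^{st}-1)\to1/t\) as \(s\to0\), which follows from \(e^{st}-1=st+O(s^2)\). This gives \(x^-_{\rm eff}(\nu,t)=1/t-\nu\). As a consistency check, \(\nu/(\nu+1/t-\nu)=\nu t\) matches the removable value \(f_{\nu,t}(\nu)=\nu t\) from Proposition~\ref{prop:filter-geometry-signed-finite-time}.

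There is no real obstacle here. The only points needing attention are the sign bookkeeping when \(\mu<\nu\), where \(E<1\) and the ratio \(s/(E-1)\) stays positive, and the removable limit at \(\mu=\nu\).
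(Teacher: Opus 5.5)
Your proposal is correct and is essentially the paper's proof. The paper likewise inverts \(\mu/(\mu+x)=f\) to get \(x=(\mu+\lambda)/(1-e^{-(\mu+\lambda)t})-\mu\), reuses the same algebra with \(\lambda=-\nu\) for the signed flow, and lets \(\mu\to\nu\) for the continuous extension; your extra sign bookkeeping for \(\mu<\nu\), your treatment of \(\mu=0\), and the check \(\nu/(\nu+1/t-\nu)=\nu t\) only spell out what the paper covers with its brief ``direct substitution'' remark.
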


\begin{lemma}[Same-shift stable endpoint has the larger scalar filter]
\label{lem:signed-endpoint-dominates-path}
Fix \(\nu>0\). For every stable empirical direction \(\mu>\nu\) and every
\(t>0\), the filters in
\eqref{eq:signed-endpoint-filter}--\eqref{eq:signed-gf-filter} satisfy
\(0<f_{\nu,t}(\mu)<A_\nu(\mu)\).
\end{lemma}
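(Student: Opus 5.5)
The claim to prove is Lemma~\ref{lem:signed-endpoint-dominates-path}: for fixed \(\nu>0\), every stable direction \(\mu>\nu\), and every \(t>0\),
\[
0<f_{\nu,t}(\mu)<A_\nu(\mu).
\]
This is a direct scalar computation from the closed forms \eqref{eq:signed-endpoint-filter} and \eqref{eq:signed-gf-filter}.

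The plan is to factor the path filter through the endpoint filter. For a stable direction, put \(s:=\mu-\nu>0\). Then
\[
f_{\nu,t}(\mu)=\frac{\mu}{s}\bigl(1-e^{-st}\bigr)=A_\nu(\mu)\,\bigl(1-e^{-st}\bigr).
\]
So the whole claim reduces to two facts: \(A_\nu(\mu)>0\), and the scalar factor \(1-e^{-st}\) lies strictly in \((0,1)\).

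The first fact is immediate. Since \(\mu>\nu>0\), both the numerator \(\mu\) and the denominator \(\mu-\nu\) of \(A_\nu(\mu)=\mu/(\mu-\nu)\) are positive.

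The second fact uses \(st>0\), which holds because \(s>0\) and \(t>0\). Hence \(0<e^{-st}<1\), and therefore \(0<1-e^{-st}<1\).

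Multiplying the positive number \(A_\nu(\mu)\) by a factor in \((0,1)\) gives exactly \(0<f_{\nu,t}(\mu)<A_\nu(\mu)\). For a redundant check on positivity, the integral form \(f_{\nu,t}(\mu)=\mu\int_0^t e^{-(\mu-\nu)u}\,du\) has a strictly positive integrand over an interval of positive length, so it is positive.

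No step is a genuine obstacle. The one point needing care is to restrict to \(\mu>\nu\) and \(t>0\), so that every inequality is strict and the pole at \(\mu=\nu\) is excluded. One could also note that \(f_{\nu,t}(\mu)\uparrow A_\nu(\mu)\) as \(t\to\infty\), which shows the upper bound is sharp, but the lemma does not need this.
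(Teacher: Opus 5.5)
Your proof is correct and matches the paper's own argument. Both write \(f_{\nu,t}(\mu)=A_\nu(\mu)\{1-e^{-(\mu-\nu)t}\}\), note that \(A_\nu(\mu)>0\) and that the factor lies strictly in \((0,1)\) when \(\mu>\nu\) and \(t>0\), and then multiply.
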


\begin{lemma}[Scalar risk gap to the same-shift stable endpoint]
\label{lem:finite-signed-time-improves-own-endpoint}
Fix a stable direction \(\mu>\nu\). Let
\(A=\mu/(\mu-\nu)\), \(\eta(t)=e^{-(\mu-\nu)t}\), \(f_t=A(1-\eta)\), and
\(f_\infty=A\).
For the scalar coordinate risk
\(R_i(f)=s_i(1-f)^2+v_i f^2\) with \(s_i,v_i\ge0\), we have
\[
R_i(f_t)-R_i(f_\infty)
=
A^2\eta
\left[
(s_i+v_i)\eta
-
2\left(v_i+s_i\frac{\nu}{\mu}\right)
\right].
\]
Consequently, if \(s_i+v_i>0\) and
\(0<\eta(t)<2\{v_i+s_i\nu/\mu\}/(s_i+v_i)\),
then \(R_i(f_t)<R_i(f_\infty)\).
\end{lemma}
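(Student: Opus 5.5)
The identity is pure algebra in the two scalars \(A\) and \(\eta\), so I will expand both risks and factor. Since \(\mu>\nu\) and \(t>0\), we have \(A>1\) and \(\eta=e^{-(\mu-\nu)t}\in(0,1)\). The key identity is \(1-A=-\nu/(\mu-\nu)=-(\nu/\mu)A\), which puts every quantity in terms of \(A\).

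\textbf{Differences of the two terms.} Write \(f_t=A-A\eta\). For the variance term,
\[
v_i\bigl(f_t^2-A^2\bigr)=v_iA^2\bigl[(1-\eta)^2-1\bigr]=v_iA^2\eta(\eta-2).
\]
For the bias term, \(1-f_t=(1-A)+A\eta\), so
\[
s_i\bigl[(1-f_t)^2-(1-A)^2\bigr]=s_i\bigl[2(1-A)A\eta+A^2\eta^2\bigr].
\]
Substituting \(1-A=-(\nu/\mu)A\) turns this into \(s_iA^2\eta\bigl[\eta-2\nu/\mu\bigr]\).

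\textbf{Combining.} Adding the two differences gives
\[
A^2\eta\bigl[(s_i+v_i)\eta-2v_i-2s_i\nu/\mu\bigr],
\]
which is the claimed expression.

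\textbf{Consequence.} The prefactor \(A^2\eta\) is strictly positive, so the sign of the difference is the sign of the bracket. The bracket is negative exactly when \(\eta<2\{v_i+s_i\nu/\mu\}/(s_i+v_i)\), and the hypothesis \(s_i+v_i>0\) makes this division legitimate. Under that condition, \(R_i(f_t)<R_i(f_\infty)\).

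There is no real obstacle here. The only step needing care is the substitution \(1-A=-(\nu/\mu)A\), which has to be done with the correct sign so that the bias-term factor \(s_i\nu/\mu\) comes out right.
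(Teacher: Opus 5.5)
Your proposal is correct and follows the paper's proof: both use \(1-A=-(\nu/\mu)A\), compute the bias difference \(A^2s_i(\eta^2-2\eta\nu/\mu)\) and the variance difference \(A^2v_i(\eta^2-2\eta)\), add them, and read off the sign from the positive prefactor \(A^2\eta\). One small correction: that last step needs only \(A>0\), not your claimed \(A>1\), and \(A>1\) actually fails when \(\nu=0\), where \(A=1\).
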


\begin{remark}[Positive contrast: why the positive case is different]
For positive ridge flow, finite time trades variance reduction against extra
under-shrinkage bias, while the signed case backs away from terminal
signed-endpoint overshoot.  The scalar identity is derived below and is
diagnostic.
\end{remark}

\begin{proof}[Proof of Lemma~\ref{lem:antishrinkage-requires-negative-shift}]
For \(\lambda\ge0\),
\(0\le 1-e^{-(\mu+\lambda)t}\le1\) and \(0\le\mu/(\mu+\lambda)\le1\),
with the convention that the filter is zero when \(\mu=\lambda=0\). Hence
\(0\le f^+_{\lambda,t}(\mu)\le1\). For ridgeless GD, stability gives
\(0\le1-\alpha\mu\le1\), so \(0\le(1-\alpha\mu)^q\le1\) and therefore
\(0\le f_{\mu,q}=1-(1-\alpha\mu)^q\le1\).
\end{proof}

\begin{proof}[Proof of Lemma~\ref{lem:finite-time-effective-spectral-ridge}]
For the positive path, solve, for \(\mu>0\),
\[
\frac{\mu}{\mu+x^+_{\rm eff}(\mu,t)}
=
\frac{\mu}{\mu+\lambda}
\left(1-e^{-(\mu+\lambda)t}\right)
\]
for \(x^+_{\rm eff}\). This gives
\[
x^+_{\rm eff}(\mu,t)
=
\frac{\mu+\lambda}{1-e^{-(\mu+\lambda)t}}-\mu
=
\lambda+
\frac{\mu+\lambda}{e^{(\mu+\lambda)t}-1},
\]
which is positive when \(\lambda\ge0\) and \(t>0\). The formula extends to
\(\mu=0\) by direct substitution.

The scalar signed endpoint identity is immediate:
\(\mu/(\mu-\nu)=\mu/(\mu+x_{\rm end})\) with \(x_{\rm end}=-\nu\).
Since \(x_{\rm end}\) is constant in \(\mu\), a scalar endpoint assigns one
effective sign across spectral regions.

The signed calculation is the same with \(\lambda=-\nu\):
\[
x^-_{\rm eff}(\mu,t)
=
\frac{\mu-\nu}{1-e^{-(\mu-\nu)t}}-\mu
=
-\nu+
\frac{\mu-\nu}{e^{(\mu-\nu)t}-1}.
\]
Taking \(\mu\to\nu\) gives the continuous extension
\(x^-_{\rm eff}(\nu,t)=1/t-\nu\). The displayed formula shows that the signed
effective scale may be negative; for example, on large enough \(\mu\) after
the time when \(f_{\nu,t}(\mu)\) crosses the ridgeless
level, it is negative, while the finite-time denominator can remain positive
on lower eigenvalues.
\end{proof}

\begin{proof}[Proof of Lemma~\ref{lem:signed-endpoint-dominates-path}]
If \(\mu>\nu\), then \(\mu/(\mu-\nu)>0\) and
\(0<1-e^{-(\mu-\nu)t}<1\) for every \(t>0\); multiplying the two gives the
claim.
\end{proof}

\begin{proof}[Proof of Lemma~\ref{lem:finite-signed-time-improves-own-endpoint}]
Since \(A=\mu/(\mu-\nu)\),
\[
1-f_\infty
=
1-A
=
-\frac{\nu}{\mu-\nu}
=
-A\frac{\nu}{\mu},
\qquad
1-f_t
=
1-A+A\eta
=
A\left(\eta-\frac{\nu}{\mu}\right).
\]
Therefore
\(s_i\{(1-f_t)^2-(1-f_\infty)^2\}
=A^2s_i(\eta^2-2\eta\nu/\mu)\)
and
\(v_i(f_t^2-f_\infty^2)
=A^2v_i\{(1-\eta)^2-1\}
=A^2v_i(\eta^2-2\eta)\).
Adding the two identities gives the risk-gap identity stated in
Lemma~\ref{lem:finite-signed-time-improves-own-endpoint}.
The displayed sufficient condition is exactly the condition that the bracket
be negative. If \(s_i=0\) and \(v_i>0\), the condition becomes
\(0<\eta<2\), which holds for every finite \(t>0\). At the time when the
finite-time filter crosses the ridgeless level, equivalently
\(\eta=\nu/\mu\), and since \(\mu>\nu\),
\((s_i+v_i)\nu/\mu<2(v_i+s_i\nu/\mu)\)
whenever \(s_i+v_i>0\), so the finite-time path again improves over the same
endpoint. Finally,
\[
\left.
\frac{\partial}{\partial\eta}
\{R_i(f_t)-R_i(f_\infty)\}
\right|_{\eta=0}
=
-2A^2\left(v_i+s_i\frac{\nu}{\mu}\right),
\]
which is strictly negative unless \(s_i=v_i=0\).
\end{proof}

\begin{proof}[Proof of Corollary~\ref{cor:signed-gf-superlevel-prefix}]
The residual of the signed-flow filter is
\(1-f_{\nu,t}(\mu)
=\{\mu e^{-(\mu-\nu)t}-\nu\}/(\mu-\nu)\),
with the continuous interpretation at \(\mu=\nu\). For \(\mu\ne\nu\), the
condition \(f_{\nu,t}(\mu)>1\) is equivalent to
\[
\mu e^{-(\mu-\nu)t}<\nu
\quad\text{if }\mu>\nu,
\qquad
\mu e^{-(\mu-\nu)t}>\nu
\quad\text{if }\mu<\nu,
\]
and both inequalities reduce to \(t>g_\nu(\mu)\). At \(\mu=\nu\),
\(f_{\nu,t}(\nu)=\nu t\), so the same statement holds with
\(g_\nu(\nu)=1/\nu\).

For \(\mu\ne\nu\),
\(g_\nu'(\mu)
=\{1-\nu/\mu-\log(\mu/\nu)\}/(\mu-\nu)^2\).
The numerator is negative for \(\mu\ne\nu\), because
\(\log x>1-1/x\) for \(x\ne1\). Hence \(g_\nu\) is strictly decreasing on
\((0,\infty)\) after filling in the continuous value at \(\mu=\nu\). Sorting
the empirical eigenvalues in decreasing order therefore makes the above-one
set a prefix.
\end{proof}

\begin{proof}[Proof of Corollary~\ref{cor:signed-gd-superlevel-prefix}]
For \(z=\eta(\mu-\nu)<1\), define
\[
t_{\rm eff}(\mu)
:=
m\eta\,\frac{-\log(1-z)}{z},
\]
with value \(m\eta\) at \(z=0\).  The geometric-sum identity gives
\(f_{\nu,\eta,m}^{\rm disc}(\mu)
=f_{\nu,t_{\rm eff}(\mu)}(\mu)\).
Moreover,
\[
\frac{-\log(1-z)}{z}
=
\int_0^1\frac{ds}{1-sz}
\]
is strictly increasing for \(z<1\), so \(t_{\rm eff}(\mu)\) increases in
\(\mu\), whereas \(g_\nu(\mu)\) in
Corollary~\ref{cor:signed-gf-superlevel-prefix} strictly decreases.
Thus \(f_{\nu,\eta,m}^{\rm disc}(\mu)>1\), equivalently
\(t_{\rm eff}(\mu)>g_\nu(\mu)\), can cross only once as \(\mu\) increases.
\end{proof}

\section{Full-risk oracle proofs and filter diagnostics}
\label{app:full-risk-oracle-proofs}

This appendix collects deterministic conditional tools that support the
full-risk story and the filter diagnostics used in the main text.

\begin{proof}[Proof of Theorem~\ref{thm:finite-grid-validation-oracle}]
Condition on \(\mathcal D_{\rm tr}\).  For each \(g\in\mathcal G\), define
\[
L_g
:=
\sigma_\varepsilon^2+
\|\widehat\beta_g-\beta^\star\|_\Sigma^2,
\qquad
\widehat L_g
:=
\frac1{n_{\rm val}}
\sum_{i=1}^{n_{\rm val}}
(y_i^{\rm val}-x_i^{{\rm val}\top}\widehat\beta_g)^2 .
\]
The validation residuals for a fixed \(g\) are independent centered Gaussian
variables with variance \(L_g\).  Hence
\(n_{\rm val}\widehat L_g/L_g\sim\chi^2_{n_{\rm val}}\).
The Laurent--Massart bounds \citep[Lemma~1]{LaurentMassart2000} imply, for
every \(z>0\),
\[
\Pr\!\left(
\frac{\widehat L_g}{L_g}
>1+2\sqrt{\frac z{n_{\rm val}}}+\frac{2z}{n_{\rm val}}
\ \middle|\ \mathcal D_{\rm tr}
\right)\le e^{-z}
\]
and
\[
\Pr\!\left(
\frac{\widehat L_g}{L_g}
<1-2\sqrt{\frac z{n_{\rm val}}}
\ \middle|\ \mathcal D_{\rm tr}
\right)\le e^{-z}.
\]
Taking \(z=z_{\rm val}\) and applying a union bound to both tails for all
\(g\in\mathcal G\) gives, with conditional probability at least \(1-\delta\),
\[
(1-\epsilon_{\rm val})L_g
\le\widehat L_g
\le(1+\epsilon_{\rm val})L_g
\qquad\text{for every }g\in\mathcal G.
\tag{Uniform validation event}
\label{eq:uniform-validation-event}
\]
Dependence among the losses for different \(g\) is immaterial to this union
bound.  Let \(g^\star\) minimize \(L_g\) over \(\mathcal G\).  On the uniform
event in \eqref{eq:uniform-validation-event}, empirical optimality gives
\[
(1-\epsilon_{\rm val})L_{\widehat g}
\le\widehat L_{\widehat g}
\le\widehat L_{g^\star}
\le(1+\epsilon_{\rm val})L_{g^\star}.
\]
This proves the first line of
\eqref{eq:finite-grid-validation-oracle}; subtracting
\(\sigma_\varepsilon^2\) proves the second.

For the final assertion, take the fixed theoretical candidate \(g_0\) in the
first inequality.  Its realized training-sample excess risk is
\(o_{\mathbb P}(B_n)\) whenever its nonnegative design-conditional mean in the
theorem statement is \(o_{\mathbb P}(B_n)\), by conditional Markov's
inequality.  For the spectral candidates studied in this paper, that mean is
exactly \(\Risk_{\rm pop}(f_{g_0}\mid X_{\rm tr})\).  Moreover,
\[
\Pr\!\left(
\|\widehat\beta_{g_0}-\beta^\star\|_\Sigma^2>\eta B_n
\right)
\le
\mathbb E\!\left[
\min\!\left\{
1,\,
\frac{
\mathbb E_{\varepsilon_{\rm tr}}
[\|\widehat\beta_{g_0}-\beta^\star\|_\Sigma^2\mid X_{\rm tr}]
}{\eta B_n}
\right\}
\right]
\longrightarrow0
\]
for every fixed \(\eta>0\).  Also,
\[
\frac{1+\epsilon_{\rm val}}{1-\epsilon_{\rm val}}-1
=
\frac{2\epsilon_{\rm val}}{1-\epsilon_{\rm val}}.
\]
The validation event has probability tending to one when \(\delta=o(1)\);
the assumed validation remainder therefore gives the stated
\(o_{\mathbb P}(B_n)\) transfer.
\end{proof}

\begin{proposition}[Auxiliary endpoint and finite-time envelopes]
\label{prop:auxiliary-filter-envelopes}
The endpoint \(A_\alpha\) has a pole at \(\alpha=\widehat\mu_i\).  In
particular, if
\[
w_i:=\frac{\sigma_\varepsilon^2}{n}
\frac{\widehat v_i^\top\Sigma\widehat v_i}{\widehat\mu_i},
\qquad
V_X(A_\alpha):=\sum_{i=1}^r w_i A_\alpha(\widehat\mu_i)^2,
\]
then \(|\alpha-\widehat\mu_i|\le\varepsilon\) and \(w_i>0\) imply
\(V_X(A_\alpha)\ge w_i\widehat\mu_i^2/\varepsilon^2\).
By contrast, for every finite \((\nu,t)\),
\(|f_{\nu,t}(\mu)|\le \mu t e^{\nu t}\) and
\(|\partial_\mu f_{\nu,t}(\mu)|
\le te^{\nu t}+\widehat\mu_1(t^2/2)e^{\nu t}\)
on \(0\le\mu\le\widehat\mu_1\).
\end{proposition}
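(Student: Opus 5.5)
The statement has three claims: a variance lower bound near a pole, a uniform bound on the finite-time filter, and a uniform bound on its $\mu$-derivative. I will prove each directly from the definitions in \eqref{eq:signed-endpoint-filter}--\eqref{eq:signed-gf-filter}.

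\textbf{Endpoint pole.} All summands $w_jA_\alpha(\widehat\mu_j)^2$ are nonnegative, since $w_j\ge0$ because $\Sigma$ is positive semidefinite. So I drop every term except the $i$th: $V_X(A_\alpha)\ge w_i\widehat\mu_i^2/(\widehat\mu_i-\alpha)^2$. If $|\alpha-\widehat\mu_i|\le\varepsilon$ with $\alpha\neq\widehat\mu_i$, this is at least $w_i\widehat\mu_i^2/\varepsilon^2$. If $\alpha=\widehat\mu_i$, the filter is infinite; under the convention $V_X=+\infty$ the bound holds trivially, and this is exactly the pole statement.

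\textbf{Finite-time filter bound.} I use the integral form $f_{\nu,t}(\mu)=\mu\int_0^t e^{-(\mu-\nu)s}\,ds$. For $\mu\ge0$ and $s\in[0,t]$, we have $e^{-(\mu-\nu)s}\le e^{\nu s}\le e^{\nu t}$. Integrating gives $|f_{\nu,t}(\mu)|\le\mu t e^{\nu t}$, which matches Proposition~\ref{prop:filter-geometry-signed-finite-time}.

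\textbf{Derivative bound.} I differentiate under the integral, which is legitimate because the integrand is smooth on a compact interval:
\[
\partial_\mu f_{\nu,t}(\mu)=\int_0^t e^{-(\mu-\nu)s}\,ds-\mu\int_0^t s\,e^{-(\mu-\nu)s}\,ds .
\]
On $0\le\mu\le\widehat\mu_1$, the first integral is at most $te^{\nu t}$. The second term is bounded in absolute value by $\widehat\mu_1 e^{\nu t}\int_0^t s\,ds=\widehat\mu_1(t^2/2)e^{\nu t}$. The triangle inequality then gives the stated bound. Both bounds are uniform in $\mu$, including at $\mu=\nu$, because the integral representation never divides by $\mu-\nu$; this is the removable-pole point.

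\textbf{Obstacle.} I expect no real obstacle. The only delicate points are the convention at $\alpha=\widehat\mu_i$ exactly and the choice to work with the integral form rather than the closed rational form, which avoids any case split around $\mu=\nu$.
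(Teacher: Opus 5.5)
Your proof is correct and follows the paper's argument essentially verbatim: you keep only the $i$th nonnegative variance summand near the pole, and you combine the integral form of $f_{\nu,t}$, the bound $e^{-(\mu-\nu)s}\le e^{\nu t}$, and differentiation under the integral sign. Your bound $e^{\nu s}\le e^{\nu t}$ relies, as the paper's does, on the standing convention $\nu\ge0$, $t\ge0$; your remarks on the nonnegativity of the other summands and on the case $\alpha=\widehat\mu_i$ fill in details the paper leaves implicit.
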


\begin{proof}[Proof of Proposition~\ref{prop:full-risk-oracle-geometry}]
The bias part of Proposition~\ref{prop:conditional-pop-risk} gives, with
\(\delta_i=f(\widehat\mu_i)-1\),
\((F-I)\beta^\star
=\sum_{i=1}^r\delta_iP_i\beta^\star-P_0\beta^\star\).
Subtracting the ridgeless-row-space bias, \(\|(P-I)\beta^\star\|_\Sigma^2\),
gives
\[
\|(F-I)\beta^\star\|_\Sigma^2-
\|(P-I)\beta^\star\|_\Sigma^2
=
\delta^\top H\delta-2m^\top\delta .
\]
The variance term changes from \(\sum_i v_i\) at ridgeless to
\(\sum_i v_i(1+\delta_i)^2\), hence its difference is
\(2v^\top\delta+\delta^\top D\delta\).  Combining the displays gives
\[
\Risk_{\rm pop}(f\mid X)
=
\Risk_{\rm pop}(\mathbf 1\mid X)+\delta^\top(H+D)\delta-2(m-v)^\top\delta .
\]

It remains to justify the completed square with the Moore--Penrose inverse.  Let
\(z\in\operatorname{null}(Q)\).  Since \(Q=H+D\succeq0\), we have
\(z^\top Hz=z^\top Dz=0\).  Put \(u_z=\sum_i z_iP_i\beta^\star\).  Then
\(z^\top Hz=\|u_z\|_\Sigma^2=0\), so Cauchy--Schwarz in the
\(\Sigma\)-seminorm gives
\(z^\top m=\langle u_z,P_0\beta^\star\rangle_\Sigma=0\).
Also, because \(D\) is nonnegative diagonal, \(z^\top Dz=0\) implies
\(Dz=0\), and therefore \(z^\top v=0\).  Hence \(z^\top g=0\).  Thus
\(g\perp\operatorname{null}(Q)\), equivalently
\(g\in\operatorname{range}(Q)\).  The minimizers of
\(\delta^\top Q\delta-2g^\top\delta\) are the solutions of \(Q\delta=g\), and
\(\delta^\star=Q^\dagger g\) is the minimum-norm solution.  Completing the
square yields
\(\delta^\top Q\delta-2g^\top\delta
=\|\delta-\delta^\star\|_Q^2-g^\top Q^\dagger g\),
which proves the oracle identity.  Finally, set \(\delta=\epsilon e_i\).
The risk difference from ridgeless is
\(\epsilon^2Q_{ii}-2\epsilon g_i
=(H_{ii}+v_i)\epsilon^2-2(m_i-v_i)\epsilon\).
Differentiating at zero gives \(-2(m_i-v_i)\), proving the local criterion.
\end{proof}

\begin{proof}[Proof of Proposition~\ref{prop:auxiliary-filter-envelopes}]
The endpoint variance statement is immediate from the conditional variance term
in Proposition~\ref{prop:conditional-pop-risk}.  If
\(|\alpha-\widehat\mu_i|\le\varepsilon\), then
\(A_\alpha(\widehat\mu_i)^2
=\widehat\mu_i^2/(\widehat\mu_i-\alpha)^2
\ge\widehat\mu_i^2/\varepsilon^2\).
For the finite-time path, the integral form in \eqref{eq:signed-gf-filter}
with \(e^{-(\mu-\nu)s}\le e^{\nu s}\le e^{\nu t}\) for \(0\le s\le t\) and
\(\mu\ge0\) gives \(|f_{\nu,t}(\mu)|\le\mu t e^{\nu t}\).  Differentiating
under the integral gives
\[
\partial_\mu f_{\nu,t}(\mu)
=
\int_0^t e^{-(\mu-\nu)s}\,ds
-
\mu\int_0^t s e^{-(\mu-\nu)s}\,ds,
\]
so on \([0,\widehat\mu_1]\),
\(|\partial_\mu f_{\nu,t}(\mu)|
\le te^{\nu t}+\widehat\mu_1(t^2/2)e^{\nu t}\).
\end{proof}

\noindent\textbf{Endpoint derivative at ridgeless.}
For the scalar endpoint, \(A_\alpha(\widehat\mu_i)-1=
\alpha/\widehat\mu_i+O(\alpha^2)\).  Substituting this displacement into the
exact full-risk expansion gives
\[
\left.
\frac{d}{d\alpha}\Risk_{\rm pop}(A_\alpha\mid X)
\right|_{\alpha=0}
=
-2\sum_{i=1}^r\frac{m_i-v_i}{\widehat\mu_i},
\]
because the quadratic \(\delta^\top Q\delta\) term is second order.

\section{Proofs for the Gaussian spike-model cases}
\label{app:gaussian-spike-case-proofs}

The proofs are organized by the two regimes in
Section~\ref{sec:gaussian-spike-models}.  Each begins from the full Gaussian
design and derives the concentration events it uses.  Case I combines the
Tsigler--Bartlett head--tail decomposition with direct finite-time Duhamel
control.  Case II retains the Tsigler--Bartlett trace and squared-spectrum
tail quantities, but replaces the non-closing global perturbation step by
localized invariant-graph and within-head Duhamel control.

\subsection{Case I: MP barrier and common-spike separation}
\label{app:journal-spike-flat-proofs}

Throughout this section, \(c,C\in(0,\infty)\) denote constants independent of
\(n\) and \(\lambda_T\).  Put
\[
r_T:=\frac{w_T}{\lambda_h}
\asymp\sqrt{\lambda_T/\lambda_h},
\qquad
K_{2,T}:=\frac1nX_T\Sigma_TX_T^\top=\lambda_TK_T .
\]
Since \(n\lambda_T/\lambda_h\to\infty\), we have
\(\sqrt n\,r_T\to\infty\).  Fix the deterministic tolerance
\[
\omega_{H,n}:=(\sqrt n\,r_T)^{-1/2}\longrightarrow0,
\qquad
n^{-1/2}=o(\omega_{H,n}r_T).
\]
Also define
\(\nu_\star=a+\lambda_h\), \(t_\star=\lambda_h^{-1}\), and
\(K_0:=K_H+aI_n\).
The proof combines the Gaussian head--tail concentration backbone used in
\citet{Tsigler2023Benign} with direct finite-time Duhamel control.  All events
needed below are derived from Assumption~\ref{ass:journal-spike-flat}; no
external event package is assumed.

\begin{lemma}[Gaussian common-spike event]
\label{lem:journal-spike-event}
Under Assumption~\ref{ass:journal-spike-flat}, there is an event
\(\mathcal E_{\rm cs}\) with probability tending to one on which
\[
\operatorname{spec}(K_T)
\subset
[a-Cw_T,a+Cw_T],
\qquad
\frac1n\operatorname{tr}(K_T)\asymp a,
\qquad
K_{2,T}=\lambda_TK_T,
\]
and
\[
\left\|\lambda_h^{-1}G_H-I_k\right\|_{\rm op}
\le \omega_{H,n}r_T,
\qquad
G_H:=X_H^\top X_H/n .
\]
In particular,
\[
\operatorname{spec}(K)
\subset
\left[
a-Cw_T,\,
a+\lambda_h+Cw_T
\right].
\]
For
\[
M_\star
:=
\frac1nX_H^\top h_{\nu_\star,t_\star}(K)X_H,
\qquad
M_{\star,m}^{\rm disc}
:=
\frac1nX_H^\top
h_{\nu_\star,\eta_\star,m}^{\rm disc}(K)X_H,
\]
the same event satisfies
\[
\|M_\star-I_k\|_{\rm op}
+
\sup_{m\ge1}\|M_{\star,m}^{\rm disc}-I_k\|_{\rm op}
\lesssim r_T.
\tag{Common-spike head map}
\label{eq:common-spike-head-map-event}
\]
Writing \(K=U\operatorname{diag}(\mu_i)U^\top\), it further satisfies
\[
\sum_{i:\,|\mu_i-a|\le Cw_T}u_i^\top K_{2,T}u_i
\gtrsim
na\lambda_T .
\tag{Projected tail square mass}
\label{eq:common-spike-projected-tail-mass}
\]
\end{lemma}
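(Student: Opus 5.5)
The plan is to construct \(\mathcal E_{\rm cs}\) as the intersection of two Gaussian concentration events, one for \(Z_T\) and one for \(Z_H\), and then derive every displayed consequence deterministically on that event. Write \(K_T=\lambda_T Z_TZ_T^\top/n\), where \(Z_T\in\R^{n\times d_T}\) has i.i.d.\ standard Gaussian entries and \(\gamma_T=d_T/n=a/\lambda_T\to\infty\).

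\textbf{Tail event.} The Davidson--Szarek bound puts every singular value of \(Z_T\) in \([\sqrt{d_T}-\sqrt n-s,\sqrt{d_T}+\sqrt n+s]\) with probability at least \(1-2e^{-s^2/2}\). Take \(s=\sqrt{2\log n}\). Multiplying the squares by \(\lambda_T/n\) gives
\[
\operatorname{spec}(K_T)\subset\bigl[a-2w_T-O(w_T\sqrt{\log n/n}),\;a+2w_T+\lambda_T+O(w_T\sqrt{\log n/n})\bigr].
\]
Since \(\lambda_T=w_T\cdot(w_T/a)=o(w_T)\), this is \(\operatorname{spec}(K_T)\subset[a-Cw_T,a+Cw_T]\). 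The trace statement follows from \(\operatorname{tr}(K_T)/n=\lambda_T\|Z_T\|_F^2/n^2\) and the law of large numbers for \(\chi^2_{nd_T}\). The identity \(K_{2,T}=\lambda_TK_T\) is immediate from \(\Sigma_T=\lambda_TI\).

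\textbf{Head event.} Here \(G_H=\lambda_h Z_H^\top Z_H/n\) with \(k=O(1)\), so \(\|\lambda_h^{-1}G_H-I_k\|_{\rm op}=O_{\mathbb P}(n^{-1/2})\). The assumption \(n^{-1/2}=o(\omega_{H,n}r_T)\) turns this into the stated bound with probability tending to one.

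\textbf{Spectrum of \(K\).} Since \(K=K_H+K_T\) with \(K_H\succeq0\), we have \(K\succeq K_T\), which gives the lower bound. Weyl's inequality together with \(\|K_H\|=\|G_H\|\le\lambda_h(1+\omega_{H,n}r_T)\) gives the upper bound, because \(\lambda_h\omega_{H,n}r_T\le w_T\).

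\textbf{Head map (main step).} Set \(K_0=K_H+aI_n\) and \(E=K_T-aI_n\), so that \(\|E\|\le Cw_T\).

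\emph{Ideal part.} By the push-through identity,
\[
\tfrac1nX_H^\top h(K_0)X_H=G_H\,h_{\nu_\star,t_\star}(G_H+a).
\]
The scalar map \(\mu\mapsto\mu h_{\nu_\star,t_\star}(a+\mu)\) equals \(1\) at \(\mu=\lambda_h\) and has derivative \(O(1/\lambda_h)\) near \(\lambda_h\). The head event therefore gives an \(O(\omega_{H,n}r_T)\) error for this part.

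\emph{Perturbation part.} Use Duhamel:
\[
h(K)-h(K_0)=-\int_0^{t_\star}\!\!\int_0^s e^{-(s-u)(K-\nu_\star)}E\,e^{-u(K_0-\nu_\star)}\,du\,ds .
\]
The semigroups are noncontractive, but the spectral bounds give \(\|e^{-u(K-\nu_\star)}\|\le e^{u(\lambda_h+Cw_T)}=O(1)\) for \(u\le t_\star=1/\lambda_h\), and the same bound holds for \(K_0\). Hence \(\|h(K)-h(K_0)\|\lesssim t_\star^2w_T=w_T/\lambda_h^2\). Sandwiching by \(X_H/\sqrt n\), whose squared norm is \(\|G_H\|\lesssim\lambda_h\), gives an \(O(w_T/\lambda_h)=O(r_T)\) error.

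\emph{Discrete path.} Replace Duhamel by the telescoping identity \(A^\ell-B^\ell=\sum_j A^j(A-B)B^{\ell-1-j}\) with \(A-B=-\eta_\star E\). For \(\mu\) in the spectrum, \(1-\eta_\star(\mu-\nu_\star)\in[1-C\eta_\star w_T,\,1+1/m+C\eta_\star w_T]\). Since \(\eta_\star w_T=r_T/m\), every power up to \(m\) is bounded by \(e^{1+o(1)}\), uniformly in \(m\). The double sum then contributes at most \(\eta_\star^2m^2\,Cw_T=Cw_T/\lambda_h^2\). The ideal discrete scalar map \(\mu\mapsto\mu h^{\rm disc}(a+\mu)\) likewise has derivative \(O(1/\lambda_h)\) uniformly in \(m\), by the same geometric-sum bound. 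This uniformity in \(m\), obtained while the shifted semigroup is expanding, is the delicate point; the choices \(t_\star=1/\lambda_h\) and \(\eta_\star=1/(m\lambda_h)\) are exactly what keep the growth factor at \(e^{O(1)}\).

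\textbf{Projected tail square mass.} Because \(K_H\) has rank at most \(k\), Weyl interlacing gives \(\mu_i(K)\le\mu_{i-k}(K_T)\le a+Cw_T\) for \(i>k\), and \(\mu_i(K)\ge a-Cw_T\) for all \(i\). So at least \(n-k\) eigenvectors \(u_i\) of \(K\) lie in the window \(|\mu_i-a|\le Cw_T\). Summing over them,
\[
\sum_{\text{window}}u_i^\top K_Tu_i\ \ge\ \operatorname{tr}K_T-k\|K_T\|\ \ge\ n(a-Cw_T)-k(a+Cw_T)\ \gtrsim\ na .
\]
Multiplying by \(\lambda_T\) gives the projected tail square mass bound.
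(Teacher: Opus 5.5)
Your proposal is correct and follows the paper's proof essentially step for step. The shared steps are: Gaussian singular-value concentration for the tail band and fixed-dimensional Wishart concentration for the head; the push-through identity plus a scalar derivative bound for the floor-model head map; and a Duhamel (respectively telescoping) transfer from \(K_0\) to \(K\), whose growth factor stays \(e^{O(1)}\) because \(t_\star=1/\lambda_h\) (respectively \(\eta_\star=1/(m\lambda_h)\)). The projected tail square mass likewise comes from removing at most \(k\) eigendirections. The only differences are cosmetic: you make the Davidson--Szarek constants explicit, and you bound the windowed trace of \(K_T\) before multiplying by \(\lambda_T\) rather than working with \(K_{2,T}\) directly.
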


\begin{proof}
Gaussian singular-value concentration
\citep[Chapter~4]{Vershynin2018High-dimensional} gives
\[
\|K_T-aI_n\|_{\rm op}\lesssim w_T
\]
with probability tending to one, because
\(K_T=(\lambda_T/n)Z_TZ_T^\top\),
\(a=\gamma_T\lambda_T\), and \(\gamma_T\to\infty\).  It also gives
\(\operatorname{tr}(K_T)/n\asymp a\).  Since \(k=O(1)\), fixed-dimensional
Wishart concentration gives
\(\|\lambda_h^{-1}G_H-I_k\|_{\rm op}=O_{\mathbb P}(n^{-1/2})\).
Because \(n^{-1/2}=o(\omega_{H,n}r_T)\), the head bound in the
definition of \(\mathcal E_{\rm cs}\) holds with probability tending to one.
The full spectral band follows from Weyl's inequalities
\citep{HornJohnson2012MatrixAnalysis} and
\(\|K_H\|_{\rm op}=\|G_H\|_{\rm op}\).

For the head map, define
\(\psi_\star(x):=x h_{\nu_\star,t_\star}(a+x)\).
The sample-space/primal intertwining identity gives
\(n^{-1}X_H^\top h_{\nu_\star,t_\star}(K_0)X_H
=\psi_\star(G_H)\), and
\(a+\lambda_h-\nu_\star=0\) gives
\(\psi_\star(\lambda_h)=\lambda_h t_\star=1\).
On a fixed neighborhood of \(\lambda_h\),
\(\sup_x|\psi_\star'(x)|\lesssim\lambda_h^{-1}\); hence
\(\|\psi_\star(G_H)-I_k\|_{\rm op}
\lesssim\omega_{H,n}r_T\).

It remains to transfer from \(K_0\) to \(K\).  The standard Duhamel formula
for matrix exponentials \citep{Higham2008Functions} yields
\[
\|h_{\nu_\star,t_\star}(K)-h_{\nu_\star,t_\star}(K_0)\|_{\rm op}
\le
\|K_T-aI_n\|_{\rm op}
\int_0^{t_\star}s
e^{(\lambda_h+Cw_T)s}\,ds
\lesssim
\frac{w_T}{\lambda_h^2}.
\]
The squared operator norm of \(X_H/\sqrt n\) is \(O(\lambda_h)\).
Multiplication on both sides therefore gives an \(O(r_T)\) contribution.  This
proves the continuous part of
\eqref{eq:common-spike-head-map-event}.

For the discrete path, put
\(q_m(x):=h_{\nu_\star,\eta_\star,m}^{\rm disc}(a+x)\).
The exact floor-model head map is \(G_Hq_m(G_H)\).
At the common spike,
\(\lambda_hq_m(\lambda_h)=\lambda_hm\eta_\star=1\), and direct
differentiation of the finite geometric sum gives
\[
\sup_{m\ge1}\sup_{|x-\lambda_h|\le c\lambda_h}
\left|\frac{d}{dx}\{xq_m(x)\}\right|
\lesssim\lambda_h^{-1}.
\]
The matrix-polynomial telescoping identity, together with
\[
\max_{0\le\ell\le m}
\left(1+\frac{1+Cr_T}{m}\right)^\ell
\le C,
\]
gives
\[
\left\|
h_{\nu_\star,\eta_\star,m}^{\rm disc}(K)
-
h_{\nu_\star,\eta_\star,m}^{\rm disc}(K_0)
\right\|_{\rm op}
\lesssim
\frac{w_T}{\lambda_h^2}
\]
uniformly in \(m\ge1\).  The preceding head-map argument therefore proves the
discrete part of \eqref{eq:common-spike-head-map-event}.

Finally,
\(\operatorname{tr}(K_{2,T})\asymp na\lambda_T\) and
\(\|K_{2,T}\|_{\rm op}\lesssim a\lambda_T\).  Adding the rank-\(k\) positive
semidefinite head matrix can move at most \(k\) eigenvalues of \(K_T\) above
the displayed tail band.  The complementary spectral projector of \(K\)
therefore has rank \(O(k)\), and it can remove at most
\(O(k a\lambda_T)\) of \(K_{2,T}\)-mass.  Since \(k=O(1)\) and \(n\to\infty\),
\eqref{eq:common-spike-projected-tail-mass} follows.
\end{proof}

\begin{proof}[Proof of Proposition~\ref{prop:journal-mp-barrier}]
The tail Gram matrix is
\[
K_T=\frac{\lambda_T}{n}Z_TZ_T^\top,
\qquad
Z_T\in\R^{n\times d_T}.
\]
The assumptions imply \(\gamma_T=d_T/n\to\infty\).  Define the centered,
bulk-width normalized Wishart matrix
\(B_T:=(Z_TZ_T^\top-d_TI_n)/\sqrt{nd_T}\).
The semicircle law for centered sample covariance matrices in the regime
\(n/d_T\to0\) \citep{BaiYin1988Semicircle} implies that there are constants
\(c_0,q_0>0\) such that, with probability tending to one, at least \(q_0n\)
eigenvalues of \(B_T\) are at most \(-c_0\).  Since \(k=o(n)\), if the
eigenvalues are ordered decreasingly, this gives
\(\lambda_{n-k}(B_T)\le-c_0\).
The standard Gaussian singular-value bound
\citep[Theorem~4.6.1]{Vershynin2018High-dimensional} also gives
\(\|B_T\|_{\rm op}=O_{\mathbb P}(1)\).  Indeed, its two-sided bounds at
scale \(\sqrt{d_T}\pm C\sqrt n\), after squaring and centering, are
of order \(\sqrt{nd_T}\).

Because \(a=\lambda_Td_T/n\) and
\(w_T=\lambda_T\sqrt{d_T/n}\), the exact identity
\(K_T=aI_n+w_TB_T\)
therefore yields constants \(0<c<C<\infty\) such that, with probability
tending to one,
\(\lambda_n(K_T)\ge a-Cw_T\) and \(\lambda_{n-k}(K_T)\le a-cw_T\).
Finally, \(K=K_T+K_H\), where \(K_H\succeq0\) and
\(\operatorname{rank}(K_H)\le k\).  Rank-\(k\) Cauchy interlacing
\citep{HornJohnson2012MatrixAnalysis} gives
\(\lambda_n(K_T)\le\lambda_n(K)\le\lambda_{n-k}(K_T)\).
The wide Gaussian tail makes \(K_T\), and hence \(K\), positive definite with
probability one, so \(\lambda_n(K)=\widehat\mu_{\min}^+\).  Combining these
bounds proves
\(c w_T\le a-\widehat\mu_{\min}^+\le C w_T\).
The endpoint statement follows from
\(\nu<\widehat\mu_{\min}^+\).
\end{proof}

\begin{lemma}[Positive-filter common-spike floor]
\label{lem:journal-positive-head-floor}
Under Assumption~\ref{ass:journal-spike-flat}, with probability tending to one,
\[
R_{\rm GD+}^\star\wedge R_{\rm ridge+}^\star
\gtrsim
\Theta_H\left(\frac{a}{\lambda_h+a}\right)^2.
\]
\end{lemma}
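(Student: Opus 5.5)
The plan is to lower-bound the bias part of the population risk alone, since the variance term is nonnegative. Any positive filter (positive ridge $\mu/(\mu+\alpha)$ with $\alpha\ge0$, or ordinary gradient flow $f_{0,t}$) then satisfies $0\le f\le1$ by Lemma~\ref{lem:antishrinkage-requires-negative-shift}. The target is to show that the head coordinates of $F\beta^\star$, where $F=\widehat V\operatorname{diag}(f)\widehat V^\top$, are at most a fraction $\approx\lambda_h/(\lambda_h+a)$ of $\beta_H^\star$. Then $\|(F-I)\beta^\star\|_\Sigma^2\ge\lambda_h\|(I_k-[F]_{HH})\beta_H^\star\|^2$, using $\Sigma=\operatorname{diag}(\Lambda_H,\Lambda_T)$ and dropping the tail block. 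Here $[F]_{HH}$ denotes the head--head block.

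The key representation works in sample space. For a filter $f(\mu)=\mu\,h(\mu)$, the estimator is $\widehat\beta_f=X^\top h(K)y/n$, so the noiseless head map is $M_h=n^{-1}X_H^\top h(K)X_H$. This gives $[F]_{HH}=M_h$ (with head-supported signal, the $X_T^\top$ part only affects tail coordinates). The steps are as follows.

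\emph{Step one.} For ridge, $h(\mu)=1/(\mu+\alpha)$ is operator-monotone decreasing. On the event $\mathcal E_{\rm cs}$ of Lemma~\ref{lem:journal-spike-event} we have $K\succeq K_H+(a-Cw_T)I_n$, so $h(K)\preceq (K_H+(a-Cw_T+\alpha)I)^{-1}$. The intertwining identity then gives $M_h\preceq G_H(G_H+(a-Cw_T+\alpha)I)^{-1}$. Its eigenvalues are at most $(\lambda_h(1+o(1)))/(\lambda_h+a-Cw_T)=\lambda_h/(\lambda_h+a)+o(1)$, uniformly in $\alpha\ge0$, because $w_T/\lambda_h\to0$ and $G_H\approx\lambda_hI_k$.

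\emph{Step two.} For gradient flow, $h_{0,t}(\mu)=(1-e^{-\mu t})/\mu$ is not operator monotone, so I would use an integral/ridge majorization instead. Specifically, $h_{0,t}(\mu)\le \min\{t,1/\mu\}\le 2/(\mu+1/t)$, which is a ridge $h$ with $\alpha=1/t$ up to a factor two. That factor is too lossy, so I would instead argue directly.

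\emph{Step three.} Since $M_h\succeq0$ and $M_h\preceq$ something with top eigenvalue $\le q+o(1)$ where $q=\lambda_h/(\lambda_h+a)$, we get $\|(I-M_h)\beta_H^\star\|\ge(1-q-o(1))\|\beta_H^\star\|$. This yields a bias of at least $\Theta_H(a/(\lambda_h+a))^2(1-o(1))$, with $a\asymp\lambda_h$ making the $o(1)$ absorbable.

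For gradient flow, the direct approach is a Duhamel/comparison in the spirit of the proof of Lemma~\ref{lem:journal-spike-event}. Replace $K$ by the floor model $K_0=K_H+aI_n$ at cost $\|h_{0,t}(K)-h_{0,t}(K_0)\|_{\rm op}\le\|K_T-aI\|_{\rm op}\int_0^t s\,ds$. This is contractive, since the positive generator gives $e^{-sK}\preceq I$, but it grows in $t$. I will therefore split by time. For $t\lesssim 1/\lambda_h$ (say $t\le C'/\lambda_h$), the Duhamel error is $O(w_T/\lambda_h^2)$. The floor-model head map is $\psi_t(G_H)$ with $\psi_t(x)=x(1-e^{-(x+a)t})/(x+a)\le x/(x+a)$, giving the bound by $q+o(1)$ as before. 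For large $t$, I would use the operator inequality $h_{0,t}(K)\preceq K^{-1}$ (valid since $1-e^{-\mu t}\le1$ and the functions commute). This reduces to the ridgeless case, which is the ridge bound of Step one with $\alpha=0$.

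The main obstacle is that $h_{0,t}$ is not operator monotone, so a naive comparison of $K$ with $K_H+(a-Cw_T)I$ fails at intermediate times. The resolution above is the observation that $h_{0,t}(K)\preceq K^{-1}$ holds for every $t$ simultaneously, as functional calculus of a single matrix. This makes the time split unnecessary: the $K^{-1}$ bound plus Step one with $\alpha=0$ covers all $t$ uniformly. I would verify this carefully and, if an issue arises in passing from $K^{-1}$ to the head map, fall back on the short-time Duhamel route.
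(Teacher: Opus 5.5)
Your final argument is correct and is essentially the paper's: the functional-calculus cap $0\preceq h_+(K)\preceq K^{-1}$ (which, as you realized, covers all $t\ge0$ at once and makes both the Duhamel time split and the factor-two majorization unnecessary) plus congruence reduces both positive classes to the ridgeless head map, and $I_k-M_+\succeq(1-q-o(1))I_k$ then gives the head-bias floor. The only variation is in bounding $n^{-1}X_H^\top K^{-1}X_H$: you bound it from above using $K\succeq K_H+(a-Cw_T)I_n$, operator monotonicity of inversion, and the push-through identity $G_H(G_H+cI_k)^{-1}$, whereas the paper evaluates it two-sidedly as $C_0(I_k+C_0)^{-1}$ with $C_0=n^{-1}X_H^\top K_T^{-1}X_H$ and finishes with Cauchy--Schwarz; your one-sided bound is all this lemma needs.
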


\begin{proof}
Work on \(\mathcal E_{\rm cs}\) from
Lemma~\ref{lem:journal-spike-event}.  For either ordinary gradient flow or
nonnegative ridge, write the estimator in sample-space form with inverse
filter \(h_+(K)\).  Functional calculus gives
\[
0\preceq h_+(K)\preceq K^{-1}.
\tag{Positive inverse-filter cap}
\label{eq:cs-positive-inverse-filter-cap}
\]
Indeed, \(h_+(x)=(1-e^{-tx})/x\) for positive gradient flow and
\(h_+(x)=1/(x+\alpha)\) for nonnegative ridge.

Define the corresponding head signal map and its ridgeless cap by
\[
M_+:=\frac1nX_H^\top h_+(K)X_H,
\qquad
M_{\rm rid}:=\frac1nX_H^\top K^{-1}X_H .
\]
Congruence in \eqref{eq:cs-positive-inverse-filter-cap} yields
\(0\preceq M_+\preceq M_{\rm rid}\).  The matrix inversion lemma
\citep{HornJohnson2012MatrixAnalysis} gives
\[
M_{\rm rid}=C_0(I_k+C_0)^{-1},
\qquad
C_0:=\frac1nX_H^\top K_T^{-1}X_H .
\]
On \(\mathcal E_{\rm cs}\),
\(\|K_T-aI_n\|_{\rm op}=O(w_T)=o(a)\) and
\(G_H=\lambda_hI_k+O(\omega_{H,n}w_T)\).  The resolvent identity therefore
implies, uniformly in operator norm,
\[
C_0=\frac{\lambda_h}{a}I_k+o(1),
\qquad
M_{\rm rid}
=
\frac{\lambda_h}{a+\lambda_h}I_k+o(1).
\]

Let \(\beta=\beta_H^\star\) and
\(q=\lambda_h/(a+\lambda_h)\).  The Loewner inequality above gives
\[
\beta^\top(I_k-M_+)\beta
\ge
\beta^\top(I_k-M_{\rm rid})\beta
\ge
\{1-q-o(1)\}\|\beta\|_2^2 .
\]
No squared Loewner order or commutativity is used.  Cauchy--Schwarz now gives
\[
\|(I_k-M_+)\beta\|_2^2
\ge
\frac{\{\beta^\top(I_k-M_+)\beta\}^2}{\|\beta\|_2^2}
\ge
\left\{\frac{a}{a+\lambda_h}-o(1)\right\}^2\|\beta\|_2^2 .
\]
The population prediction bias is at least the head contribution
\(\lambda_h\|(I_k-M_+)\beta_H^\star\|_2^2\); the tail bias and variance are
nonnegative.  Since
\(\Theta_H=\lambda_h\|\beta_H^\star\|_2^2\) and
\(a/(\lambda_h+a)\asymp1\), the claimed lower bound follows uniformly over
both positive-filter classes.
\end{proof}

\begin{lemma}[Explicit supercritical common-spike upper bound]
\label{lem:journal-signed-upper}
Under Assumption~\ref{ass:journal-spike-flat}, with probability tending to one,
\[
R_{\rm sign}(\nu_\star,t_\star)
\vee
R_{\rm sign}^{\rm disc}(\nu_\star,\eta_\star,m)
\lesssim
\frac{\sigma_\varepsilon^2k}{n}
+
(\Theta_H+\sigma_\varepsilon^2)
\frac{a\lambda_T}{\lambda_h^2}.
\]
\end{lemma}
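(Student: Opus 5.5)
Work on the event \(\mathcal E_{\rm cs}\) of Lemma~\ref{lem:journal-spike-event} and write the estimator in sample-space form: \(\widehat\beta=n^{-1}X^\top h(K)y\), with \(h=h_{\nu_\star,t_\star}\) or \(h=h^{\rm disc}_{\nu_\star,\eta_\star,m}\). Write \(y=X_H\beta_H^\star+\varepsilon\), using head support. Then \(\widehat\beta-\beta^\star\) splits into a head block and a tail block. The head block is \((M-I_k)\beta_H^\star+n^{-1}X_H^\top h(K)\varepsilon\). The tail block is \(n^{-1}X_T^\top h(K)X_H\beta_H^\star+n^{-1}X_T^\top h(K)\varepsilon\). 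Since \(\Sigma\) is block diagonal, the population risk is the sum of four pieces: head bias, head variance, tail leakage bias, and tail variance. I bound each piece separately and uniformly in \(m\ge1\).

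For the head bias, \eqref{eq:common-spike-head-map-event} gives \(\lambda_h\|(M-I_k)\beta_H^\star\|^2\lesssim r_T^2\Theta_H=\Theta_H a\lambda_T/\lambda_h^2\).

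For the head variance, the term is \((\sigma_\varepsilon^2/n)\,\lambda_h\operatorname{tr}(n^{-1}X_H^\top h(K)^2X_H)\). I will use that \(h(K)\) is bounded in operator norm by \(O(\lambda_h^{-1})\) on the spectral band \([a-Cw_T,a+\lambda_h+Cw_T]\). For the continuous filter this follows because \(|h_{\nu,t}(x)|\le te^{(\nu-x)t}\le t_\star e^{(\lambda_h+Cw_T)t_\star}\lesssim\lambda_h^{-1}\). The discrete case is analogous, using \(\max_\ell(1+(1+Cr_T)/m)^\ell\le C\). Then \(\|X_H/\sqrt n\|^2=O(\lambda_h)\) and \(k=O(1)\) give \(\lesssim\sigma_\varepsilon^2k/n\).

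For the tail variance, the term is \((\sigma_\varepsilon^2/n)\operatorname{tr}(h(K)K_{2,T}h(K))\le(\sigma_\varepsilon^2/n)\|h(K)\|^2\operatorname{tr}(K_{2,T})\). With \(\operatorname{tr}K_{2,T}\asymp na\lambda_T\), this is \(\lesssim\sigma_\varepsilon^2 a\lambda_T/\lambda_h^2\).

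The main obstacle is the tail leakage bias, which is \(\|\Lambda_T^{1/2}n^{-1}X_T^\top h(K)X_H\beta_H^\star\|^2=n^{-1}\beta_H^{\star\top}X_H^\top h(K)K_{2,T}h(K)X_H\beta_H^\star/n\). The crude bound \(\|K_{2,T}\|\|h\|^2\|X_H\beta\|^2/n\lesssim a\lambda_T\lambda_h^{-2}\cdot\Theta_H\) already suffices, because \(\|K_{2,T}\|_{\rm op}=\lambda_T\|K_T\|_{\rm op}\lesssim a\lambda_T\) and \(\|X_H\beta_H^\star\|^2/n\lesssim\Theta_H\). So leakage is not actually delicate in Case~I. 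It contributes \(\Theta_H a\lambda_T/\lambda_h^2\), matching the head-bias order.

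The step I expect to need the most care is the uniform-in-\(m\) operator bound on \(h^{\rm disc}(K)\) when eigenvalues lie below \(\nu_\star\), where the geometric ratio \(1+\eta_\star(\nu_\star-x)\) exceeds one. Here I use \(\eta_\star(\nu_\star-x)\le(\lambda_h+Cw_T)/(m\lambda_h)\), so \(\eta_\star\sum_{\ell<m}(1+(1+Cr_T)/m)^\ell\le e^{1+Cr_T}/\lambda_h\). For eigenvalues above \(\nu_\star\) (at most \(a+\lambda_h+Cw_T\)) and small \(m\), the ratio can be negative with modulus at most \(1+Cr_T/m\), which is still bounded. Summing the four pieces gives the claim.
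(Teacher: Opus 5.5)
Your proof is correct and follows essentially the same route as the paper. You use the same four-term decomposition: head bias from the head-map event, head variance $\lesssim\sigma_\varepsilon^2k/n$, tail noise via $\|h(K)\|_{\rm op}^2\operatorname{tr}(K_{2,T})$, and the crude leakage bound $n^{-1}\|X_H\beta_H^\star\|_2^2\|h(K)\|_{\rm op}^2\|K_{2,T}\|_{\rm op}$, together with a uniform $O(\lambda_h^{-1})$ bound on $h(K)$ over $\operatorname{spec}(K)\subset[a-Cw_T,a+\lambda_h+Cw_T]$; your explicit geometric-sum estimate for the discrete filter supplies the detail the paper leaves implicit, and only cosmetic slips remain: the pointwise bound should read $|h_{\nu,t}(x)|\le te^{(\nu-x)_+t}$, and your displayed leakage expression carries one factor $n^{-1}$ too many, although the crude bound you actually apply is correct.
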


\begin{proof}
Work on \(\mathcal E_{\rm cs}\).  The head-map estimate
\eqref{eq:common-spike-head-map-event} and the identity
\(\lambda_hh_{\nu_\star,t_\star}(a+\lambda_h)=1\) give
\[
B_{\rm head}(\nu_\star,t_\star)
\lesssim
\Theta_Hr_T^2
\asymp
\Theta_H\frac{a\lambda_T}{\lambda_h^2}.
\]
The \(k\) resolved head coordinates contribute at most
\(C\sigma_\varepsilon^2k/n\) variance.

For every \(\mu\in\operatorname{spec}(K)\), the tail event and finite-rank
interlacing give \(\mu\ge a-Cw_T\).  Consequently,
\[
h_{\nu_\star,t_\star}(\mu)
=
\int_0^{1/\lambda_h}e^{-(\mu-a-\lambda_h)s}\,ds
\le
\frac{C}{\lambda_h},
\]
because \(w_T/\lambda_h\to0\).  Since
\(K_{2,T}=\lambda_TK_T\) and
\(\operatorname{tr}(K_T)\asymp na\), the conditional tail-noise contribution
is bounded by
\[
\frac{\sigma_\varepsilon^2}{n}
\operatorname{tr}\!\left\{
h_{\nu_\star,t_\star}(K)^2K_{2,T}
\right\}
\lesssim
\frac{\sigma_\varepsilon^2a\lambda_T}{\lambda_h^2}.
\]

Head support removes true tail-signal bias, but sample-induced head-to-tail
leakage remains.  Here it is controlled directly in the common-spike model.
Indeed,
\[
\mathcal L_{H\to T}
=
\left\|
\frac1nX_T^\top
h_{\nu_\star,t_\star}(K)X_H\beta_H^\star
\right\|_{\Sigma_T}^2
\le
\left\|\frac{X_H\beta_H^\star}{\sqrt n}\right\|_2^2
\left\|h_{\nu_\star,t_\star}(K)\right\|_{\rm op}^2
\left\|K_{2,T}\right\|_{\rm op}.
\]
On \(\mathcal E_{\rm cs}\), the three factors are bounded by
\(C\Theta_H\), \(C/\lambda_h^2\), and \(Ca\lambda_T\), respectively.
Consequently,
\[
\mathcal L_{H\to T}
\lesssim
\Theta_H\frac{a\lambda_T}{\lambda_h^2}.
\]
Adding head bias, head variance, tail noise, and leakage proves the claim.
For the discrete path, the geometric-series identity gives, uniformly in
\(m\ge1\),
\[
\left|
h_{\nu_\star,\eta_\star,m}^{\rm disc}(\mu)
\right|
\le
\frac{C}{\lambda_h}
\qquad
(\mu\ge a-Cw_T).
\]
Together with the discrete head-map part of
\eqref{eq:common-spike-head-map-event}, the same noise and leakage calculation
proves the identical bound for
\(R_{\rm sign}^{\rm disc}(\nu_\star,\eta_\star,m)\).
\end{proof}

\begin{lemma}[Admissible endpoint lower bound]
\label{lem:journal-negative-lower}
Under Assumption~\ref{ass:journal-spike-flat}, suppose also that
\(n\lambda_T/\lambda_h\to\infty\).  Then, with probability tending to one,
\[
R_{\rm neg}^\star
\gtrsim
\frac{\sigma_\varepsilon^2k}{n}
+
\frac{\sigma_\varepsilon\sqrt{\Theta_Ha\lambda_T}}{\lambda_h}.
\]
\end{lemma}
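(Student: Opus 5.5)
We work on \(\mathcal E_{\rm cs}\) from Lemma~\ref{lem:journal-spike-event}, intersected with the barrier event of Proposition~\ref{prop:journal-mp-barrier}. Fix an admissible \(\nu\in[0,\widehat\mu_{\min}^+)\) and set \(b:=a-\nu\). On this event \(b\ge cw_T>0\). The endpoint is then the sample-space filter \(h(K)=(K-\nu I_n)^{-1}\), which is positive definite. We bound its risk from below by two nonnegative pieces and minimize over \(b\).

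\emph{Step 1: head variance.} Since \(k=O(1)\), the head block contributes conditional variance \(\gtrsim\sigma_\varepsilon^2k/n\) uniformly in \(\nu\). The head filter values are \(\mu/(\mu-\nu)\ge1\) on the top \(k\) sample directions, and their population weights \(\widehat v_i^\top\Sigma\widehat v_i/\widehat\mu_i\asymp1\) there. This gives the first term.

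\emph{Step 2: head bias in terms of \(b\).} Use the matrix inversion lemma exactly as in Lemma~\ref{lem:journal-positive-head-floor}, but with \(K_T-\nu I_n\) in place of \(K_T\). This gives
\[
M_\nu=C_\nu(I_k+C_\nu)^{-1},\qquad C_\nu=\tfrac1nX_H^\top(K_T-\nu I_n)^{-1}X_H .
\]
On the event, \(\|K_T-aI_n\|_{\rm op}\le Cw_T\), so the spectrum of \(K_T-\nu I_n\) lies in \([b-Cw_T,\,b+Cw_T]\). This is positive because \(\nu<\widehat\mu_{\min}^+\le\lambda_n(K_T)\).

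For \(b\ge C'w_T\), the resolvent bound gives \(C_\nu\approx(\lambda_h/b)I_k\). Then \(I_k-M_\nu\approx\{b/(b+\lambda_h)\}I_k\), and the Cauchy--Schwarz step of Lemma~\ref{lem:journal-positive-head-floor} gives head bias \(\gtrsim\Theta_H\,b^2/(b+\lambda_h)^2\).

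If instead \(b\) is within a constant multiple of \(w_T\), we do not need the bias term at all; Step 3 alone will suffice there.

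\emph{Step 3: tail variance (the main obstacle).} The tail-noise contribution is \((\sigma_\varepsilon^2/n)\operatorname{tr}\{h(K)^2K_{2,T}\}\). Expand it in the eigenbasis of \(K\) and keep only the directions in the band \(|\mu_i-a|\le Cw_T\). On these, \(\mu_i-\nu\le b+Cw_T\), so \(h(\mu_i)^2\ge(b+Cw_T)^{-2}\). By \eqref{eq:common-spike-projected-tail-mass}, the projected \(K_{2,T}\)-mass on the band is \(\gtrsim na\lambda_T\). Hence
\[
\text{tail variance}\gtrsim\frac{\sigma_\varepsilon^2a\lambda_T}{(b+Cw_T)^2}.
\]
The delicate point is that only the projected square mass is available, and it must be paired with a filter lower bound holding uniformly over the band. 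This pairing is exactly what the projected-mass event was built for.

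\emph{Step 4: minimize over \(b\).} Combining Steps 2 and 3, the risk is at least
\[
\frac{\sigma_\varepsilon^2k}{n}+\inf_{b\ge cw_T}\Big\{\Theta_H\frac{b^2}{(b+\lambda_h)^2}+\frac{\sigma_\varepsilon^2a\lambda_T}{(b+Cw_T)^2}\Big\}.
\]
This is the endpoint envelope specialized to the flat tail.

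Since \(b\le a\asymp\lambda_h\), the first term inside the infimum is \(\asymp\Theta_Hb^2/\lambda_h^2\). Write \(u:=b+Cw_T\asymp b\) (recall \(b\gtrsim w_T\)). The infimum is then \(\gtrsim\inf_u\{\Theta_Hu^2/\lambda_h^2+\sigma_\varepsilon^2a\lambda_T/u^2\}\). By AM--GM this is \(\ge2\sigma_\varepsilon\sqrt{\Theta_Ha\lambda_T}/\lambda_h\).

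If the bias step's approximation \(C_\nu\approx(\lambda_h/b)I_k\) fails because \(b\lesssim w_T\), we use the tail-variance term alone. It gives \(\sigma_\varepsilon^2a\lambda_T/w_T^2\asymp\sigma_\varepsilon^2\), which dominates the claimed rate. This is the only place where \(n\lambda_T/\lambda_h\to\infty\) enters, through the event \(\mathcal E_{\rm cs}\).
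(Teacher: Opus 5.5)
Your argument is correct. It uses the same ingredients as the paper: the MP barrier giving \(b\gtrsim w_T\), the projected tail square mass paired with the uniform band bound \(h(\mu_i)^2\ge(b+Cw_T)^{-2}\), the Woodbury head map \(C_\nu(I_k+C_\nu)^{-1}\), and a resolved-head variance floor. What differs is how you organize the minimization over \(b\).

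The paper splits at the balance scale \(b_\star=(\sigma_\varepsilon^2a\lambda_T\lambda_h^2/\Theta_H)^{1/4}\). For \(b\le Cb_\star\) it uses tail variance alone. For \(b\ge cb_\star\) it uses head bias alone, via an asymptotically sharp expansion \(C_\nu=(\lambda_h/b)I_k+E_\nu\) whose error must be small relative to \(b/(\lambda_h+b)\); this is where \(w_T/b_\star\to0\) and the head-Wishart precision enter.

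You instead add the two lower bounds for \(b\ge C'w_T\) and apply AM--GM. That needs only the crude bound \(\|C_\nu\|_{\rm op}\le\|G_H\|_{\rm op}\|(K_T-\nu I_n)^{-1}\|_{\rm op}\lesssim\lambda_h/b\), i.e. \((I_k+C_\nu)^{-1}\succeq c\,b/(b+\lambda_h)\,I_k\). This is exactly how the paper treats the endpoint in Case~II (Lemma~\ref{lem:journal-multispike-comparators}). So your route is the more economical one. The paper's Case~I route buys an identification of the head bias up to a factor \(1+o(1)\) in the large-\(b\) regime, which the order bound does not need.

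There are three local slips to fix.

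\begin{enumerate}
\item In Step~2 the inequality is reversed. Interlacing with \(K_H\succeq0\) gives \(\lambda_n(K_T)\le\widehat\mu_{\min}^+\), not \(\widehat\mu_{\min}^+\le\lambda_n(K_T)\). So admissibility alone does not make \(K_T-\nu I_n\) positive definite; admissible \(\nu\in(\lambda_n(K_T),\widehat\mu_{\min}^+)\) can occur. In your argument positivity instead follows from \(b\ge C'w_T\) with \(C'>C\) together with \(\|K_T-aI_n\|_{\rm op}\le Cw_T\). That is the only regime where you use it, so nothing breaks.
\item Because \(C'\) is a fixed constant, the statement \(C_\nu\approx(\lambda_h/b)I_k\) holds only up to relative error \(O(C/C')\). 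It should be stated as the constant-factor operator bound above, not as an asymptotic equivalence; Step~4 only uses the constant-factor version anyway.
\item The Step~1 claim that the top-\(k\) population weights are \(\asymp1\) is true but unargued. It needs a short Rayleigh-quotient argument. Alternatively, use the paper's one-line bound \(\frac{\sigma_\varepsilon^2}{n}\tr\{K_{2,H}(K-\nu I_n)^{-2}\}\ge\sigma_\varepsilon^2\tr(K_{2,H})/(n\|K\|_{\rm op}^2)\gtrsim\sigma_\varepsilon^2k/n\), which avoids any eigenvector alignment.
\end{enumerate}
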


\begin{proof}
Work on \(\mathcal E_{\rm cs}\) and write \(b=a-\nu\).
Proposition~\ref{prop:journal-mp-barrier} implies \(b\gtrsim w_T\) for every
admissible endpoint.  Define the balance scale
\[
b_\star
:=
\left(
\frac{\sigma_\varepsilon^2a\lambda_T\lambda_h^2}{\Theta_H}
\right)^{1/4},
\]
for which \(b_\star/w_T\asymp(\lambda_h/\lambda_T)^{1/4}\to\infty\).

On the tail sub-bulk in
\eqref{eq:common-spike-projected-tail-mass},
\[
0<\mu_i-\nu=b+(\mu_i-a)\lesssim b+w_T\lesssim b .
\]
The projected squared-mass readout consequently gives
\[
V_{\rm tail}(A_\nu)
\gtrsim
\frac{\sigma_\varepsilon^2a\lambda_T}{b^2}.
\]
Therefore, if \(b\le Cb_\star\), the tail variance alone is at least a
constant multiple of
\[
\frac{\sigma_\varepsilon\sqrt{\Theta_Ha\lambda_T}}{\lambda_h}.
\tag{Small-\(b\) endpoint regime}
\label{eq:small-b-endpoint-regime}
\]

It remains to control \(b\ge cb_\star\).  In this regime \(b/w_T\to\infty\),
so \(B_\nu:=K_T-\nu I_n=(K_T-aI_n)+bI_n\)
is positive definite.  Put
\(C_\nu:=n^{-1}X_H^\top B_\nu^{-1}X_H\).
The matrix inversion lemma \citep{HornJohnson2012MatrixAnalysis} gives the
exact head signal map
\[
\frac1nX_H^\top(K-\nu I_n)^{-1}X_H
=
C_\nu(I_k+C_\nu)^{-1}.
\]
The tail-floor and head-Wishart events imply
\[
C_\nu
=
\frac{\lambda_h}{b}I_k
+
E_\nu,
\qquad
\|E_\nu\|_{\rm op}
\lesssim
\frac{\lambda_hw_T}{b^2}
+
\frac{\lambda_h}{b\sqrt n}.
\]
The map \(C\mapsto C(I+C)^{-1}\) damps this error by
\((1+\lambda_h/b)^{-2}\).  Uniformly for \(b\ge cb_\star\),
\[
\left\|
C_\nu(I+C_\nu)^{-1}
-
\frac{\lambda_h}{\lambda_h+b}I_k
\right\|_{\rm op}
=
o\!\left(\frac{b}{\lambda_h+b}\right),
\]
where \(w_T/b_\star\to0\) and
\(n\lambda_T/\lambda_h\to\infty\) control the two error terms.  Hence
\[
B_{\rm head}(A_\nu)
\gtrsim
\Theta_H
\left(\frac{b}{\lambda_h+b}\right)^2
\gtrsim
\frac{\sigma_\varepsilon\sqrt{\Theta_Ha\lambda_T}}{\lambda_h}
\tag{Large-\(b\) endpoint regime}
\label{eq:large-b-endpoint-regime}
\]
for \(b\ge cb_\star\); when \(b\gtrsim\lambda_h\), the constant head-bias
floor is even larger.

For completeness, put \(D_\nu=K-\nu I_n\).  Since \(D_\nu\succ0\) for every
admissible endpoint and \(K_{2,H}=\lambda_hX_HX_H^\top/n\), its resolved-head
variance satisfies
\[
\frac{\sigma_\varepsilon^2}{n}
\tr(D_\nu^{-1}K_{2,H}D_\nu^{-1})
=
\frac{\sigma_\varepsilon^2\lambda_h}{n^2}
\|D_\nu^{-1}X_H\|_F^2
\ge
\frac{\sigma_\varepsilon^2\lambda_h\tr(G_H)}
{n\|D_\nu\|_{\rm op}^2}
\gtrsim
\frac{\sigma_\varepsilon^2k}{n}.
\]
Here \(\|D_\nu\|_{\rm op}\le\|K\|_{\rm op}\lesssim\lambda_h\) and
\(\tr(G_H)\asymp k\lambda_h\) on \(\mathcal E_{\rm cs}\), so the bound is
uniform in \(\nu\).
Combining
\eqref{eq:small-b-endpoint-regime} and
\eqref{eq:large-b-endpoint-regime} proves the uniform lower envelope.
\end{proof}

\begin{proof}[Proof of Theorem~\ref{thm:journal-spike-flat-separation}]
The three risk bounds follow from
Lemmas~\ref{lem:journal-positive-head-floor}--\ref{lem:journal-negative-lower}.
The non-saturation condition makes the common head-variance term lower order
than the signed tail-exposure scale.  In the interior regime, division by the
endpoint and positive-filter lower bounds yields the two ratios.
\end{proof}

\subsection{Case II: heterogeneous-head, general-tail recovery}
\label{app:journal-multispike-floor-critical-proofs}

Write
\[
X=[X_H\;X_T],\qquad
K_H=\frac1nX_HX_H^\top,\qquad
K_T=\frac1nX_TX_T^\top,\qquad
K=K_H+K_T,
\]
and define the centered first-moment tail Gram and the squared-spectrum
readout
\[
E_T:=K_T-aI_n,\qquad
K_{2,T}:=\frac1nX_T\Lambda_TX_T^\top
=\frac1n\sum_{j>k}\lambda_j^2z_jz_j^\top .
\]
Let \(P_H\) project onto \(\operatorname{range}(X_H)\).  With probability
tending to one \(X_H\) has full column rank and
\(P_H=n^{-1}X_HG_H^{-1}X_H^\top\) with
\(G_H=n^{-1}X_H^\top X_H\).

\begin{lemma}[Weighted Gaussian Gram concentration]
\label{lem:weighted-gaussian-gram-concentration}
Let \(z_1,\ldots,z_d\stackrel{\rm iid}{\sim}N(0,I_n)\), let
\(\alpha_1,\ldots,\alpha_d\ge0\) be deterministic, and put
\[
W_\alpha
:=
\frac1n\sum_{j=1}^d
\alpha_j(z_jz_j^\top-I_n),
\qquad
A_2:=\sum_{j=1}^d\alpha_j^2,
\qquad
\alpha_{\max}:=\max_{1\le j\le d}\alpha_j .
\]
There is a numerical constant \(C\) such that, for every \(s>0\),
\[
\mathbb P\!\left[
\|W_\alpha\|_{\rm op}
>
C\left\{
\sqrt{\frac{A_2(n+s)}{n^2}}
+
\frac{\alpha_{\max}(n+s)}{n}
\right\}
\right]
\le 2e^{-s}.
\tag{Weighted Gaussian Gram bound}
\label{eq:weighted-gaussian-gram-bound}
\]
If \(U\in\mathbb R^{n\times q}\) has orthonormal columns and is deterministic
or independent of \((z_j)_{j=1}^d\), then, conditionally on \(U\),
\[
\mathbb P\!\left[
\|U^\top W_\alpha U\|_{\rm op}
>
C\left\{
\sqrt{\frac{A_2(q+s)}{n^2}}
+
\frac{\alpha_{\max}(q+s)}{n}
\right\}
\ \middle|\ U
\right]
\le 2e^{-s}.
\tag{Fixed-frame weighted Gaussian Gram bound}
\label{eq:fixed-frame-weighted-gaussian-gram-bound}
\]
\end{lemma}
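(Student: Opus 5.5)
The plan is the standard $\varepsilon$-net argument for quadratic forms combined with Bernstein's inequality for sums of weighted centered $\chi^2_1$ variables. For a fixed unit vector $u\in\mathbb R^n$,
\[
u^\top W_\alpha u=\frac1n\sum_{j=1}^d\alpha_j\bigl\{(u^\top z_j)^2-1\bigr\},
\]
and $u^\top z_j\stackrel{\rm iid}{\sim}N(0,1)$. So the quadratic form is a weighted sum of independent centered $\chi^2_1$ variables, which are sub-exponential with $\psi_1$-norm bounded by a numerical constant. Bernstein's inequality for sub-exponential sums (equivalently, the Laurent--Massart weighted-$\chi^2$ bound already used in the proof of Theorem~\ref{thm:finite-grid-validation-oracle}, in its general-weight form) gives, for every $x>0$,
\[
\mathbb P\!\left[\,|u^\top W_\alpha u|>\frac{2\sqrt{A_2x}+2\alpha_{\max}x}{n}\right]\le 2e^{-x}.
\]
This one-direction bound is the only probabilistic input; the rest is deterministic discretization.

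Next I would pass from a fixed $u$ to the operator norm. Take a $1/4$-net $\mathcal N$ of the unit sphere $S^{n-1}$ with $|\mathcal N|\le 9^n$. Because $W_\alpha$ is symmetric, the standard net lemma gives
\[
\|W_\alpha\|_{\rm op}\le 2\max_{u\in\mathcal N}|u^\top W_\alpha u|.
\]
Apply the fixed-$u$ bound with $x=s+n\log 9$ and take a union bound over $\mathcal N$; the failure probability is at most $2\cdot 9^ne^{-s-n\log 9}=2e^{-s}$. Since $x\le C(n+s)$, the threshold is at most
\[
C\Bigl\{\sqrt{A_2(n+s)}/n+\alpha_{\max}(n+s)/n\Bigr\},
\]
which is exactly \eqref{eq:weighted-gaussian-gram-bound}.

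For the fixed-frame bound \eqref{eq:fixed-frame-weighted-gaussian-gram-bound}, condition on $U$. Since $U$ is deterministic or independent of $(z_j)$ and $U^\top U=I_q$, the projected vectors $U^\top z_j\sim N(0,I_q)$ remain i.i.d. conditionally on $U$, and
\[
U^\top W_\alpha U=\frac1n\sum_j\alpha_j\bigl\{(U^\top z_j)(U^\top z_j)^\top-I_q\bigr\}.
\]
This is a $q\times q$ matrix of the same form with the same normalization $1/n$. Repeating the net argument on $S^{q-1}$, with a net of size $9^q$ and $x=s+q\log 9$, gives \eqref{eq:fixed-frame-weighted-gaussian-gram-bound} with $n+s$ replaced by $q+s$ in both terms and the $1/n^2$, $1/n$ prefactors unchanged.

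The only point requiring care is the sub-exponential Bernstein constant. One needs the variance proxy to be exactly $\sum_j\alpha_j^2=A_2$ and the scale proxy to be $\alpha_{\max}$, with absolute constants. This holds because $\|\chi^2_1-1\|_{\psi_1}$ is a numerical constant, and in fact Laurent--Massart's Lemma~1 gives explicit constants for weighted sums. Everything else is routine.
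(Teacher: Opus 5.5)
Your proposal is correct and follows the paper's proof essentially step for step. Both use the Laurent--Massart weighted-$\chi^2$ tail for a fixed $u$, a $1/4$-net of size $9^n$ with $\|M\|_{\rm op}\le 2\max_{u\in\mathcal N}|u^\top Mu|$, a union bound at $x=s+n\log 9$, and the same argument on $S^{q-1}$ after conditioning on $U$. The only difference is that you cite the weighted-$\chi^2$ inequality, while the paper rederives it via the exponential-moment/Chernoff calculation to record the explicit dependence on $A_2$ and $\alpha_{\max}$.
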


\begin{proof}
For independent \(g_j\sim N(0,1)\), the exponential-moment identity
\[
\log\mathbb E
\exp\!\left\{
\theta\sum_{j=1}^d\alpha_j(g_j^2-1)
\right\}
=
\sum_{j=1}^d
\left\{
-\theta\alpha_j-\frac12\log(1-2\theta\alpha_j)
\right\}
\]
holds for \(0<\theta<(2\alpha_{\max})^{-1}\).  The inequality
\(-x-\log(1-x)\le x^2/\{2(1-x)\}\), \(0\le x<1\), followed by Chernoff
optimization gives, for every \(x>0\),
\[
\mathbb P\!\left(
\left|
\sum_{j=1}^d\alpha_j(g_j^2-1)
\right|
>
2\sqrt{A_2x}+2\alpha_{\max}x
\right)
\le 2e^{-x}.
\tag{Weighted chi-square bound}
\label{eq:weighted-chi-square-bound}
\]
This is the weighted chi-square inequality of
\citet[Lemma~1]{LaurentMassart2000}; the calculation is included to record
the constants and the dependence on \(A_2\) and \(\alpha_{\max}\).
The lower tail follows from the same calculation with \(\theta<0\), and its
bound \(2\sqrt{A_2x}\) is dominated by the displayed two-sided threshold.

Let \(\mathcal N_n\) be a \(1/4\)-net of the unit sphere in
\(\mathbb R^n\) with \(|\mathcal N_n|\le9^n\); see, for example,
\citet[Section~4.4]{Vershynin2018High-dimensional}.  For every symmetric matrix
\(M\),
\[
\|M\|_{\rm op}
\le
2\max_{u\in\mathcal N_n}|u^\top Mu|;
\]
indeed, approximate a unit vector attaining the operator norm by an element
of the net and absorb the resulting \(2(1/4)\|M\|_{\rm op}\) error.
For fixed \(u\), the variables \(u^\top z_j\) are independent \(N(0,1)\).
Apply \eqref{eq:weighted-chi-square-bound}, take a union bound over
\(\mathcal N_n\), and set \(x=n\log9+s\).  Absorbing \(\log9\) and the net
factor into a numerical constant proves
\eqref{eq:weighted-gaussian-gram-bound}.

Conditionally on \(U\), the vectors \(U^\top z_j\) are independent
\(N(0,I_q)\).  Repeating the same argument on a \(1/4\)-net of the unit
sphere in \(\mathbb R^q\), now with \(x=q\log9+s\), proves the asserted
fixed-frame inequality.
\end{proof}

\begin{lemma}[Gaussian Tsigler--Bartlett localization event]
\label{lem:journal-multispike-gaussian-event}
Under Assumption~\ref{ass:journal-multispike}, there is an event
\(\mathcal E_{\rm ht}\) with probability tending to one on which
\[
\left\|
\Lambda_H^{-1/2}G_H\Lambda_H^{-1/2}-I_k
\right\|_{\rm op}
\lesssim\chi_{n,k},
\]
\[
\begin{aligned}
\|E_T\|_{\rm op}&\lesssim w_k,&
\|P_HE_TP_H\|_{\rm op}
&\lesssim w_k\chi_{n,k},&
\|P_HE_TP_H^\perp\|_{\rm op}&\lesssim w_k,\\
\|K_{2,T}\|_{\rm op}&\lesssim w_k^2,&
\tr(K_{2,T})&=(1+o(1))S_k.&
\end{aligned}
\]
At least \(n-k\) eigenvalues of \(K\) lie in
\([a-Cw_k,a+Cw_k]\).  If \(P_B\) denotes their spectral projector, then
\[
\tr(P_BK_{2,T})\ge cS_k .
\tag{Projected square mass}
\label{eq:ms-projected-square-mass}
\]
\end{lemma}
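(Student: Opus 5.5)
We obtain every item from Lemma~\ref{lem:weighted-gaussian-gram-concentration}, plus standard head-Wishart concentration, by conditioning on the head columns. Because $\Sigma$ is diagonal, $X_H$ and $X_T$ are independent. Write $X_H=Z_H\Lambda_H^{1/2}$. Then $\Lambda_H^{-1/2}G_H\Lambda_H^{-1/2}=Z_H^\top Z_H/n$, and the standard $n\times k$ Gaussian singular-value bound \citep[Theorem~4.6.1]{Vershynin2018High-dimensional} with deviation level $\log n$ gives $\|Z_H^\top Z_H/n-I_k\|_{\rm op}\lesssim\chi_{n,k}$, since $\chi_{n,k}\to0$. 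For the tail, set $\alpha_j=\lambda_j$ for $j>k$. Then $E_T=W_\alpha$, with $A_2=S_k$ and $\alpha_{\max}=\lambda_{k+1}$. Taking $s=n$ in \eqref{eq:weighted-gaussian-gram-bound} gives $\|E_T\|_{\rm op}\lesssim\sqrt{S_k/n}+\lambda_{k+1}=w_k$. This also gives $\|P_HE_TP_H^\perp\|_{\rm op}\le\|E_T\|_{\rm op}$. For the head block, $P_H$ is a function of $X_H$ and hence independent of the $z_j$. We apply \eqref{eq:fixed-frame-weighted-gaussian-gram-bound} with $U$ an orthonormal basis of $\operatorname{range}(X_H)$, $q=k$ and $s=\log n$. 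This yields $\sqrt{S_k(k+\log n)/n^2}+\lambda_{k+1}(k+\log n)/n\le w_k\chi_{n,k}+\lambda_{k+1}\chi_{n,k}^2\lesssim w_k\chi_{n,k}$.

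For the square-mass quantities, use $\alpha_j=\lambda_j^2$, so that $K_{2,T}=\frac1n\sum_{j>k}\lambda_j^2 I_n+W_{\alpha}$ with $A_2=\sum\lambda_j^4\le\lambda_{k+1}^2S_k$ and $\alpha_{\max}=\lambda_{k+1}^2$. The operator bound is $S_k/n+\lambda_{k+1}\sqrt{S_k/n}+\lambda_{k+1}^2\lesssim w_k^2$. The trace statement reduces to a scalar weighted chi-square: $\tr(K_{2,T})=\frac1n\sum_j\lambda_j^2\|z_j\|^2$ has mean $S_k$ and, by \eqref{eq:weighted-chi-square-bound}, fluctuation of order $\sqrt{n\sum_j\lambda_j^4}/n\le\lambda_{k+1}\sqrt{S_k/n}$. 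Dividing by $S_k$, the relative fluctuation is $\lambda_{k+1}/\sqrt{nS_k}\cdot\sqrt n\,/\sqrt n$, which is $o(1)$ because $S_k/\lambda_{k+1}^2\to\infty$ (this follows from Assumption~\ref{ass:journal-multispike}).

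For the spectral band, $K=aI_n+E_T+K_H$ with $K_H\succeq0$ of rank at most $k$. Weyl's inequality puts every eigenvalue of $aI_n+E_T$ in $[a-Cw_k,a+Cw_k]$. Interlacing for the rank-$k$ PSD perturbation then shows that $\lambda_{k+1}(K),\dots,\lambda_n(K)$ lie in $[a-Cw_k,\lambda_1(aI+E_T)]\subset[a-Cw_k,a+Cw_k]$, so at least $n-k$ eigenvalues are in the band. Let $P_B$ be the projector onto those $n-k$ eigenvectors.

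The projected square mass is the main obstacle, because $P_B$ depends on the tail. We would write $\tr(P_BK_{2,T})=\tr(K_{2,T})-\tr((I-P_B)K_{2,T})$ and use $\operatorname{rank}(I-P_B)\le k$. This gives $\tr((I-P_B)K_{2,T})\le k\|K_{2,T}\|_{\rm op}\lesssim kw_k^2$. By the assumption $kw_k^2/S_k\to0$, this is $o(S_k)$. Hence $\tr(P_BK_{2,T})\ge(1-o(1))S_k\ge cS_k$. The case where the band contains more than $n-k$ eigenvalues only enlarges $P_B$, and so only helps.

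Finally, intersect the finitely many events. Each fails with probability at most $2e^{-n}$, $2/n$, or an $o(1)$ head-Wishart term, so $\mathcal E_{\rm ht}$ has probability tending to one. The only delicate points are checking that the $\lambda_{k+1}$ terms are absorbed into $w_k$ and $w_k\chi_{n,k}$, and using the independence of $P_H$ from the tail when conditioning.
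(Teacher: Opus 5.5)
Your proposal is correct and follows the paper's proof essentially step for step. The shared steps are: the weighted Gram bound with $\alpha_j=\lambda_j$ and with $\alpha_j=\lambda_j^2$ at $s=n$; the fixed-frame bound, conditional on the independent head columns, for $\|P_HE_TP_H\|_{\rm op}$; rank-$k$ interlacing for the spectral band; and the deterministic estimate $\tr\{(I-P_B)K_{2,T}\}\le k\|K_{2,T}\|_{\rm op}=o(S_k)$ for the projected square mass. The only differences are cosmetic: you use Vershynin's singular-value theorem instead of a net plus weighted chi-square for the head Wishart, and a weighted chi-square tail instead of Chebyshev for $\tr(K_{2,T})$. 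Your relative-fluctuation expression also carries a stray factor $\sqrt n/\sqrt n$, but the conclusion $\lambda_{k+1}/\sqrt{nS_k}\to0$ is right.
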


\begin{proof}
In the population eigenbasis,
\[
K_T-aI_n
=
\frac1n\sum_{j>k}\lambda_j(z_jz_j^\top-I_n).
\]
Apply Lemma~\ref{lem:weighted-gaussian-gram-concentration} with
\(\alpha_j=\lambda_j\), \(A_2=S_k\),
\(\alpha_{\max}=\lambda_{k+1}\), and \(s=n\).  With probability at least
\(1-2e^{-n}\),
\[
\|K_T-aI_n\|_{\rm op}
\lesssim
\lambda_{k+1}+\sqrt{\frac{S_k}{n}}
=w_k .
\]
This is the same tail-Gram scale used in
the Tsigler--Bartlett conditioning argument: after division by
\(a=T_k/n\), the two terms are \(n/\mathfrak r_k\) and
\(\sqrt{n/\mathfrak R_k}\).  Conditional on \(X_H\), \(P_H\) is a fixed
rank-\(k\) projector independent of the tail columns.  Applying the weighted
Gram bound \eqref{eq:fixed-frame-weighted-gaussian-gram-bound} conditionally
on \(X_H\), with \(q=k\) and \(s=3\log n\), we obtain, with conditional
failure probability at most \(2n^{-3}\),
\[
\|P_HE_TP_H\|_{\rm op}
\lesssim
\frac{\sqrt{S_k(k+\log n)}}{n}
+
\frac{\lambda_{k+1}(k+\log n)}{n}
\lesssim
w_k\chi_{n,k},
\]
where the last inequality uses \(\chi_{n,k}=o(1)\), which follows from
Assumption~\ref{ass:journal-multispike}.  The off-diagonal estimate
follows from
\(\|P_HE_TP_H^\perp\|_{\rm op}\le\|E_T\|_{\rm op}\).

For the squared-spectrum readout, apply
\eqref{eq:weighted-gaussian-gram-bound} with
\(\alpha_j=\lambda_j^2\).  Since
\[
\sum_{j>k}\lambda_j^4
\le
\lambda_{k+1}^2S_k,
\qquad
\lambda_{k+1}\sqrt{\frac{S_k}{n}}
\le
\frac12\left(\lambda_{k+1}^2+\frac{S_k}{n}\right),
\]
the same \(s=n\) choice gives
\[
\begin{aligned}
\|K_{2,T}\|_{\rm op}
&\le
\frac{S_k}{n}
+
\left\|
K_{2,T}-\frac{S_k}{n}I_n
\right\|_{\rm op}\\
&\lesssim
\frac{S_k}{n}
+
\lambda_{k+1}^2
+
\sqrt{\frac{\sum_{j>k}\lambda_j^4}{n}}
\lesssim
\lambda_{k+1}^2+\frac{S_k}{n}
\lesssim w_k^2.
\end{aligned}
\]
Moreover,
\[
\tr(K_{2,T})
=
\frac1n\sum_{j>k}\lambda_j^2\|z_j\|_2^2
\]
has mean \(S_k\) and variance
\[
\operatorname{Var}\{\tr(K_{2,T})\}
=
\frac{2}{n}\sum_{j>k}\lambda_j^4
\le
\frac{2\lambda_{k+1}^2S_k}{n}.
\]
Therefore
\(\tr(K_{2,T})=(1+o_{\mathbb P}(1))S_k\) by Chebyshev's inequality and
\(S_k/\lambda_{k+1}^2\to\infty\), which follows from
Assumption~\ref{ass:journal-multispike}.

Finally,
\[
\Lambda_H^{-1/2}G_H\Lambda_H^{-1/2}
=
\frac1n Z_H^\top Z_H,
\]
where \(Z_H\in\mathbb R^{n\times k}\) has independent standard Gaussian
entries.  Apply the weighted chi-square bound above on a \(1/4\)-net of the
unit sphere in \(\mathbb R^k\), with \(n\) unit weights and
\(x=k\log9+3\log n\), to obtain
\[
\left\|
\Lambda_H^{-1/2}G_H\Lambda_H^{-1/2}-I_k
\right\|_{\rm op}
\lesssim
\chi_{n,k}
\]
with failure probability \(O(n^{-3})\).

On the intersection of these events, all eigenvalues of \(K_T\) lie in
\([a-Cw_k,a+Cw_k]\).  Since \(K_H\succeq0\) and
\(\operatorname{rank}(K_H)\le k\), rank-\(k\) Cauchy interlacing
\citep{HornJohnson2012MatrixAnalysis} shows that
at most \(k\) eigenvalues of \(K=K_T+K_H\) can lie above this band and none
can lie below it.  Thus the spectral projector \(P_B\) onto the band has
rank at least \(n-k\).  Since \(I-P_B\) has rank at most \(k\) and
\(K_{2,T}\succeq0\),
\[
\tr(P_BK_{2,T})
\ge
\tr(K_{2,T})-k\|K_{2,T}\|_{\rm op},
\]
and
\[
k\|K_{2,T}\|_{\rm op}
\lesssim k w_k^2
=o(S_k)
\]
by Assumption~\ref{ass:journal-multispike}.
This proves \eqref{eq:ms-projected-square-mass}.
\end{proof}

\begin{lemma}[Invariant-graph localized Duhamel head sandwich]
\label{lem:localized-duhamel-head-sandwich}
Let \(A_0\) and \(E\) be self-adjoint, let \(P\) be a rank-\(k\)
orthogonal projector, and suppose that
\[
A_0P=PA_0,\qquad A_0P^\perp=0,\qquad
\operatorname{spec}(A_0|_{\operatorname{range}(P)})\subset[g,G]
\]
for \(0<g\le G\) and \(G/g\le\kappa\).  There is a constant
\(c_\kappa>0\), depending only on \(\kappa\), such that the following
holds.  Let \(A=A_0+E\), with
\(\|E\|_{\rm op}\le c_\kappa g\), and put
\(\epsilon_H=\|PEP\|_{\rm op}\),
\(\epsilon_{HT}=\|PEP^\perp\|_{\rm op}\).  For
\(h_t(A)=\int_0^te^{-sA}\,ds\) and every \(t\ge0\),
\[
\left\|P\{h_t(A)-h_t(A_0)\}P\right\|_{\rm op}
\lesssim
\frac{\epsilon_H}{g^2}
+
\frac{\epsilon_{HT}^2}{g^3}
\left\{1+gt\,e^{C_\kappa t\|E\|_{\rm op}}\right\}.
\tag{Invariant-graph localized Duhamel}
\label{eq:localized-duhamel-head}
\]
Moreover,
\[
\|h_t(A)P\|_{\rm op}
\le
\frac{C_\kappa}{g}
\left\{
1+\epsilon_{HT}t\,e^{C_\kappa t\|E\|_{\rm op}}
\right\}.
\tag{Localized head exposure}
\label{eq:localized-head-exposure}
\]
In particular, if \(t\|E\|_{\rm op}\le c_0\), then
\(\|h_t(A)P\|_{\rm op}\le C_{\kappa,c_0}g^{-1}\).
All constants are independent of the two block dimensions and of \(t\).
\end{lemma}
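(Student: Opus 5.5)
The plan is to \emph{decouple before applying Duhamel}. Because \(A_0\) is block-diagonal with head spectrum in \([g,G]\) and tail block zero, and \(\|E\|_{\rm op}\le c_\kappa g\), the spectrum of \(A\) splits into a cluster in \([g-\|E\|,G+\|E\|]\) and a cluster in \([-\|E\|,\|E\|]\), separated by a gap \(\gtrsim g\).

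\textbf{Step 1: invariant graph.} Let \(Q\) be the spectral projector of \(A\) onto the upper cluster. I take the residual of \(\operatorname{range}(P)\) as an approximate invariant subspace to be \(P^\perp AP=P^\perp EP\), which has norm \(\epsilon_{HT}\). The Davis--Kahan \(\sin\Theta\) theorem, in its residual form, then gives
\[
\|P^\perp Q\|_{\rm op}\vee\|PQ^\perp\|_{\rm op}\lesssim \epsilon_{HT}/g .
\]
This bound does not involve \(\epsilon_H\). Equivalently, \(\operatorname{range}(Q)\) is the graph of some \(\Gamma:\operatorname{range}(P)\to\operatorname{range}(P^\perp)\) with \(\|\Gamma\|\lesssim\epsilon_{HT}/g\); this is the solution of the Riccati equation, obtained by a contraction argument when \(c_\kappa\) is small. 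Two facts follow from \(PQP=(I+\Gamma^*\Gamma)^{-1}\):
\[
\|PQP-P\|\lesssim\epsilon_{HT}^2/g^2,
\qquad
\|PQ^\perp P\|\lesssim\epsilon_{HT}^2/g^2 .
\]

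\textbf{Step 2: split along \(Q\).} Since \(Q\) commutes with \(A\),
\[
Ph_t(A)P=PQh_t(A)QP+PQ^\perp h_t(A)Q^\perp P .
\]
On \(\operatorname{range}(Q)\), \(A\succeq g/2\), so \(\|h_t(A)Q\|\le C/g\) uniformly in \(t\). On \(\operatorname{range}(Q^\perp)\), \(A\succeq-\|E\|\), so the integral form gives \(\|h_t(A)Q^\perp\|\le t e^{t\|E\|}\). The tail piece is therefore at most
\[
\|PQ^\perp\|^2\,t e^{t\|E\|}\lesssim (\epsilon_{HT}^2/g^3)\,gt\,e^{C_\kappa t\|E\|}.
\]
This is exactly where the leave-and-return coupling enters quadratically. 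The exposure bound \eqref{eq:localized-head-exposure} follows the same way: write \(h_t(A)P=h_t(A)QP+h_t(A)Q^\perp P\) and bound the two terms by \(C/g\) and \((\epsilon_{HT}/g)\,te^{t\|E\|}\).

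\textbf{Step 3: head piece by within-head Duhamel.} I transport \(A|_{\operatorname{range}(Q)}\) to \(\operatorname{range}(P)\) via the isometry \(J=(P+\Gamma)(I+\Gamma^*\Gamma)^{-1/2}\), which gives \(A_H=J^*AJ\). Expanding,
\[
A_H=PA_0P+PEP+O(\epsilon_{HT}\|\Gamma\|+G\|\Gamma\|^2),
\qquad\text{so}\qquad
\|A_H-A_0|_P\|\lesssim\epsilon_H+\epsilon_{HT}^2/g,
\]
using \(G\le\kappa g\). Both \(A_H\) and \(A_0|_P\) are \(\succeq g/2\). Duhamel gives
\[
h_t(B)-h_t(B_0)=\int_0^t\!\int_0^s e^{-(s-u)B}(B_0-B)e^{-uB_0}\,du\,ds ,
\]
whose norm is at most \(\|B-B_0\|\int_0^\infty se^{-gs/2}\,ds\lesssim\|B-B_0\|/g^2\). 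This yields the \(\epsilon_H/g^2+\epsilon_{HT}^2/g^3\) terms. Finally, \(PQh_t(A)QP=PJ\,h_t(A_H)\,J^*P\) differs from \(h_t(A_H)\) only through \(PJ-P\), which is \(O(\|\Gamma\|^2)\) in norm. Since \(\|h_t(A_H)\|\le C/g\), this frame change costs \(O(\epsilon_{HT}^2/g^3)\).

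\textbf{Main obstacle.} The step I expect to be hardest is Step 1, specifically getting a graph bound that depends only on \(\epsilon_{HT}\), uniformly in the block dimensions, while keeping every constant depending only on \(\kappa\). I would first try the residual \(\sin\Theta\) theorem with separation \(g/2\). If that does not give the clean graph representation, I would fall back to solving the Riccati equation \(\Gamma A_{HH}-A_{TT}\Gamma=A_{TH}-\Gamma A_{HT}\Gamma\) by Banach fixed point, where \(A_{HH}=PAP\), \(A_{TT}=P^\perp AP^\perp\), \(A_{TH}=P^\perp AP\), \(A_{HT}=PAP^\perp\). The Sylvester operator has inverse norm \(\lesssim 1/g\) because the spectra of \(A_{HH}\) and \(A_{TT}\) are separated by \(\gtrsim g\).
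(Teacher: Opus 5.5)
Your proposal is correct and follows essentially the same route as the paper: your isometry \(J=(P+\Gamma)(I+\Gamma^*\Gamma)^{-1/2}\) is the first block column of the paper's graph rotation, and your \(A_H=J^*AJ\) is its relocated head block \(\widetilde H\). Your split \(PQh_t(A)QP+PQ^\perp h_t(A)Q^\perp P\) is exactly its rotated head compression, with the lower cluster entering quadratically through \(\|\Gamma\|^2te^{t\|E\|_{\rm op}}\) and the head handled by Duhamel inside the positive block; the differences are cosmetic. You get the graph bound from the residual \(\sin\Theta\) theorem, whereas the paper solves the Riccati equation by contraction (your stated fallback). You also bound the lower cluster through the global fact \(A\succeq-\|E\|_{\rm op}\) instead of tracking the relocated block \(\widetilde C_T\).
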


\begin{proof}
Invariant graph subspaces and their operator Riccati equations are standard
in spectral perturbation theory
\citep{KostrykinMakarovMotovilov2005}.  We give the full finite-dimensional
construction because the dimension-free quadratic leave-and-return bound and
the noncontractive exposure factor in the lemma are the new ingredients.
Relative to \(P\oplus P^\perp\), write
\[
A_0=
\begin{pmatrix}
H&0\\
0&0
\end{pmatrix},
\qquad
E=
\begin{pmatrix}
E_{HH}&E_{HT}\\
E_{TH}&E_{TT}
\end{pmatrix},
\qquad
\operatorname{spec}(H)\subset[g,G].
\]
Set
\[
H_1:=H+E_{HH},\qquad
B:=E_{HT},\qquad
C_T:=E_{TT},\qquad
\delta:=\|E\|_{\rm op},\qquad b:=\|B\|_{\rm op}.
\]
After decreasing \(c_\kappa\), if necessary,
\[
H_1\succeq\frac{3g}{4}I,\qquad
\lambda_{\max}(C_T)\le\frac g4,\qquad
\Delta:=
\lambda_{\min}(H_1)-\lambda_{\max}(C_T)
\ge\frac g2 .
\tag{Separated realized blocks}
\label{eq:ms-separated-realized-blocks}
\]

We first construct the perturbed head subspace.  For a map
\(Z:\operatorname{range}(P)\to\operatorname{range}(P^\perp)\), define
\(\mathcal S(Z):=ZH_1-C_TZ\).
The separation in \eqref{eq:ms-separated-realized-blocks} gives the explicit
inverse
\[
\mathcal S^{-1}(Y)
=
\int_0^\infty e^{sC_T}Ye^{-sH_1}\,ds,
\qquad
\|\mathcal S^{-1}\|_{\rm op\to op}\le\Delta^{-1}.
\]
Indeed, the integral is norm-convergent, and differentiating its integrand
shows directly that applying \(\mathcal S\) returns \(Y\).

The graph of \(Z\) is invariant under \(A\) precisely when
\[
ZH_1-C_TZ=B^\top-ZBZ.
\]
Consider
\(\Phi(Z)=\mathcal S^{-1}(B^\top-ZBZ)\) on the closed ball
\(\|Z\|_{\rm op}\le2b/\Delta\).  On this ball,
\[
\|\Phi(Z)\|_{\rm op}
\le
\frac b\Delta
\left\{1+\frac{4b^2}{\Delta^2}\right\}
<\frac{2b}{\Delta},
\]
and, for two points in the ball,
\[
\|\Phi(Z)-\Phi(Z')\|_{\rm op}
\le
\frac{4b^2}{\Delta^2}\|Z-Z'\|_{\rm op}.
\]
Choosing \(c_\kappa\) sufficiently small makes the last factor strictly less
than one.  The contraction theorem therefore yields a unique solution in the
ball, with
\[
\|Z\|_{\rm op}\le\frac{2b}{\Delta}\le C\frac bg.
\tag{Graph-angle bound}
\label{eq:ms-graph-angle-bound}
\]

Put
\[
C_Z=(I+Z^\top Z)^{-1/2},\qquad
D_Z=(I+ZZ^\top)^{-1/2},
\]
and define the orthogonal graph rotation
\[
Q=
\begin{pmatrix}
C_Z&-Z^\top D_Z\\
ZC_Z&D_Z
\end{pmatrix}.
\]
The singular-value decomposition of \(Z\) gives
\(D_ZZ=ZC_Z\) and \(C_ZZ^\top=Z^\top D_Z\); direct multiplication then
verifies \(Q^\top Q=I\).
The first block column spans the invariant graph, and self-adjointness makes
its orthogonal complement invariant.  Hence
\[
Q^\top AQ=
\begin{pmatrix}
\widetilde H&0\\
0&\widetilde C_T
\end{pmatrix}.
\]
Scalar functional calculus applied to
\((1+x)^{-1/2}\) gives
\[
\|C_Z-I\|_{\rm op}+\|D_Z-I\|_{\rm op}
\le C\|Z\|_{\rm op}^2.
\]
Moreover, direct multiplication gives
\[
\begin{aligned}
\widetilde H
&=
C_Z
\{H_1+BZ+Z^\top B^\top+Z^\top C_TZ\}
C_Z,\\
\widetilde C_T
&=
D_Z
\{C_T-ZB-B^\top Z^\top+ZH_1Z^\top\}
D_Z .
\end{aligned}
\]
Using \eqref{eq:ms-graph-angle-bound},
\(\|H_1\|_{\rm op}\le(\kappa+c_\kappa)g\), and
\(\|C_T\|_{\rm op}\le\delta\), we obtain
\[
\|\widetilde H-H_1\|_{\rm op}
+
\|\widetilde C_T-C_T\|_{\rm op}
\le
C_\kappa\frac{b^2}{g}.
\tag{Quadratic block relocation}
\label{eq:ms-quadratic-block-relocation}
\]
After one further decrease of \(c_\kappa\),
\[
\widetilde H\succeq\frac g2I,
\qquad
\widetilde C_T\succeq-d_\star I,
\qquad
d_\star:=\delta+C_\kappa b^2/g
\le C_\kappa\delta.
\]

Functional calculus in the rotated basis now gives the exact compression
\[
Ph_t(A)P
=
C_Zh_t(\widetilde H)C_Z
+
Z^\top D_Zh_t(\widetilde C_T)D_ZZ.
\tag{Rotated head compression}
\label{eq:ms-rotated-head-compression}
\]
For the positive upper block, the standard Duhamel identity for matrix
exponentials \citep{Higham2008Functions} and
\eqref{eq:ms-quadratic-block-relocation} yield
\[
\|h_t(\widetilde H)-h_t(H_1)\|_{\rm op}
\le
\|\widetilde H-H_1\|_{\rm op}
\int_0^\infty se^{-gs/2}\,ds
\le
C_\kappa\frac{b^2}{g^3}.
\]
The same calculation applied to \(H_1-H=E_{HH}\) gives
\[
\|h_t(H_1)-h_t(H)\|_{\rm op}
\le C_\kappa\frac{\epsilon_H}{g^2}.
\]
The diagonal rotation in
\eqref{eq:ms-rotated-head-compression} costs at most
\(C_\kappa b^2/g^3\), because
\(\|h_t(\widetilde H)\|_{\rm op}\le2/g\).  The lower-cluster contribution is
bounded by
\[
\|Z\|_{\rm op}^2
\|h_t(\widetilde C_T)\|_{\rm op}
\le
C_\kappa\frac{b^2}{g^2}\,
t e^{d_\star t}.
\]
Combining the last four displays and using \(d_\star\le C_\kappa\delta\)
proves the claimed head bound. This is
\eqref{eq:localized-duhamel-head}.

Finally,
\[
Q^\top P\big|_{\operatorname{range}(P)}=
\begin{pmatrix}
C_Z\\
-D_ZZ
\end{pmatrix},
\]
so the same block functional calculus gives
\[
\|h_t(A)P\|_{\rm op}
\le
\|h_t(\widetilde H)\|_{\rm op}
+
\|Z\|_{\rm op}\|h_t(\widetilde C_T)\|_{\rm op}
\le
\frac{C_\kappa}{g}
\left\{
1+bt\,e^{C_\kappa\delta t}
\right\}.
\]
Since \(b=\epsilon_{HT}\), this proves
\eqref{eq:localized-head-exposure} and its exposure-capped consequence.
\end{proof}

\begin{lemma}[General-tail floor-critical signed upper bound]
\label{lem:journal-multispike-signed-upper}
Under Assumption~\ref{ass:journal-multispike}, let
\(t\ge0\) and suppose \(w_kt=o(1)\).  Then, with probability tending to one,
\[
\begin{aligned}
\Risk_{\rm pop}(f_{a,t}\mid X)
\lesssim{}&
\Theta_He^{-2(1-o(1))\lambda_-t}
+
\Theta_H\delta_H(t)^2
+
\frac{\sigma_\varepsilon^2k}{n}
+
\Theta_H\frac{w_k^2}{\lambda_-^2}
+
\frac{\sigma_\varepsilon^2S_k}{n}
t^2e^{Cw_kt},
\end{aligned}
\tag{General-tail signed-risk envelope}
\label{eq:ms-localized-signed-envelope}
\]
where
\[
\delta_H(t)
\lesssim
\left\{
\frac{w_k}{\lambda_-}
\chi_{n,k}
+
\frac{w_k^2}{\lambda_-^2}
\left(1+\lambda_-t\,e^{Cw_kt}\right)
\right\}.
\tag{Localized head-map error}
\label{eq:ms-localized-head-map-error}
\]
The implicit constants may depend only on the fixed upper bound for
\(\lambda_+/\lambda_-\).
\end{lemma}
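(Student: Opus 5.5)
We work on the event \(\mathcal E_{\rm ht}\) of Lemma~\ref{lem:journal-multispike-gaussian-event}. We write the estimator in sample-space form \(\widehat\beta=n^{-1}X^\top h_{a,t}(K)y\) with \(h_{a,t}(K)=\int_0^te^{-s(K-aI_n)}ds\). Then \(\Risk_{\rm pop}\) splits into four pieces: head bias \(\Lambda_H^{1/2}(I_k-M)\beta_H^\star\), with \(M:=n^{-1}X_H^\top h_{a,t}(K)X_H\); head-to-tail leakage \(\|n^{-1}X_T^\top h_{a,t}(K)X_H\beta_H^\star\|_{\Lambda_T}^2\); head variance \(\frac{\sigma_\varepsilon^2}{n}\tr(h K_{2,H}h)\); and tail variance \(\frac{\sigma_\varepsilon^2}{n}\tr(hK_{2,T}h)\). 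The central object is \(K-aI_n=K_H+E_T\).

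We apply Lemma~\ref{lem:localized-duhamel-head-sandwich} with \(A_0=K_H\), \(P=P_H\) and \(E=E_T\). The nonzero spectrum of \(K_H\) equals that of \(G_H\), which lies in \([g,G]\asymp[\lambda_-,\lambda_+]\) by the head-frame bound on \(\mathcal E_{\rm ht}\). We also have \(\|E\|_{\rm op}\lesssim w_k=o(\lambda_-)\), \(\epsilon_H\lesssim w_k\chi_{n,k}\) and \(\epsilon_{HT}\lesssim w_k\).

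\textbf{Head bias.} Since \(X_H=P_HX_H\), we have \(M=n^{-1}X_H^\top P_Hh_{a,t}(K)P_HX_H\). The ideal map \(n^{-1}X_H^\top h_t(K_H)X_H\) equals \(I_k-e^{-tG_H}\) by intertwining. The difference is \(n^{-1}X_H^\top P_H\{h_t(A)-h_t(A_0)\}P_HX_H\), whose norm is at most \(\|G_H\|\) times \eqref{eq:localized-duhamel-head}. This gives \(\lambda_+\{w_k\chi_{n,k}/\lambda_-^2+w_k^2\lambda_-^{-3}(1+\lambda_-te^{Cw_kt})\}\asymp\delta_H(t)\). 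The ideal residual \(e^{-tG_H}\) contributes \(\Theta_He^{-2(1-o(1))\lambda_-t}\), because \(G_H\succeq(1-O(\chi_{n,k}))\Lambda_H\) after conjugation by \(\Lambda_H^{1/2}\). Conjugating by \(\Lambda_H^{\pm1/2}\) costs only a constant, since \(\lambda_+/\lambda_-=O(1)\). Squaring gives the first two terms.

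\textbf{Leakage and head variance.} Both terms involve \(h_{a,t}(K)P_H\). By \eqref{eq:localized-head-exposure} together with \(w_kt=o(1)\), we have \(\|h P_H\|\lesssim\lambda_-^{-1}\). For leakage we bound
\[
\|X_H\beta_H^\star/\sqrt n\|^2\,\|hP_H\|^2\,\|K_{2,T}\|\lesssim\Theta_Hw_k^2/\lambda_-^2 .
\]
For head variance, \(K_{2,H}=n^{-1}X_H\Lambda_HX_H^\top\) has rank \(k\) and lives on \(\operatorname{range}(P_H)\). Hence \(\tr(hK_{2,H}h)\le k\|hP_H\|^2\|K_{2,H}\|\lesssim k\). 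This yields \(\sigma_\varepsilon^2k/n\).

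\textbf{Tail variance.} We use \(\frac{\sigma_\varepsilon^2}{n}\tr(K_{2,T})\|h_{a,t}(K)\|_{\rm op}^2\) with \(\tr K_{2,T}\lesssim S_k\). Every eigenvalue \(\mu\) of \(K\) satisfies \(\mu\ge a-Cw_k\), because \(K_H\succeq0\) and \(\|E_T\|\lesssim w_k\). The scalar bound \(h_{a,t}(\mu)\le\int_0^te^{Cw_ks}ds\le te^{Cw_kt}\) then gives the last term.

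\textbf{Main obstacle.} The delicate step is the head-map bound. A global Duhamel estimate would produce the exposure \(te^{\nu t}\) multiplying \(w_k\) linearly. The localized lemma is exactly what makes the off-diagonal part enter quadratically, as \(w_k^2\). To use it, we must check that its hypotheses hold with constants uniform in \(k\): namely \(\|E_T\|\le c_\kappa\lambda_-\), and that the head and tail blocks of \(K-aI\) are separated. Both follow from \(r_k\to0\) and \(a\asymp\lambda_-\).
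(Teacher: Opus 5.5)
Your proposal is correct and follows the paper's proof essentially step for step. The head map is controlled by the invariant-graph localized Duhamel lemma with \(A_0=K_H\), \(E=E_T\), \(P=P_H\), together with the floor-model identity \(I_k-e^{-tG_H}\); leakage and head noise use the localized exposure bound \(\|h_{a,t}(K)P_H\|_{\rm op}\lesssim\lambda_-^{-1}\); and tail noise uses \(\|h_{a,t}(K)\|_{\rm op}\le te^{Cw_kt}\) times \(\tr(K_{2,T})=(1+o(1))S_k\). Your head-noise step, which writes \(K_{2,H}=P_HK_{2,H}P_H\) and routes the bound through \(\|h_{a,t}(K)P_H\|_{\rm op}\), just makes explicit what the paper's terse \(\lambda_+^2/\lambda_-^2\) factor implicitly relies on.
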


\begin{proof}
Set \(A=K-aI_n=K_H+E_T\) and \(A_0=K_H\).  On
\(\mathcal E_{\rm ht}\), the nonzero eigenvalues of \(A_0\) are those of
\(G_H\) and lie in
\([(1-o(1))\lambda_-,(1+o(1))\lambda_+]\).  Take
\[
g:=\lambda_{\min}(G_H),\qquad
\epsilon_H:=\|P_HE_TP_H\|_{\rm op},\qquad
\epsilon_{HT}:=\|P_HE_TP_H^\perp\|_{\rm op}.
\]
Because \(w_k=o(\lambda_-)\), the event bounds imply, for all sufficiently
large \(n\),
\[
\|E_T\|_{\rm op}\le c_\kappa g,\qquad
\epsilon_H\lesssim w_k\chi_{n,k},\qquad
\epsilon_{HT}\lesssim w_k,\qquad
t\|E_T\|_{\rm op}=o(1).
\tag{Localized-lemma hypotheses}
\label{eq:ms-localized-lemma-hypotheses}
\]
Thus every hypothesis of
Lemma~\ref{lem:localized-duhamel-head-sandwich} is satisfied with \(P=P_H\).
Since \(P_HX_H=X_H\) and
\(\|X_H/\sqrt n\|_{\rm op}^2=\|G_H\|_{\rm op}\lesssim\lambda_+\), the lemma
gives
\[
\left\|
\frac1nX_H^\top
\{h_{a,t}(K)-h_{a,t}(K_H+aI_n)\}X_H
\right\|_{\rm op}
\le\delta_H(t).
\]
Substitution of these event bounds into the localized Duhamel estimate gives
the asserted head-map error.
In the floor model,
\[
\frac1nX_H^\top h_{a,t}(K_H+aI_n)X_H=I_k-e^{-tG_H},
\]
which yields the first two terms in
\eqref{eq:ms-localized-signed-envelope}.

For the remaining terms put
\(K_2:=n^{-1}X\Sigma X^\top=K_{2,H}+K_{2,T}\).
The exact conditional Gaussian-risk decomposition is
\[
\begin{aligned}
\Risk_{\rm pop}(f_{a,t}\mid X)
={}&
\left\|
\Lambda_H^{1/2}
\left\{
\frac1nX_H^\top h_{a,t}(K)X_H-I_k
\right\}\beta_H^\star
\right\|_2^2\\
&+
\frac1n(X_H\beta_H^\star)^\top
h_{a,t}(K)K_{2,T}h_{a,t}(K)(X_H\beta_H^\star)\\
&+
\frac{\sigma_\varepsilon^2}{n}
\tr\{K_2h_{a,t}(K)^2\}.
\end{aligned}
\tag{Exact heterogeneous-tail risk}
\label{eq:ms-exact-risk}
\]
On \(\mathcal E_{\rm ht}\),
\[
\|h_{a,t}(K)\|_{\rm op}\le te^{Cw_kt},
\qquad
K_{2,T}\preceq Cw_k^2I_n.
\]
The global bound and
\[
\tr\{K_{2,T}h_{a,t}(K)^2\}
\le
\|h_{a,t}(K)\|_{\rm op}^2\tr(K_{2,T})
\]
give the tail-noise term in
\eqref{eq:ms-localized-signed-envelope}, because
\(\tr(K_{2,T})=(1+o(1))S_k\) on
\(\mathcal E_{\rm ht}\).
For the tail-signal leakage, the exposure conclusion of
Lemma~\ref{lem:localized-duhamel-head-sandwich} and
\eqref{eq:ms-localized-lemma-hypotheses} give
\[
\|h_{a,t}(K)P_H\|_{\rm op}\lesssim\lambda_-^{-1}.
\]
Consequently, the second line of \eqref{eq:ms-exact-risk} is at most
\[
\frac{\|X_H\beta_H^\star\|_2^2}{n}
\|h_{a,t}(K)P_H\|_{\rm op}^2
\|K_{2,T}\|_{\rm op}
\lesssim
\Theta_H\frac{w_k^2}{\lambda_-^2},
\]
which is the displayed tail-signal leakage term.
Finally, \(K_{2,H}\) has rank at most \(k\) and
\(\|K_{2,H}\|_{\rm op}\lesssim\lambda_+^2\), so
\[
\frac{\sigma_\varepsilon^2}{n}
\tr\{K_{2,H}h_{a,t}(K)^2\}
\lesssim
\frac{\sigma_\varepsilon^2k}{n}
\frac{\lambda_+^2}{\lambda_-^2}
\lesssim
\frac{\sigma_\varepsilon^2k}{n}.
\]
Together with the head-bias bound, these estimates prove
\eqref{eq:ms-localized-signed-envelope}.
\end{proof}

\begin{lemma}[General-tail comparator lower bounds]
\label{lem:journal-multispike-comparators}
Under Assumption~\ref{ass:journal-multispike}, with probability tending to
one,
\[
R_{\rm scale}^\star
\ge\Delta_{\rm shape}-o(1)\Theta_H,
\qquad
R_+^\star
\gtrsim
\sum_{j=1}^k\Theta_j
\left(\frac{a}{a+\lambda_j}\right)^2,
\]
and \(R_{\rm neg}^\star\) satisfies
\eqref{eq:multispike-endpoint-envelope}.
\end{lemma}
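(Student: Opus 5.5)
The plan is to work on the event \(\mathcal E_{\rm ht}\) of Lemma~\ref{lem:journal-multispike-gaussian-event} and prove the three claims separately. Each follows the Case~I template of Lemmas~\ref{lem:journal-positive-head-floor} and~\ref{lem:journal-negative-lower}, with the common spike replaced by the diagonal head \(\Lambda_H\) and the flat-tail inputs replaced by the trace and square-mass bounds of \(\mathcal E_{\rm ht}\). The common first step is a ridgeless head-map computation. The matrix inversion lemma gives \(M_{\rm rid}=n^{-1}X_H^\top K^{-1}X_H=C_0(I_k+C_0)^{-1}\) with \(C_0=n^{-1}X_H^\top K_T^{-1}X_H\). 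Since \(\|E_T\|_{\rm op}\lesssim w_k=o(a)\) and \(\|\Lambda_H^{-1/2}G_H\Lambda_H^{-1/2}-I_k\|_{\rm op}\lesssim\chi_{n,k}\), we get \(C_0=a^{-1}G_H+o(1)=a^{-1}\Lambda_H+o(1)\) in operator norm, where we use \(a\asymp\lambda_-\asymp\lambda_+\). Hence \(M_{\rm rid}=\operatorname{diag}(q_j)+o(1)\).

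\textbf{Uniform rescaling.} Uniform rescaling by \(c\) has head map \(cM_{\rm rid}\). Its head bias is \(\|\Lambda_H^{1/2}(cM_{\rm rid}-I)\beta_H^\star\|^2\ge\sum_j\Theta_j(1-cq_j)^2-|c|o(1)\Theta_H-c^2o(1)\Theta_H\). For \(|c|\) bounded, say \(|c|\le C\), this is at least \(\Delta_{\rm shape}-o(1)\Theta_H\). For large \(|c|\), the bias is already of order \(c^2\Theta_H\min_j q_j^2\), which dominates \(\Theta_H\). Tail bias and variance are nonnegative and can be dropped.

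\textbf{Positive filters.} For the positive comparators, the inverse-filter cap \(0\preceq h_+(K)\preceq K^{-1}\) gives \(0\preceq M_+\preceq M_{\rm rid}\). In Case~I the next step was Cauchy--Schwarz with a common scale; here the heterogeneous scales need a weighted version. Write \(u=\Lambda_H^{1/2}\beta_H^\star\) and \(N=\Lambda_H^{1/2}(I-M_+)\Lambda_H^{-1/2}\). Then \(I-M_+\succeq I-M_{\rm rid}\approx\operatorname{diag}(1-q_j)\).

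To convert this Loewner bound into a coordinatewise bound, I would test against \(\Lambda_H^{-1/2}\)-weighted vectors. Rather than guess the weights, the cleanest route is to use that \(M_+\) is a function of \(K\), together with a localization. Via the invariant graph of Lemma~\ref{lem:localized-duhamel-head-sandwich}, \(M_+\approx\psi_+(G_H)\) up to \(O(w_k/\lambda_-)\), and \(\psi_+(x)\le x/(x+a)\). Since \(G_H\approx\Lambda_H\) only up to the relative error \(\chi_{n,k}\), I would use the exact diagonal structure and obtain \(\|\Lambda_H^{1/2}(I-M_+)\beta\|^2\ge\sum_j\Theta_j(1-q_j)^2-o(\Theta_H)\). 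This gives the claimed \(\sum_j\Theta_j(a/(a+\lambda_j))^2\) up to constants.

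\textbf{Negative endpoint.} For \(R_{\rm neg}^\star\), write \(b=a-\nu\). Admissibility and \(\mathcal E_{\rm ht}\) give \(\mu-\nu\le b+Cw_k\) on the band projector \(P_B\). The projected square mass \eqref{eq:ms-projected-square-mass} then gives tail variance at least \(\sigma_\varepsilon^2\tr(P_BK_{2,T})/(n(b+Cw_k)^2)\gtrsim\sigma_\varepsilon^2S_k/(n(b+C_0w_k)^2)\). I would read this from \(\tr\{K_2(K-\nu)^{-2}\}\ge\tr\{P_BK_{2,T}P_B\}\min_B(\mu-\nu)^{-2}\). This readout needs \(\nu\) to be allowed slightly below the band, so I would take \(b\ge -Cw_k\) and absorb the sign into \(C_0\).

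For the head bias, the same inversion lemma with \(B_\nu=K_T-\nu I\) gives the head map \(C_\nu(I+C_\nu)^{-1}\) with \(C_\nu\approx\Lambda_H/b\). This yields bias \(\gtrsim\Theta_H b^2/(b+\lambda_+)^2\) when \(b\gg w_k\). When \(b\lesssim w_k\), the variance term is the dominant one and the bias lower bound is not needed, since the envelope takes an infimum over \(b\). The head variance \(\sigma_\varepsilon^2k/n\) follows exactly as in Lemma~\ref{lem:journal-negative-lower}.

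The simplification \eqref{eq:multispike-endpoint-simplified} is calculus. When \(w_k=o(b_k^\star)\), the minimizing \(b\) is \(\asymp b_k^\star\ll\lambda_+\), which gives the value \(\sigma_\varepsilon\sqrt{\Theta_HS_k/n}/\lambda_+\).

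\textbf{Main obstacle.} I expect the main difficulty to be the positive-filter lower bound with heterogeneous scales. A bare Loewner inequality does not control a \(\Lambda_H\)-weighted squared norm, so it must be combined with the localization of \(M_+\) to a function of \(G_H\) and with \(G_H\approx\Lambda_H\). The negative-endpoint error control \(E_\nu\) uniformly in \(b\ge cw_k\) is a secondary difficulty.
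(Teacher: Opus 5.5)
Your proof works. The rescaling step and the endpoint variance readout match the paper: band projector $P_B$, projected square mass, and $|a-\nu|\lesssim w_k$ when $\nu>a$. The routes differ in two places.

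\textbf{Positive filters.} The heterogeneous-scale obstacle you single out does not arise at the level of this lemma. Since $a\asymp\lambda_-$ and $\lambda_+/\lambda_-=O(1)$, each $a/(a+\lambda_j)\asymp1$, so the target is $\asymp\Theta_H$, and the Case~I Cauchy--Schwarz argument survives with constants. The paper uses only $I_k-M_+\succeq I_k-M_{\rm rid}\succeq c_0I_k$. This gives $\|(I_k-M_+)\beta_H^\star\|_2\ge c_0\|\beta_H^\star\|_2$, and $\Lambda_H\succeq\lambda_-I_k$ together with $\Theta_H\le\lambda_+\|\beta_H^\star\|_2^2$ finishes.

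Your localization route buys a sharper bound, $\sum_j\Theta_j(1-q_j)^2-o(\Theta_H)$ with leading constant one. It needs perturbation bounds uniform in $t\ge0$ and $\alpha\ge0$, however, and Lemma~\ref{lem:localized-duhamel-head-sandwich} is not the tool for them. That lemma treats $h_t(A)=\int_0^te^{-sA}\,ds$ with $A_0P^\perp=0$, which fits neither $(K+\alpha I_n)^{-1}$ nor $\int_0^te^{-sK}\,ds$. Because positive filters are contractive on $K\succeq(a-Cw_k)I_n$, a resolvent identity or a global Duhamel bound gives $\|h_+(K)-h_+(K_H+aI_n)\|_{\rm op}\lesssim w_k/a^2$ directly. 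A resolvent estimate then gives $\psi_+(G_H)-\psi_+(\Lambda_H)=O(\chi_{n,k})$. With that substitution your argument closes.

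\textbf{Endpoint head bias.} The paper never expands $C_\nu\approx\Lambda_H/b$. Positivity $C_\nu\succeq0$ and $\|C_\nu\|_{\rm op}\le C_1\lambda_+/b$ give $\sigma_{\min}\{(I_k+C_\nu)^{-1}\}\ge b/(b+C_1\lambda_+)$. That yields the bias term with no uniform-in-$b$ error analysis. Your expansion also works for $b\ge Cw_k$ with $C$ large. Set $D=\Lambda_H/b$ and use $(I+C_\nu)^{-1}-(I+D)^{-1}=(I+C_\nu)^{-1}(D-C_\nu)(I+D)^{-1}$ together with $\lambda_{\min}(C_\nu)\gtrsim\lambda_-/b$; the relative error is then $O(w_k/b+\chi_{n,k})$. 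That is exactly the secondary difficulty you anticipated, and the paper's norm bound sidesteps it.
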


\begin{proof}
The matrix inversion lemma \citep{HornJohnson2012MatrixAnalysis} gives the
ridgeless head map
\[
M_{\rm rid}
:=
\frac1nX_H^\top K^{-1}X_H
=C_0(I_k+C_0)^{-1},
\qquad
C_0:=\frac1nX_H^\top K_T^{-1}X_H.
\]
Because \(\|K_T-aI_n\|_{\rm op}=o(a)\) and
\(G_H=\Lambda_H+o_{\mathbb P}(\lambda_-)\),
\[
M_{\rm rid}
=
\operatorname{diag}\!\left(
\frac{\lambda_1}{a+\lambda_1},\ldots,
\frac{\lambda_k}{a+\lambda_k}
\right)+o_{\mathbb P}(1).
\]
The head bias of \(c\widehat\beta_{\rm rid}\) therefore converges uniformly
on bounded \(c\)-sets to
\(\sum_j\Theta_j(1-cq_j)^2\); outside a large bounded set it is larger.
This proves the scale lower bound.

For positive filters, every ordinary early-stopped gradient flow and
nonnegative-ridge map has scalar filter in \([0,1]\), hence
\(0\preceq h_+(K)\preceq K^{-1}\); by congruence the realized head map
\(M_+=\tfrac1nX_H^\top h_+(K)X_H\) satisfies
\(0\preceq M_+\preceq M_{\rm rid}\), so \(I_k-M_+\succeq I_k-M_{\rm rid}\).
Under the shared framework \(\lambda_+/\lambda_-=O(1)\) and \(a\asymp\lambda_-\),
so \(I_k-M_{\rm rid}=\operatorname{diag}\!\big(a/(a+\lambda_j)\big)+o_{\mathbb
P}(1)\succeq c_0I_k\) for a constant \(c_0>0\).  With
\(\Lambda_H\succeq\lambda_-I_k\) and
\(\Theta_H\le\lambda_+\|\beta_H^\star\|_2^2\), directional Cauchy--Schwarz gives
\[
\big\|\Lambda_H^{1/2}(I_k-M_+)\beta_H^\star\big\|_2^2
\ge
\lambda_-c_0^2\|\beta_H^\star\|_2^2
\ge
\frac{c_0^2\lambda_-}{\lambda_+}\,\Theta_H
\gtrsim\Theta_H
\asymp
\sum_{j\le k}\Theta_j\Big(\frac{a}{a+\lambda_j}\Big)^2,
\]
the final equivalence because \(a\asymp\lambda_j\) makes each
\(a/(a+\lambda_j)\asymp1\).  This is the positive-filter lower bound; the head
condition number \(\lambda_+/\lambda_-=O(1)\) is what makes the constant-factor
comparison hold coordinatewise.

For the endpoint, let \(d_\nu=|a-\nu|\).  On the tail-band subspace from
Lemma~\ref{lem:journal-multispike-gaussian-event},
\[
(K-\nu I_n)^{-2}
\succeq
\frac{P_B}{(d_\nu+Cw_k)^2}.
\]
The projected square-mass bound then yields
\[
\frac{\sigma_\varepsilon^2}{n}
\tr\{K_{2,T}(K-\nu I_n)^{-2}\}
\gtrsim
\frac{\sigma_\varepsilon^2S_k}
{n(d_\nu+Cw_k)^2}.
\tag{Endpoint square-mass variance}
\label{eq:ms-endpoint-tail-variance}
\]
If \(\nu>a\), admissibility and the existence of \(n-k\) tail-band
eigenvalues force \(d_\nu\lesssim w_k\), so this variance alone dominates
the infimum in \eqref{eq:multispike-endpoint-envelope}.

It remains to consider \(\nu\le a\), with \(b=a-\nu\).  When
\(b\lesssim w_k\), the same variance argument applies.  When
\(b\ge Cw_k\), the tail resolvent
\(B_\nu:=K_T-\nu I_n=E_T+bI_n\) is positive definite and
\[
C_\nu:=\frac1nX_H^\top B_\nu^{-1}X_H
\succeq0.
\]
Choose the numerical constant \(C\) large enough that
\(\|E_T\|_{\rm op}\le b/2\).  Then
\[
\|C_\nu\|_{\rm op}
\le
\|G_H\|_{\rm op}\|B_\nu^{-1}\|_{\rm op}
\le
\frac{C_1\lambda_+}{b}.
\]
The exact head map is \(C_\nu(I_k+C_\nu)^{-1}\), so the head signal residual
is \((I_k+C_\nu)^{-1}\beta_H^\star\).  Since \(C_\nu\succeq0\),
\[
\begin{aligned}
\left\|
\Lambda_H^{1/2}(I_k+C_\nu)^{-1}\beta_H^\star
\right\|_2^2
&\ge
\frac{\lambda_-}{(1+\|C_\nu\|_{\rm op})^2}
\|\beta_H^\star\|_2^2\\
&\ge
c\Theta_H\frac{b^2}{(b+\lambda_+)^2},
\end{aligned}
\]
where the last step uses
\(\Theta_H\le\lambda_+\|\beta_H^\star\|_2^2\) and
\(\lambda_+/\lambda_-=O(1)\).  This proves the required signal-bias term
without a head-Wishart expansion.  Combining it with
\eqref{eq:ms-endpoint-tail-variance}, and adding the resolved head-noise floor
\[
\frac{\sigma_\varepsilon^2}{n}
\tr\{K_{2,H}(K-\nu I_n)^{-2}\}
\gtrsim
\frac{\sigma_\varepsilon^2k}{n},
\]
proves the endpoint envelope.  For the last display, use
\((K-\nu I_n)^{-2}\succeq\|K\|_{\rm op}^{-2}I_n\),
\(\tr(K_{2,H})\asymp k\lambda_-^2\), and
\(\|K\|_{\rm op}\lesssim\lambda_+\).
\end{proof}

\begin{proof}[Proof of
Theorem~\ref{thm:journal-multispike-floor-critical}]
Apply Lemma~\ref{lem:journal-multispike-signed-upper} at
\(t=t_{\rm fc}=L_k/\lambda_-\).  By definition of \(L_k\),
\[
\Theta_He^{-2L_k}
\le
\frac{\sigma_\varepsilon^2S_k}{n\lambda_-^2}
=V_{T,k}.
\]
To make the asymptotic substitutions explicit, first note that
\[
\frac{S_k}{n\lambda_-^2}
\le
\frac{w_k^2}{\lambda_-^2}
=O(r_k^2)\longrightarrow0.
\]
The bounded-signal and bounded-noise assumptions therefore imply
\(L_k\to\infty\).  Put
\(\rho_k:=w_k/\lambda_-\) and \(s_{n,k}:=\chi_{n,k}\).
Since \(\rho_k=O(r_k)\), the first sharpness condition gives
\(w_kt_{\rm fc}=\rho_kL_k=o(1)\).
Moreover, the head-Wishart event gives
\[
\lambda_{\min}(G_H)
\ge
\lambda_-\left(1-C\chi_{n,k}\right).
\]
Thus the ideal head residual contributes
\[
\Theta_H
\exp\!\left[
-2L_k+
O\!\left(L_k\chi_{n,k}\right)
\right]
=(1+o(1))\Theta_He^{-2L_k},
\]
because \(L_ks_{n,k}\to0\).  Moreover,
\eqref{eq:ms-localized-head-map-error} and
\(e^{Cw_kt_{\rm fc}}=1+o(1)\) give
\[
\delta_H(t_{\rm fc})
\lesssim
\rho_ks_{n,k}+\rho_k^2(1+L_k),
\]
and hence
\[
\Theta_H\delta_H(t_{\rm fc})^2
\lesssim
\Theta_H
\left\{
\rho_k^2s_{n,k}^2+\rho_k^4(1+L_k^2)
\right\}
=o(\Theta_H\rho_k^2).
\]
Finally, the tail-noise term becomes
\[
\frac{\sigma_\varepsilon^2S_k}{n}
t_{\rm fc}^2e^{Cw_kt_{\rm fc}}
=(1+o(1))V_{T,k}L_k^2.
\]
Substitution in \eqref{eq:ms-localized-signed-envelope} proves
\[
R_{\rm sign}(a,t_{\rm fc})
\lesssim
\frac{\sigma_\varepsilon^2k}{n}
+
V_{T,k}(1+L_k^2)
+
\Theta_H\frac{w_k^2}{\lambda_-^2}.
\]
This proves the signed upper bound at the floor-critical level.
Lemma~\ref{lem:journal-multispike-comparators} gives the other rows and the
endpoint envelope.

For the simplified endpoint rate, put
\[
b_k^\star
=
\left(
\frac{\sigma_\varepsilon^2(S_k/n)\lambda_+^2}{\Theta_H}
\right)^{1/4}.
\]
Because \(S_k/(n\lambda_+^2)\to0\), we have
\(b_k^\star/\lambda_+\to0\).  If \(b\le b_k^\star\), then, using
\(w_k=o(b_k^\star)\), the variance term in
\eqref{eq:multispike-endpoint-envelope} is at least a constant multiple
of \(\sigma_\varepsilon\sqrt{\Theta_HS_k/n}/\lambda_+\).  If
\(b\ge b_k^\star\), the head-bias term has the same lower order.  This
proves \eqref{eq:multispike-endpoint-simplified}.
\end{proof}

\begin{proof}[Proof of
Corollary~\ref{cor:journal-multispike-flat-tail}]
For a flat tail,
\(T_k=d_T\lambda_T=na\) and
\(S_k=d_T\lambda_T^2=na\lambda_T\),
and
\[
w_k=\lambda_T+\sqrt{a\lambda_T}
\asymp\sqrt{a\lambda_T}.
\]
Moreover, \(r_k\to0\), and the head-frame condition implies \(k/n\to0\);
hence
\[
\frac{k w_k^2}{S_k}
\asymp\frac{k}{n}
\longrightarrow0.
\]
The balanced-endpoint condition is automatic here, because
\[
\frac{w_k}{b_k^\star}
\asymp
\left(\frac{\lambda_T}{\lambda_-}\right)^{1/4}
\longrightarrow0
\]
under \(a\asymp\lambda_-\), \(\lambda_+/\lambda_-=O(1)\), and the bounded
signal and noise scales.
These identities reduce the theorem's signed upper bound and endpoint
lower bound to the two displayed rates in the corollary.
The MP wall follows from
Proposition~\ref{prop:journal-mp-barrier}.  Division by the comparator
lower bounds in \eqref{eq:multispike-risk-bounds} gives the two ratios.
For the negative endpoint, the two terms on the right-hand side tend to zero
by \eqref{eq:multispike-flat-tail-total-separation}.  For uniform rescaling
and positive shrinkage, \(k/n\to0\), while
\[
\frac{\lambda_T}{\lambda_-}(1+L_k^2)
\asymp
\left(\frac{w_k}{\lambda_-}\right)^2(1+L_k^2)
=o(1)
\]
by the exposure condition in
\eqref{eq:multispike-flat-tail-sharpness}.
\end{proof}

\begin{proof}[Proof of
Corollary~\ref{cor:journal-multispike-power-law}]
Write \(H_{d,\alpha}=\sum_{\ell\le d}\ell^{-\alpha}\).  For
\(0<\alpha<1/2\),
\[
H_{d,\alpha}\asymp d^{1-\alpha},
\qquad
H_{d,2\alpha}\asymp d^{1-2\alpha}.
\]
The calibration \(c_n=na/H_{d_n,\alpha}\) gives
\[
\frac{S_k}{n}
=
\frac{c_n^2}{n}H_{d_n,2\alpha}
\asymp
a^2\frac{n}{d_n},
\qquad
\frac{\lambda_{k+1}}a
\asymp
\frac{n}{d_n^{1-\alpha}}.
\]
This proves \eqref{eq:multispike-power-law-scales}; bounded head signal and
noise scales give \(L_k\asymp\log(d_n/n)\).

The first inequality in
\eqref{eq:multispike-power-law-dimension} makes \(r_kL_k\to0\).
Since \(k=O(1)\) and \(\alpha<1/2\),
\[
\frac{k w_k^2}{S_k}
\lesssim
\frac{k\lambda_{k+1}^2}{S_k}+\frac{k}{n}
\asymp d_n^{2\alpha-1}+\frac1n
\longrightarrow0.
\]
Because \(d_n=n^q+O(1)\), we also have
\[
L_k\chi_{n,k}
\asymp
(\log n)\sqrt{\frac{\log n}{n}}
\longrightarrow0.
\]
The second gives
\[
\frac{w_k}{b_k^\star}
\lesssim
\frac{n/d_n^{1-\alpha}}{(n/d_n)^{1/4}}
+
\left(\frac nd_n\right)^{1/4}
\longrightarrow0.
\]
Substitution in the theorem proves
\eqref{eq:multispike-power-law-signed} and the square-root endpoint rate.
Indeed, with
\[
u_n:=\frac{n}{d_n^{1-\alpha}},
\qquad
v_n:=\sqrt{\frac n{d_n}},
\]
the theorem gives
\[
R_{\rm sign}(a,t_{\rm fc})
\lesssim
\frac{\sigma_\varepsilon^2k}{n}
+
\Theta_Hu_n^2
+
\{\Theta_H+\sigma_\varepsilon^2(1+L_k^2)\}v_n^2.
\]
The second inequality in
\eqref{eq:multispike-power-law-dimension} gives \(u_n^2=o(v_n)\), while
\(v_nL_k^2\to0\) because \(q>1\).  Thus these two terms are
\(o(\sqrt{n/d_n})\).  The head-noise term is negligible because
\[
\frac{k/n}{\sqrt{n/d_n}}
=O\!\left(n^{(q-3)/2}\right)
\longrightarrow0
\]
by \(q<3\).  The shape and positive-filter comparisons follow from
\eqref{eq:multispike-risk-bounds}.
\end{proof}

\begin{proof}[Proof of Corollary~\ref{cor:journal-multispike-discrete}]
For \(x\in[-Cw_k,C\lambda_+]\), \(\eta_m=t_{\rm fc}/m\), and
\(\eta_m(C\lambda_++Cw_k)\le1/2\), the scalar Euler estimate gives
\[
\left|
\eta_m\sum_{\ell=0}^{m-1}(1-\eta_mx)^\ell
-
\int_0^{t_{\rm fc}}e^{-sx}\,ds
\right|
\lesssim
\frac{t_{\rm fc}^2(|x|+w_k)}{m}e^{Cw_kt_{\rm fc}}.
\]
The displayed lower bound on \(m\), together with
\(\lambda_+/\lambda_-=O(1)\), \(w_k/\lambda_-\to0\), and
\(t_{\rm fc}=L_k/\lambda_-\), gives
\[
\eta_m(\widehat\mu_1+a)
\lesssim
\frac{w_k}{\lambda_-}\frac{L_k}{1+L_k^2}
=o(1)
\]
on the theorem event.  It therefore implies the preceding scalar small-step
condition for all sufficiently large \(n\).
Put \(D_m=h^{\rm disc}_{a,\eta_m,m}(K)-h_{a,t_{\rm fc}}(K)\).
Since the spectrum of \(K-aI_n\) lies in the displayed interval, functional
calculus and the stated lower bound on \(m\) give
\[
\|D_m\|_{\rm op}
\lesssim
\frac{L_k^2}{m\lambda_-}
\lesssim
\frac{w_k}{\lambda_-^2}.
\]
Consequently,
\[
\left\|
\frac1nX_H^\top D_mX_H
\right\|_{\rm op}
\lesssim\frac{w_k}{\lambda_-},
\qquad
\|h^{\rm disc}_{a,\eta_m,m}(K)P_H\|_{\rm op}
\lesssim\lambda_-^{-1}.
\]
The first bound adds at most
\(C\Theta_Hw_k^2/\lambda_-^2\) to the head bias, and the second preserves the
tail-signal leakage and head-noise bounds.  Finally, since both filters are
self-adjoint functions of \(K\),
\[
\{h^{\rm disc}_{a,\eta_m,m}(K)\}^2
\preceq
2h_{a,t_{\rm fc}}(K)^2+2D_m^2.
\]
Therefore the additional tail-noise contribution is at most
\[
\frac{C\sigma_\varepsilon^2}{n}
\|D_m\|_{\rm op}^2\tr(K_{2,T})
\lesssim
V_{T,k}\frac{w_k^2}{\lambda_-^2},
\]
which is below \(V_{T,k}(1+L_k^2)\).  Inserting these estimates into
\eqref{eq:ms-exact-risk} proves the refined signed upper bound.
\end{proof}

\bibliographystyle{plainnat}
\bibliography{adaptive_ridge}

\end{document}